\newcommand{\field}[1]{\ensuremath{\mathbb{#1}}}
\newcommand{\N}{\ensuremath{\field{N}}} % natural numbers
\newcommand{\R}{\ensuremath{\field{R}}} % real numbers
\newcommand{\PR}{\ensuremath{\mathbb{P}}} % probability
\newcommand{\E}{\ensuremath{\mathbb{E}}} % expectation
\newcommand{\defeq}{\ensuremath{\triangleq}}
\newcommand{\Ascr}{\ensuremath{\mathcal A}}
\newcommand{\Escr}{\ensuremath{\mathcal E}}
\newcommand{\Fscr}{\ensuremath{\mathcal F}}
\newcommand{\Gscr}{\ensuremath{\mathcal G}}
\newcommand{\Hscr}{\ensuremath{\mathcal H}}
\newcommand{\Iscr}{\ensuremath{\mathcal I}}
\newcommand{\Lscr}{\ensuremath{\mathcal L}}
\newcommand{\Mscr}{\ensuremath{\mathcal M}}
\newcommand{\Nscr}{\ensuremath{\mathcal N}}
\newcommand{\Pscr}{\ensuremath{\mathcal P}}
\newcommand{\Rscr}{\ensuremath{\mathcal R}}
\newcommand{\Sscr}{\ensuremath{\mathcal S}}
\newcommand{\Uscr}{\ensuremath{\mathcal U}}
\newcommand{\Yscr}{\ensuremath{\mathcal Y}}
\DeclareMathOperator*{\argmin}{\mathrm{argmin}}
\DeclareMathOperator*{\argmax}{\mathrm{argmax}}
\newcommand{\maximize}{\ensuremath{\mathop{\mathrm{maximize}}\limits}}
\declaretheoremstyle[headfont=\sffamily\bfseries,bodyfont=\itshape]{thm-sf}
\declaretheorem[style=thm-sf]{theorem}
\declaretheorem[style=thm-sf]{remark}
\declaretheorem[style=thm-sf]{definition}
\declaretheorem[style=thm-sf]{corollary}
\declaretheorem[style=thm-sf]{lemma}
\declaretheorem[style=thm-sf]{proposition}
\renewcommand{\thmcontinues}[1]{\hyperref[#1]{continued}}
\tikzstyle{every picture} += [>=stealth]
\tikzset{axis/.style={semithick, line join=miter}}
\def\@seccntformat#1{\csname the#1\endcsname.\quad}
\newcommand{\emailhref}[1]{\href{mailto:#1}{\tt #1}} % hyperlinked email address
\newcommand{\hidefastcompile}[1]{\ifthenelse{\boolean{fastcompile}}{}{#1}}
\newcommand{\todo}[1]{{\color{red} \noindent {\sffamily\bfseries TODO:} #1}}
\definecolor{orange}{rgb}{0.85,0.33,0.13} % 217,85,33
\definecolor{green}{rgb}{0.13,0.85,0.33}
\definecolor{purple}{rgb}{0.33,0.13,0.85}
\definecolor{lime}{rgb}{0.65,0.85,0.13}
\definecolor{blue}{rgb}{0.13,0.65,0.85}
\pgfplotsset{colormap={tricolormap}{color=(orange) color=(green) color=(purple)},
  colormap={quadcolormap}{color=(orange) color=(lime) color=(blue) color=(purple)}}
\renewcommand*{\thead}[1]{\bfseries \makecell{#1}}
  \renewcommand{\todo}[1]{}
  \newcommand{\newedit}[1]{#1}
  \newcommand{\deledit}[1]{}
  \newcommand{\newedit}[1]{{\color{green} #1}}
  \newcommand{\deledit}[1]{{\color{orange} \sout{#1}}}
\tikzstyle{rate} += [color=orange,very thick]
\pgfplotsset{compat=newest}
  \title{\textsf{\textbf{Thompson Sampling with Information Relaxation Penalties}}}
  \author{}
  \date{}
  \title{\textsf{\textbf{Thompson Sampling with Information Relaxation Penalties\thanks{
          The authors wish to thank Daniel Russo, Martin Haugh, David Brown, Jim Smith, and
          anonymous reviewers for
          helpful comments.}
}}}
\author{ \\
  Seungki Min \\
  Graduate School of Business \\
  Columbia University \\
  \emailhref{smin20@gsb.columbia.edu}
  \and \\
  Costis Maglaras \\
  Graduate School of Business \\
  Columbia University \\
  \emailhref{c.maglaras@gsb.columbia.edu}  \\
  \and \\
  Ciamac C. Moallemi \\
  Graduate School of Business \\
  Columbia University \\
 \emailhref{ciamac@gsb.columbia.edu} \\
}
\date{% Initial Version: November 2018\\
 Current Revision: March 2021}
\begin{document}

\maketitle
\singlespacing

\begin{abstract}
  We consider a finite-horizon multi-armed bandit (MAB) problem in a Bayesian setting, for which
  we propose an \textit{information relaxation sampling} framework. With this framework, we define
  an intuitive family of control policies that include Thompson sampling (\textsc{TS}) and the
  Bayesian optimal policy as endpoints.  Analogous to \textsc{TS}, which, at each decision epoch
  pulls an arm that is best with respect to the randomly sampled parameters, our algorithms sample
  entire future reward realizations and take the corresponding best action.  However, this is done
  in the presence of ``penalties'' that seek to compensate for the availability of future
  information.

  We develop several novel policies and performance bounds for MAB problems that vary in terms of
  improving performance and increasing computational complexity between the two endpoints.  Our
  policies can be viewed as natural generalizations of \textsc{TS} that simultaneously incorporate
  knowledge of the time horizon and explicitly consider the exploration-exploitation trade-off. We
  prove associated structural results on performance bounds and suboptimality gaps.  Numerical
  experiments suggest that this new class of policies perform well, in particular in settings
  where the finite time horizon introduces significant exploration-exploitation tension into the
  problem.
  Finally, inspired by the finite-horizon Gittins index, we propose an index policy that builds on our framework that particularly outperforms the state-of-the-art algorithms in our numerical experiments.
\end{abstract}

\onehalfspacing

\section{Introduction}
Dating back to the earliest work \citep{Bradt56, Gittins79}, multi-armed bandit (MAB) problems have been considered within a Bayesian framework, in which the unknown parameters are modeled as random variables drawn from a known prior distribution.
In this setting, the problem can be viewed as a Markov decision process (MDP) with a state that is an information state describing the beliefs of unknown parameters that evolve stochastically upon each play of an arm according to Bayes' rule.

Under the objective of expected performance, where the expectation is taken with respect to the prior distribution over unknown parameters, the (Bayesian) optimal policy (\textsc{Opt}) is characterized by Bellman equations immediately following from the MDP formulation.
In the discounted infinite-horizon setting, the celebrated Gittins index \citep{Gittins79} determines an optimal policy, despite the fact that its computation is still challenging.
In the non-discounted finite-horizon setting, which we consider, the problem becomes more difficult \citep{Berry85}, and except for some special cases, the Bellman equations are neither analytically nor numerically tractable, due to the curse of dimensionality.
%In this paper, we focus on the determination of the optimal policy (\textsc{Opt}) as an ideal
%goal that can be tackled by dynamic programming (DP).
In this paper, we focus on the Bayesian setting, and attempt to apply ideas from dynamic programming (DP) to develop tractable policies with good performance.

To this end, we apply the idea of \textit{information relaxation} \citep{Brown10}, a technique that provides a systematic way of obtaining the performance bounds on the optimal policy.
In multi-period stochastic DP problems, admissible policies are required to make decisions based only on previously revealed information.
The idea of information relaxation is to consider non-anticipativity as a constraint imposed on the policy space that can be relaxed, while simultaneously introducing a penalty for this relaxation into the objective, as in the usual Lagrangian relaxations of convex duality theory.
Under such a relaxation, the decision maker (DM) is allowed to access future information and is asked to solve an optimization problem so as to maximize her total reward, in the presence of penalties that punish any violation of the non-anticipativity constraint.
When the penalties satisfy a condition (dual feasibility, formally defined in \S \ref{s-framework}), the expected value of the maximal reward adjusted by the penalties provides an upper bound on the expected performance of the (non-anticipating) optimal policy.

The idea of relaxing the non-anticipativity constraint has been studied in different contexts
\citep{Rockafellar91, Karatzas94, Rogers02, Haugh04}, and was later formulated as a formal
framework by \cite{Brown10}, upon which our methodology is developed.  This framework has been
applied to a variety of applications including optimal stopping problems \citep{Moallemi11};
linear-quadratic and linear-convex control \citep{Desai12,Haugh12}; dynamic portfolio execution
\citep{Haugh14}; and more \citep[e.g.,][]{Brown17,haugh2019information}.  Typically, the
application of this method to a specific class of MDPs requires custom analysis. In particular, it
is not always easy to determine penalty functions that (1) yield a relaxation that is tractable to
solve, and (2) provide tight upper bounds on the performance of the optimal policy.  Moreover, the
established information relaxation theory focuses on upper bounds and provides no guidance on the
development of tractable policies.

Our contribution is to apply the information relaxation techniques to the finite-horizon stochastic MAB problem, explicitly exploiting the structure of a Bayesian learning process.
In particular,
\begin{enumerate}[topsep=0pt,itemsep=-1ex,partopsep=1ex,parsep=1ex]
\item we propose a series of information relaxations and penalties of increasing computational complexity;
\item we systematically obtain the upper bounds on the best achievable expected performance that trade off between tightness and computational complexity;
\item and we develop associated (randomized) policies that generalize Thompson sampling (\textsc{TS}) in the finite-horizon setting.
\end{enumerate}

In our framework, which we call \textit{information relaxation sampling}, each of the penalty functions (and information relaxations) determines one policy and one performance bound given a particular problem instance specified by the time horizon and the prior beliefs. 
As a base case for our algorithms, we have \textsc{TS} \citep{Thompson33} and the conventional regret benchmark that has been used for Bayesian regret analysis since \cite{Lai85}.
At the other extreme, the optimal policy \textsc{Opt} and its expected performance follow from the ``ideal'' penalty (which, not surprisingly, is intractable to compute).
%From \textsc{TS} to \textsc{Opt}, we achieve the improvements in policies and performance bounds by more considerately incorporating the informativeness of future information that is being relaxed.
By picking increasingly strict information penalties, we can improve the policy and the associated bound between the two extremes of \textsc{TS} and \textsc{Opt}.

As an example, one of our algorithms, \textsc{Irs.FH}, is a very simple modification of
\textsc{TS} that naturally incorporates time horizon $T$.  Recalling that \textsc{TS} makes a
decision based on sampled parameters for each arm from the posterior distribution in each epoch,
observe that knowledge of the parameters is essentially (assuming Bayesian consistency) as
informative as having an infinite number of future reward observations from each arm.  \newedit{By}
contrast, \textsc{Irs.FH} makes a decision based on future Bayesian estimates, updated with only
$T-1$ future reward realizations for each arm, where the rewards are sampled based on the inital
posterior belief.  When $T=1$ (equivalently, at the last decision epoch), such a policy takes
a myopically best action based only on the current estimates, which is indeed an optimal decision,
whereas \textsc{TS} would still explore unnecessarily.  While keeping the recursive structure of
the sequential decision-making process of \textsc{TS}, \textsc{Irs.FH} naturally performs less
exploration than \textsc{TS} as the remaining time horizon diminishes.  This mitigates a common
practical criticism of \textsc{TS}: it explores too much.

Beyond this, we propose other algorithms that more explicitly quantify the benefit of exploration
and more explicitly trade off between exploration and exploitation, at the cost of additional
computational complexity.  As we increase the complexity, we achieve policies that improve
performance, and separately provide tighter tractable computational upper bounds on the expected
performance of any policy for a particular problem instance.  By providing natural generalizations
of \textsc{TS}, our work provides both a deeper understanding of \textsc{TS} and improved policies
that do not require tuning.  Since \textsc{TS} has been shown to be asymptotically regret optimal
in some settings\newedit{, e.g., by the metric of growth-rate \citep{Kaufmann12b} or by the metric of worst-case regret \citep{Agrawal13, Bubeck13}}, our improvements can at best be (asymptotically)
constant factor improvements by that metric. On the other hand, \textsc{TS} is extremely popular
in practice, and we demonstrate in numerical examples that the improvements can be significant and
are likely to be of practical interest.

Moreover, we develop upper bounds on performance that are useful in their own right.
Suppose that a decision maker faces a particular problem instance and is considering any particular MAB policy (be it one we suggest or otherwise).
By simulating the policy, \newedit{we can find} a lower bound on the performance of the optimal policy\deledit{ can be found}.
We introduce a series of upper bounds that can also be evaluated in any problem instance via simulation.
Paired with the lower bound, these provide a computational, simulation-based ``confidence interval'' that can be helpful to the decision maker.
For example, if the upper bound and lower bound are close, the suboptimality gap of the policy under consideration is guaranteed to be small, and it is not worth investing in better policies.

%%% Local Variables:
%%% mode: latex
%%% TeX-master: "info-relax-sampling"
%%% End:

\section{Finite-horizon \newedit{Bayesian} Multi-armed Bandit} \label{s-problem}
\subsection{Problem} \label{ss-problem}

\newedit{
We consider a Bayesian MAB problem with $K$ \emph{independent arms} and a \emph{finite time horizon} $T$.
More specifically, we define an MAB instance with a tuple $\big( K, T, \Rscr, \Theta, \Pscr, \Yscr, \mathbf{y} \big)$ as follows.
In each period $t=1,\ldots,T$, the decision maker (DM) selects one among $K$ arms, each of which yields a stochastic reward whenever selected.
We let $\Ascr \defeq \{1, \ldots, K\}$ denote the set of arms, and let $R_{a,n}$ denote the random variable that represents the reward from the $n^\text{th}$ pull\footnote{
	\newedit{One may consider an alternative stochastic model for the reward realization process
          in which the rewards are defined through a time index (e.g., $R_{a,t}$ denotes the reward
          from arm $a$ in period $t$). This would be mathematically equivalent from the
          perspective of the DM. However, once the information set is relaxed, such a model is \emph{not}  equivalent to ours:
	in our model, the DM is not allowed to skip any future reward realizations, and this is crucial for some of the algorithms suggested in this paper.
	See the discussion in \S \ref{ss-irs-vzero}.
}} of arm $a \in \Ascr$.
For each arm $a$, the rewards $\{ R_{a,n} \}_{n \in \N}$ are independent and identically
distributed according to the distribution $\Rscr_a(\theta_a)$, where $\theta_a \in \Theta_a$ is the \emph{parameter} associated with arm $a$:
\begin{equation}
	R_{a,n} \sim \Rscr_a(\theta_a), \quad \forall n \in \N, \quad \forall a \in \Ascr.
\end{equation}
The parameter $\theta_a$ is unknown to the DM, and is modeled as a random variable for which we
have a family of \emph{conjugate priors} $\{ \Pscr_a(y_a) \}_{y_a \in \Yscr_a}$, i.e., a space of
distributions for $\theta_a$ that is closed under a Bayesian update with a reward realization $R_{a,n}$.
Given a \emph{hyperparameter} $y_a \in \Yscr_a$ (also called a \emph{belief}), consider a probability measure $\PR_{y_a}\left[ \cdot \right]$ under which the parameter $\theta_a$ follows the \emph{prior distribution} $\Pscr_a(y_a)$:
\begin{equation}
	\theta_a \sim \Pscr_a(y_a), \quad \forall a \in \Ascr.
\end{equation}
Let $\E_{y_a}\left[ \cdot \right]$ denote the expected value under this probability measure.
For brevity, denote the vector of parameters and hyperparameters across arms by $\bm{\theta} \defeq (\theta_1, \ldots, \theta_K)$ and $\mathbf{y} \defeq (y_1, \ldots, y_K)$, respectively.
Define $\Rscr$, $\Theta$, $\Pscr$, $\Yscr$, $\PR_\mathbf{y}$, and $\E_\mathbf{y}$ analogously.
We will often describe an MAB instance only with a tuple $(T, \mathbf{y})$ when the other components are clear in context.

Throughout the paper, we assume that the rewards are absolutely integrable for each hyperparameter $y_a \in \Yscr_a$:
\begin{equation}
	\E_{y_a}\left[ |R_{a,1}| \right] < \infty, \quad \forall y_a \in \Yscr_a, ~a \in \Ascr,
\end{equation}
where the expectation is taken with respect to the random realization of the parameter $\theta_a$
and also with respect to the random realization of the reward $R_{a,1}$.
}

We further define the \emph{outcome} $\omega \in \Omega$ (also referred to as the future or scenario) as a combination of the parameters and all future reward realizations, i.e.,
\begin{equation} \label{e-outcome}
	\omega \defeq \big( \bm{\theta}, ( R_{a,n} )_{a \in \Ascr, n \in \N} \big)
		~~ \sim ~~ \Iscr(\mathbf{y}),
\end{equation}
that encodes all \newedit{the} uncertainties that the DM encounters \newedit{in the environment and whose distribution is denoted by $\Iscr(\mathbf{y})$.}

\noindent \textbf{Policy.}
Given an outcome $\omega$, the reward at time $t$ can be represented as a function of the DM's action sequence $\mathbf{a}_{1:t} = (a_1, \ldots, a_t) \in \Ascr^t$, i.e.,
\begin{equation} \label{e-count}
	r_t( \mathbf{a}_{1:t}, \omega ) \defeq R_{a_t, n_t( \mathbf{a}_{1:t}, a_t) },
\end{equation}
where $n_t(\mathbf{a}_{1:t}, a) \defeq \sum_{s=1}^t \mathbf{1}\{ a_s = a \}$ counts how many times an arm $a$ has been played up to time $t$ (inclusive).
\newedit{
Consequently, we define the \emph{history} $H_t( \mathbf{a}_{1:t}, \omega )$ as the information revealed to the DM up to time $t$ when taking an action sequence $\mathbf{a}_{1:t}$ given the outcome $\omega$:
\begin{equation}
	H_t( \mathbf{a}_{1:t}, \omega ) \defeq \big( a_1, r_1(a_1, \omega ), a_2, r_2(\mathbf{a}_{1:2}, \omega), \ldots, a_t, r_t( \mathbf{a}_{1:t}, \omega) \big).
\end{equation}
Let $\mathbf{A}_{1:t}^\pi$ be the action sequence taken under the DM's policy $\pi$.
We can define the natural filtration $\mathbb{F} \defeq \big( \Fscr_t \big)_{t =0,1,\ldots,T}$ where $\Fscr_t \defeq \sigma\left( H_t( \mathbf{A}_{1:t}^\pi, \omega ) \right)$ is the $\sigma$-field generated by the history $H_t$.

A policy $\pi$ is called \textit{non-anticipating} if every action $A_t^\pi$ is measurable with respect to $\Fscr_{t-1}$; i.e., each decision is made based only on the information revealed prior to that time.
We denote by $\Pi_\mathbb{F}$ the set of all non-anticipating policies, including randomized ones.
The (Bayesian) \textit{performance} of a policy $\pi$ is measured by the total reward that $\pi$ earns on average, i.e.,
\begin{equation}
	V(\pi, T, \mathbf{y}) \defeq \E_{\mathbf{y}}\left[ \sum_{t=1}^T r_t( \mathbf{A}_{1:t}^\pi, \omega ) \right],
\end{equation}
where $T$ and $\mathbf{y}$ specify, respectively, the length of the time horizon and the prior
hyperparameters of given the MAB instance.
}

\noindent \textbf{Bayesian update.}
\newedit{Whenever the DM observes a reward realization, as a Bayesian learner, she can update her belief associated with the selected arm according to Bayes' rule.
More formally,} we introduce a \textit{Bayesian update function} $\Uscr_a: \Yscr_a \times \R \rightarrow \Yscr_a$ so that after observing a reward $r \in \R$ from an arm $a \in \Ascr$, the \newedit{hyperparameter} associated with arm $a$ is updated from $y_a$ to $\Uscr_a(y_a,r)$ \newedit{(e.g., if $\theta_a \sim \Pscr_a(y_a)$, then $\theta_a | R_{a,1} \sim \Pscr_a(\Uscr_a(y_a,R_{a,1}))$)}.
We will often use $\Uscr:\Yscr \times \Ascr \times \R \rightarrow \Yscr$ to denote the updating of the \newedit{hyperparameter} vector $\mathbf{y}$; i.e., after observing a reward realization $r$ from an arm $a$, the \newedit{hyperparameter} vector is updated from $\mathbf{y}$ to $\Uscr(\mathbf{y},a,r)$, where only the $a^\text{th}$ component is updated.

\newedit{
We further describe the time evolution of the DM's belief throughout the decision making process.
}
Given an outcome $\omega$ and an action sequence $\mathbf{a}_{1:t}$, the posterior hyperparameter vector at time $t$ can be recursively expressed as
\begin{equation} \label{e-belief-trajectory}
	\mathbf{y}_t( \mathbf{a}_{1:t}, \omega; \mathbf{y} ) \defeq \Uscr\left( \mathbf{y}_{t-1}(\mathbf{a}_{1:t-1}, \omega; \mathbf{y}), a_t, r_t(\mathbf{a}_{1:t}, \omega) \right),
	\quad \forall t \geq 1,
\end{equation}
\newedit{
with $\mathbf{y}_0 \defeq \mathbf{y}$.
We often write $[ \mathbf{y}_t( \mathbf{a}_{1:t}, \omega; \mathbf{y}) ]_a$ to denote the $a^\text{th}$ component of $\mathbf{y}_t( \mathbf{a}_{1:t}, \omega; \mathbf{y})$.
This hyperparameter vector $\mathbf{y}_t( \mathbf{a}_{1:t}, \omega; \mathbf{y} )$ sufficiently describes the DM's belief given the history $H_t( \mathbf{a}_{1:t}, \omega)$.}

\noindent \textbf{Mean reward.}
\newedit{
We introduce several notions of mean reward that play a crucial role throughout the paper.
For each arm $a \in \Ascr$, we let $\mu_a(\theta_a)$ denote the \emph{conditional mean reward} given the parameter $\theta_a$, and let $\bar{\mu}_a(y_a)$ be the \emph{predictive mean reward} given the hyperparameter $y_a$:
\begin{equation}
	\mu_a(\theta_a) \defeq \E\left[ R_{a,n} \big| \theta_a \right], \quad
	\bar{\mu}_a(y_a) \defeq \E_{y_a}\left[ \mu_a(\theta_a) \right].
\end{equation}
We further define the \emph{posterior predictive mean reward process} $\left\{ \hat{\mu}_{a,n} \right\}_{n \geq 0}$ by
\begin{equation}
	\hat{\mu}_{a,n}(\omega; y_a) \defeq \E_{y_a}\left[ \mu_a(\theta_a) | R_{a,1},\ldots,R_{a,n} \right],
\end{equation}
which represents the predictive mean reward \newedit{(i.e., the finite-sample Bayesian estimate of $\mu_a(\theta_a)$)} after observing first $n$ rewards associated with the arm $a$.

\begin{remark} \label{rem:mean-reward-martingale}
	Fix an arm $a \in \Ascr$.
	The posterior predictive mean reward process $\{ \hat{\mu}_{a,n} \}_{n \geq 0}$ is a martingale adapted to the filtration generated by the sequence of rewards $(R_{a,1}, R_{a,2}, R_{a,3}, \ldots)$.
	Furthermore, it starts at the value of the prior predictive mean reward $\bar{\mu}_a(y_a)$
        and converges to the conditional mean reward $\mu_a(\theta_a)$; i.e.,
        $\hat{\mu}_{a,0}(\omega; y_a) = \bar{\mu}_a(y_a)$ and $\lim_{n \rightarrow \infty}
        \hat{\mu}_{a,n}(\omega; y_a) = \mu_a(\theta_a)$ almost surely (see
        Proposition~\ref{prop-convergence} in the Appendix).
\end{remark}
}

%\begin{remark}
%	$\hat{\mu}_{a,0}(\omega;y_a) =\bar{\mu}_a(y_a)$, and $\lim_{n \rightarrow \infty} \hat{\mu}_{a,n}(\omega;y_a) = \mu_a(\theta_a)$ almost surely.
%\end{remark}

\newedit{
\subsection{Natural Exponential Family} \label{ss-exponential-family}
We will often consider the case where the reward distribution $\Rscr_a(\theta_a)$ belongs to the \emph{natural exponential family}.
In this case, the closed-form expressions are available for the aforementioned notation.
For any given $\theta_a \in \Theta_a \subseteq \R$, the probability measure for a random reward $R_{a,n}$ is determined by
\begin{equation}
	\PR\left[ R_{a,n} \in dr \, | \, \theta_a \right] = h_a(dr) \exp\left( \theta_a r - A_a(\theta_a) \right),
\end{equation}
where $h_a(dr)$ is the \emph{reference measure} and $A_a(\cdot)$ is the \emph{log-partition function} that is a logarithm of the normalization factor.
We then have a family of conjugate priors $\{ \Pscr_a(y_a) \}_{y_a \in \Yscr_a}$ where $\Yscr_a \defeq \{ y_a = (\xi_a, \nu_a) | \xi_a \in \R, \nu > 0 \}$, so, for any given hyperparameter $y_a \in \Yscr_a$, the corresponding prior $\Pscr_a(y_a)$ is also an exponential family distribution and can be described as
\begin{equation}
	\PR_{(\xi_a, \nu_a)}\left[ \theta_a \in d\theta \right] = f_a( \xi_a, \nu_a ) \exp\left( \xi_a \theta - \nu_a A_a(\theta) \right) d\theta,
\end{equation}
where $f_a(\xi_a, \nu_a)$ is the normalization factor and $\nu_a$ represents the effective number of observations.
Within this family of conjugate priors, it is well known that the posterior distribution can be expressed as
\begin{equation}
	\PR_{(\xi_a, \nu_a)}\left[ \theta_a \in d\theta \, \left| \, R_{a,1}, \ldots, R_{a,n} \right. \right] 
		= \PR_{\left( \xi_a + \sum_{i=1}^n R_{a,i}, ~ \nu_a + n \right)}\left[ \theta_a \in d\theta \right].
\end{equation}
This property can also be expressed via the Bayesian update function as $\Uscr_a( (\xi_a, \nu_a), r ) = (\xi_a + r, \nu_a + 1)$.
We also have the following identities for the mean reward metrics:
\begin{equation}
	\mu_a(\theta_a) = A_a'(\theta_a) %\frac{dA_a}{d\theta_a}(\theta_a)
	, \quad 
	\bar{\mu}_a(\xi_a, \nu_a) = \frac{\xi_a}{\nu_a}
	, \quad
	\hat{\mu}_{a,n}(\omega; \xi_a, \nu_a) = \frac{ \xi_a + \sum_{i=1}^n R_{a,n} }{ \nu_a + n },
\end{equation}
where $A_a' \defeq dA_a/d\theta_a$.
We refer the reader to \citet{Pena12} for further details.
%As stated in Remark \ref{rem:mean-reward-martingale}, $\hat{\mu}_{a,0} = \frac{\xi_a}{\nu_a} = \bar{\mu}_a$ and $\lim_{n \rightarrow \infty} \hat{\mu}_{a,n} = \lim_{n \rightarrow \infty} \frac{1}{n} \sum_{i=1}^n R_{a,n} = \mu_a(\theta_a)$.
}

\noindent \textbf{Bernoulli and Gaussian MABs.}
\newedit{We briefly illustrate the Bernoulli MAB and Gaussian MAB as representative examples of
  the problem instance described by a natural exponential family.}
In the Bernoulli MAB, the rewards of an arm are Bernoulli random variables \newedit{whose success probability is drawn from a Beta distribution}.
In the Gaussian MAB, the rewards of an arm are normally distributed with an unknown mean and a known noise variance where \newedit{the mean is also normally distributed}.
Table \ref{tbl-example} summarizes the previously defined notation.

\begin{table}[H]
  \centering
  \small
\begin{tabular}{ c  c  c  }
 \toprule
 & \thead{Bernoulli MAB} & \thead{Gaussian MAB} \\
 \midrule
Prior distribution	& $\mu_a \sim \text{Beta}(\alpha_a, \beta_a)$ 	& $\mu_a \sim \Nscr(m_a, v_a^2)$ \\
Reward distribution 	& $R_{a,n} \sim \text{Bernoulli}\big( \mu_a \big)$ 		& $R_{a,n} \sim \Nscr(\mu_a, \sigma_a^2)$ \\
 \midrule
 Parameter $\theta_a$		& $\theta_a = \log \frac{\mu_a}{1-\mu_a}$		& $\theta_a = \frac{\mu_a}{\sigma_a^2}$ \\
 Hyperparameters $\xi_a, \nu_a$	& $\xi_a = \alpha_a$, $\nu_a = \alpha_a+\beta_a$	& $\xi_a = \frac{m_a \sigma_a^2}{v_a^2}$, $\nu_a = \frac{\sigma_a^2}{v_a^2}$ \\
 Reference measure $h_a$ 	& $h_a(dr) = \delta_0(dr) + \delta_1(dr)$ 		& $h_a(dr) = \frac{1}{\sqrt{2\pi \sigma_a^2}} \exp\left( - \frac{r^2}{\sigma_a^2} \right)dr$ \\
 Log-partition function $A_a$	& $A_a(\theta_a) = \log\left( 1 + e^{\theta_a} \right)$	& $A_a(\theta_a) = \frac{\sigma_a^2 \theta_a^2}{2}$ \\
 \midrule
Mean reward $\mu_a$ 	& $\mu_a(\theta_a) = \frac{e^{\theta_a}}{1+e^{\theta_a}}$ 	& $\mu_a(\theta_a) = \sigma_a^2 \theta_a$ \\
Predictive mean $\bar{\mu}_a$ 	& $\bar{\mu}_a(\alpha_a, \beta_a) = \frac{\alpha_a}{\alpha_a+\beta_a}$ 	& $\bar{\mu}_a(m_a, v_a^2) = m_a$ \\
%Bayesian update $\Uscr_a$ 
%	& $\Uscr_a\big( (\alpha_a, \beta_a), r \big) = (\alpha_a+r, \beta_a+1-r)$ 	
%	& $\Uscr_a\big( (m_a, v_a^2),r \big) = \left( \frac{ m_a \cdot v_a^{-2} + r \cdot \sigma_a^{-2} }{ v_a^{-2} + \sigma_a^{-2} } , \frac{1}{ v_a^{-2} + \sigma_a^{-2} } \right)$ \\ 
%Bayesian update $\Uscr_a$ 
%	&  \multicolumn{2}{c}{ $\Uscr_a\big( (\xi_a, \nu_a), r \big) = (\xi_a+r, ~\nu_a+1-r)$ }	
%	 \\ 
 \bottomrule
\end{tabular}
\caption{Description of a Bernoulli MAB and a Gaussian MAB.
	Here, $\delta_x(dr)$ denotes a Dirac measure that has a single atom at $x$.
} \label{tbl-example}
\end{table}

\deledit{
In these MAB problems, the belief state of each arm is represented by a two-dimensional vector, $y_a = (\alpha_a, \beta_a)$ and $y_a = (m_a, \nu_a^2)$, respectively, which are the sufficient statistics for Beta distribution and Gaussian distribution.
More generally, when the reward distribution $\Rscr_a$ is a member of an exponential family, its conjugate prior $\Pscr_a$ can be represented by a low-dimensional vector $y_a$ (the sufficient statistics for $\Pscr_a$), and the Bayesian update function $\Uscr_a$ admits a simple closed form.
In the other cases, when the reward distributions do not belong to an exponential family, the belief $y_a$ may be an infinite-dimensional vector that represents the entire probability density of prior/posterior distribution, and there may not be a closed-form expression for $\Uscr_a$.
We note that the theoretical foundations of our framework do not rely on a parsimonious representation of the belief state nor a closed-form expression for the belief update function, which may concern in practice.
}

\subsection{Bayesian Optimal Policy}

In a Bayesian framework, the MAB problem \newedit{can be viewed as a Markov decision process (MDP) in which a state corresponds to an information state (or belief state) of the DM.
It has the following recursive structure that we will exploit throughout the paper.}
Given an MAB instance with time horizon $T$ and prior belief $\mathbf{y}$, suppose that the DM has just earned $r$ by pulling an arm $a$ at time $t=1$.
Then the remaining problem for the DM is equivalent to an MAB instance with time horizon $T-1$ and prior belief $\Uscr(\mathbf{y}, a, r)$.
Based on this Markovian structure, we obtain the following Bellman equations for the MAB problem: \newedit{for all $T \in \N$ and $\mathbf{y} \in \Yscr$,
\begin{align} \label{e-bellman}
	Q^*(T, \mathbf{y}, a) &\defeq \E_\mathbf{y}\left[  R_{a,1} + V^*( T-1, \Uscr( \mathbf{y}, a, R_{a,1} ) ) \right],
	\\
	V^*(T, \mathbf{y}) &\defeq \max_{a \in \Ascr} Q^*(T-1, \mathbf{y}, a),
\end{align}
with $V^*(0, \mathbf{y}) \defeq 0$ for all $\mathbf{y} \in \Yscr$.
The value function $V^*(T, \mathbf{y})$ represents the best possible performance that a non-anticipating policy can achieve in the MAB problem specified by the time horizon $T$ and the prior belief $\mathbf{y}$, or equivalently, \newedit{the maximum expected future reward that one can earn during $T$ remaining periods\footnote{\newedit{
We intentionally refrain from indexing the value function $V^*$ by time $t$, since such a representation conceals the Markovian structure of the Bayesian MAB problem and leads to complicated expressions for the variables that exploit this Markovian structure.
To avoid confusion, the horizon $T$ will be written as an argument to functions whereas the time index $t$ will be written as a subscript, throughout the paper.
}} when the current belief is $\mathbf{y}$}.
}

While Bellman equations are, in general, intractable to solve and directly apply, they offer a characterization of the \textit{Bayesian optimal policy}
%\footnote{In the frequentist setting, the optimal policy may not be well-defined since there is no performance measure consistent across different sets of parameters.}
(\textsc{Opt})\deledit{ and the best achievable performance $V^*$}.
At a certain moment, when the remaining time horizon is $T$ and the belief is $\mathbf{y}$, \textsc{Opt} takes an action with the largest state-action value (Q-value), i.e., pulls the arm $\newedit{A^*}=\argmax_a Q^*(T, \mathbf{y}, a)$, and this action selection procedure is repeated while updating $T$ and $\mathbf{y}$ according to Bayes' rule as described in Algorithm \ref{alg-opt}.
Such a policy achieves the best possible performance among all non-anticipating policies:
\begin{equation}
	V^*(T, \mathbf{y}) = \sup_{\pi \in \Pi_\mathbb{F}} V( \pi, T, \mathbf{y} ) = V( \textsc{Opt}, T, \mathbf{y} )
	, \quad 
	\newedit{\forall T \in \mathbb{N}, \mathbf{y} \in \Yscr.}
\end{equation}

\begin{algorithm2e}[H] \label{alg-opt}
  \SetAlgoLined\DontPrintSemicolon
  \SetKwFunction{algo}{OPT}
  \SetKwProg{myalg}{Function}{}{}
  \myalg{\algo{$T, \mathbf{y}$}}{
  \newedit{\tcp{$T$:remaining time horizon, $\mathbf{y}$:current belief}}
  \nl \KwRet $\argmax_a Q^*(T, \mathbf{y}, a)$ \;
  }{}
  \vspace{0.3cm}
  \setcounter{AlgoLine}{0}
  \SetKwFunction{proc}{OPT-Outer}
  \SetKwProg{myproc}{Procedure}{}{}
  \myproc{\proc{$T, \mathbf{y}$}}{
  	\newedit{\tcp{$T$:time horizon, $\mathbf{y}$:prior belief}}
  	\nl $\mathbf{y}_0 \gets \mathbf{y}$ \;
	\nl \For{$t = 1, 2, \ldots, T$}{
	\nl	Select $A_t \gets $ \algo{$T-t+1, \mathbf{y}_{t-1}$}\;
	\nl	Earn and observe a reward $r_t$ and update belief $\mathbf{y}_t \gets \Uscr( \mathbf{y}_{t-1}, A_t, r_t )$
	}
  }
  \caption{Bayesian optimal policy (\textsc{Opt})}
\end{algorithm2e}

\subsection{Thompson Sampling} \label{ss-ts}

Thompson sampling (\textsc{TS}) is a simple heuristic that makes decisions based on random sampling.
When the remaining time is $T$ and the current belief is $\mathbf{y}$, it samples the parameters $\tilde{\bm{\theta}}$ from the prior\footnote{
	Conventionally, the term ``posterior distribution'' is used to describe \newedit{the distribution that} \textsc{TS} samples the parameters from.
	We explicitly use ``prior distribution'' instead: for example, at time $t=1$, the parameters are apparently sampled from the prior, not the posterior, distribution.
	After observing a reward realization, we will have a posterior but it \newedit{will become} a prior at the next decision epoch.
} distribution at that moment, $\Pscr(\mathbf{y})$, and pulls the arm that is believed to be best given the sampled parameters $\tilde{\bm{\theta}}$\newedit{, i.e., takes action} $\newedit{A^\textsc{TS}} = \argmax_a \mu_a(\tilde{\theta}_a)$.
Like \textsc{Opt}, it repeats this procedure at every decision epoch while updating the belief $\mathbf{y}$ whenever a reward realization is observed.

\begin{algorithm2e}[H] \label{alg-ts}
  \SetAlgoLined\DontPrintSemicolon
  \SetKwFunction{algo}{TS}
  \SetKwProg{myalg}{Function}{}{}
  \myalg{\algo{T, $\mathbf{y}$}}{
  \newedit{\tcp{$T$:remaining time horizon, $\mathbf{y}$:current belief}}
  \nl Sample parameters $\tilde{\bm{\theta}} \sim \Pscr(\mathbf{y})$ \;
  \nl \KwRet $\argmax_a\{ \mu_a(\tilde{\theta}_a) \}$ \;
  }{}
  \caption{\newedit{Arm selection rule of Thompson sampling when remaining time is $T$ and current belief is $\mathbf{y}$}}
\end{algorithm2e}

Note that \textsc{TS} does not take into account \newedit{the time information} when making a decision.
It applies the identical sampling and selection rule, irrespective of the remaining time periods.
This often leads to the unnecessary explorations near the end of the horizon, which motivates our framework.

%%% Local Variables:
%%% mode: latex
%%% TeX-master: "info-relax-sampling"
%%% End:

\section{Information Relaxation Sampling} \label{s-framework}
\newedit{
We apply the information relaxation framework \citep{Brown10} to the Bayesian MAB problem and propose a general framework which we call \textit{information relaxation sampling} (IRS).
The main idea behind the information relaxation is to relax the information constraint so that the decision maker (DM) is allowed to exploit some future information that is supposed to be unknown.
As in the usual Lagrangian relaxation, an upper bound on the best possible performance can be obtained by solving the relaxed problem.

To motivate in detail, let us consider a situation under which the parameters $\bm{\theta}$ are revealed to the DM when the remaining period is $T$ and the current belief is $\mathbf{y}$.
The optimal action for this DM is to keep playing the arm with the highest mean reward, i.e.,
$\argmax_a \mu_a(\theta_a)$, and by doing so will earn $\E_\mathbf{y}[ T \times \max_a \mu_a(\theta_a) ]$ on average, which is indeed an upper bound on the performance of the optimal policy, $V^*(T, \mathbf{y})$.

Let us now postulate a situation under which the same kind of DM is informed with sampled parameters $\tilde{\bm{\theta}}$ that are drawn from the distribution $\Pscr(\mathbf{y})$.
For this (falsely informed) DM, the optimal action is again to play the arm with the highest mean reward but now with respect to the sampled parameters, i.e., $\argmax_a \mu_a(\tilde{\theta}_a)$.
This procedure effectively describes the arm selection rule of Thompson sampling in the situation specified by the remaining horizon $T$ and the current belief $\mathbf{y}$.

Above, we motivated a performance bound, $\E_\mathbf{y}[ T \times \max_a \mu_a(\theta_a) ]$, and a non-anticipating policy, \textsc{TS}, from the relaxation of the parameter information.
Analogously, we can produce another performance bound and another policy by considering a different set of future information to relax: the performance bound is obtained by computing how much the clairvoyant DM can earn with this additional information; and the policy is obtained by speculating which action the same kind of DM will take if the additional information is replaced with sampled (simulated) instance.

We will particularly consider the relaxations of information that are less effective than the full
parameter information for the DM to maximize her future payoff. This will result in tighter
relaxations, in the sense of a better (tighter) performance upper bound as well as a better
performing policy.
% As we are more restrictive for the DM to take an advantage from the relaxed information, it will
% reduce the anticipated earning of this clairvoyant DM, thereby resulting in a tighter
% performance bound.  It will also make her decision less relying on the relaxed information but
% more relying on the currently available information, thereby resulting in a policy that is
% closer to the Bayesian optimal policy.

In what follows, we formalize this idea utilizing the notion of \emph{information relaxation penalties} that allows us to describe and control the benefit from having additional information explicitly.
We will first describe the general framework and then propose a specific family of penalties that are particularly suitable for Bayesian MAB problems.
}

\deledit{
We propose a general framework, which we refer to as \textit{information relaxation sampling} (IRS), that takes as an input a ``penalty function'' $z_t(\cdot)$, and produces as outputs a policy $\pi^z$ and an associated performance bound $W^z$.
}

\noindent \textbf{Information relaxation penalties and the inner problem.}
Applying the information relaxation framework developed by \citet{Brown10}, we relax the non-anticipativity constraint imposed on policy space $\Pi_\mathbb{F}$ (i.e., \newedit{$A_t^\pi$} is $\Fscr_{t-1}$-measurable).
\newedit{Without loss of generality,\footnote{\newedit{
        Any partial information relaxation can be equivalently described within the perfect information relaxation by adding additional terms into the penalty function.
        See the discussion after Theorem \ref{thm-strong-duality}.
}} we consider the perfect information relaxation under which the DM is} allowed to first observe all future outcomes in advance, and then pick an action (i.e., \newedit{$A_t^\pi$} is $\sigma(\omega)$-measurable).
%To compensate for this perfect information relaxation,
\newedit{As in any other Lagrangian relaxation,} we impose penalties on the DM for violating the non-anticipativity constraint.

We introduce a \textit{penalty function} $z_t(\mathbf{a}_{1:t}, \omega; T, \mathbf{y})$ to denote the penalty that the DM incurs at time $t$, when taking an action sequence $\mathbf{a}_{1:t}$ given \newedit{an outcome $\omega$ for an MAB problem with time horizon $T$ and prior belief $\mathbf{y}$}.
The clairvoyant DM can find the best action sequence that is optimal for \newedit{this} particular outcome $\omega$ in the presence of penalties $z_t$, by solving the following (deterministic) optimization problem, referred to as the \textit{inner problem}:
\begin{equation} \label{e-inner-problem}
        \maximize_{\mathbf{a}_{1:T} \in \Ascr^T} \quad \sum_{t=1}^T r_t( \mathbf{a}_{1:t}, \omega ) - z_t( \mathbf{a}_{1:t}, \omega; T, \mathbf{y} ).
        \tag{$*$}
\end{equation}

\begin{definition}[Dual feasibility] \label{defn-penalty-condition}
Given $T$ and $\mathbf{y}$, a penalty function $z_t$ is dual feasible if it is \newedit{a zero mean for any non-anticipating policy $\pi \in \Pi_\mathbb{F}$, i.e.,
\begin{equation} \label{e-penalty-condition}
         \E_\mathbf{y}\left[ \sum_{t=1}^T z_t( \mathbf{A}_{1:t}^\pi, \omega; T, \mathbf{y} ) \right] = 0
               , \quad \forall \pi \in \Pi_\mathbb{F}.
\end{equation}
}
\end{definition}
We remark that the mapping $\mathbf{a}_{1:t} \mapsto z_t(\mathbf{a}_{1:t}, \omega)$ is a stochastic function of the action sequence $\mathbf{a}_{1:t}$ since the outcome $\omega$ is random.
This dual feasibility condition requires that the DM who makes decisions on the natural filtration will receive zero penalties in expectation.

\newedit{
The complexity of the inner problem depends very much on the penalty function.
Assuming that the penalty function can be evaluated in $O(1)$ computation, an enumerative brute-force optimization of the inner problem may require $O(K^T)$ computations.
In what follows, we will illustrate that for suitably designed penalty functions, the inner problem exhibits a recursive structure and thus can be solved effectively using dynamic programming techniques.}
\\

\noindent \textbf{IRS performance bound.}
We let $W^z(T, \mathbf{y})$ be the expected maximal value of the inner problem \eqref{e-inner-problem}, when the outcome $\omega$ is randomly drawn from its prior distribution $\Iscr(\mathbf{y})$, i.e., the expected total payoff that a clairvoyant DM can achieve in the presence of penalties.:
\begin{equation} \label{e-bound}
        W^z( T, \mathbf{y} ) \defeq \newedit{\E_{\mathbf{y}}}\left[ \max_{\mathbf{a}_{1:T} \in \Ascr^T} \left\{  \sum_{t=1}^T r_t( \mathbf{a}_{1:t}, \omega ) - z_t( \mathbf{a}_{1:t}, \omega; T, \mathbf{y} ) \right\} \right].
\end{equation}
\newedit{Once we have an algorithm to solve the inner problem, this value can be computed numerically via simulation:}
let $\omega_1, \omega_2, \ldots, \omega_S$ be the samples independently drawn from $\Iscr(\mathbf{y})$, and $W_s$ be the the maximal value of the inner problem with respect to $\omega_s$ for each $s=1,\ldots,S$ separately.
The bound $W^z$ can be computed by taking the average of these maximal values, i.e., $\frac{1}{S} \sum_{s=1}^S W_s$.
The following theorem shows that $W^z$ is indeed a valid performance bound of the stochastic MAB problem.

\begin{theorem}[Weak duality and strong duality] \label{thm-weak-duality} \label{thm-strong-duality}
If the penalty function $z_t$ is dual feasible, $W^z$ is an upper bound on the optimal value $V^*$:
\begin{equation} \label{e-weak-duality}
        \text{(Weak duality)} \quad \quad
        W^z(T, \mathbf{y}) \geq V^*(T, \mathbf{y}).
        \quad \quad \quad \quad
\end{equation}
There exists a dual feasible penalty function denoted by $z_t^\textup{ideal}$, such that
\begin{equation} \label{e-strong-duality}
        \text{(Strong duality)} \quad \quad
        W^\textup{ideal}(T, \mathbf{y}) = V^*(T, \mathbf{y}).
        \quad \quad \quad \quad
\end{equation}
The ideal penalty function $z_t^\textup{ideal}$ has the following functional form:%\footnote{
%        Throughout the paper, we will often omit $T$ and $\mathbf{y}$ from the expressions for their brief representation.
%        In \eqref{e-ideal-penalty}, $z_t^\textup{ideal}( \mathbf{a}_{1:t}, \omega )$, $H_{t-1}( \mathbf{a}_{1:t-1}, \omega )$, and $\mathbf{y}_t( \mathbf{a}_{1:t}, \omega )$ still have a dependency on $T$ and $\mathbf{y}$.
%}
\begin{align}\label{e-ideal-penalty}
        z_t^\textup{ideal}( \mathbf{a}_{1:t}, \omega; T, \mathbf{y} ) &\defeq r_t( \mathbf{a}_{1:t}, \omega ) -  \E_\mathbf{y}\left[ r_t( \mathbf{a}_{1:t}, \omega ) \left| \newedit{H_{t-1}( \mathbf{a}_{1:t-1}, \omega )} \right. \right]
                \\ &  \quad + V^*\left( T-t,  \mathbf{y}_t( \mathbf{a}_{1:t}, \omega; \mathbf{y} ) \right) - \E_\mathbf{y}\left[ \left. V^*\left( T-t, \mathbf{y}_t( \mathbf{a}_{1:t}, \omega; \mathbf{y} ) \right)   \right| \newedit{H_{t-1}( \mathbf{a}_{1:t-1}, \omega )} \right]. \nonumber
\end{align}
\end{theorem}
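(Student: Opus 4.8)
The plan is to prove the two claims separately: weak duality is a direct Lagrangian-style argument, while strong duality combines the Bellman recursion \eqref{e-bellman} with a telescoping identity.

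For weak duality, I would fix an arbitrary non-anticipating policy $\pi \in \Pi_\mathbb{F}$ and show its performance is dominated by the inner-problem value. The first step is to observe that, since each action $a_t^\pi$ is $\Fscr_{t-1}$-measurable, the dual feasibility condition \eqref{e-penalty-condition} together with the tower property gives $\E[\sum_{t=1}^T z_t(\mathbf{a}_{1:t}^\pi; \omega, T, \mathbf{y})] = 0$, so the penalties can be inserted for free: $V(\pi, T, \mathbf{y}) = \E[\sum_{t} r_t(\mathbf{a}_{1:t}^\pi; \omega) - z_t(\mathbf{a}_{1:t}^\pi; \omega, T, \mathbf{y})]$. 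The second step is a pointwise comparison: for each realized $\omega$ the sequence $\mathbf{a}_{1:T}^\pi$ is one feasible choice in the inner problem \eqref{e-inner-problem}, so its penalized payoff is at most the maximal penalized payoff. Taking expectations yields $V(\pi, T, \mathbf{y}) \le W^z(T, \mathbf{y})$, and taking the supremum over $\Pi_\mathbb{F}$ (which equals $V^*$, as noted for \textsc{Opt}) gives \eqref{e-weak-duality}.

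For strong duality, I would first verify that $z_t^\textup{ideal}$ is dual feasible: each of the two differences in \eqref{e-ideal-penalty} has the form (random variable) minus (its $\Fscr_{t-1}$-conditional mean), hence is centered by construction and satisfies \eqref{e-penalty-condition}; weak duality then already gives $W^\textup{ideal} \ge V^*$. The crux is the matching upper bound. Writing $\mathbf{y}_t = \mathbf{y}_t(\mathbf{a}_{1:t}; \omega)$ for the belief after $t$ pulls (with $\mathbf{y}_0 = \mathbf{y}$), I would substitute \eqref{e-ideal-penalty} into the inner objective, cancel the two copies of $r_t$, and be left with $\sum_{t=1}^T \big( \E[r_t \mid \Fscr_{t-1}] + \E[V^*(T-t, \mathbf{y}_t) \mid \Fscr_{t-1}] - V^*(T-t, \mathbf{y}_t) \big)$. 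The key step is to recognize, via \eqref{e-bellman}, that $\E[r_t \mid \Fscr_{t-1}] + \E[V^*(T-t, \mathbf{y}_t) \mid \Fscr_{t-1}] = Q^*(T-t+1, \mathbf{y}_{t-1}, a_t)$, since conditioning on $\Fscr_{t-1}$ fixes $\mathbf{y}_{t-1}$ and $a_t$ and averages only over the single next reward. Bounding $Q^*(T-t+1, \mathbf{y}_{t-1}, a_t) \le V^*(T-t+1, \mathbf{y}_{t-1})$, with equality at the greedy action, the objective is at most $\sum_{t=1}^T \big( V^*(T-t+1, \mathbf{y}_{t-1}) - V^*(T-t, \mathbf{y}_t) \big)$, which telescopes — because $V^*(T-t+1, \mathbf{y}_{t-1}) = V^*(T-(t-1), \mathbf{y}_{t-1})$ — to $V^*(T, \mathbf{y}) - V^*(0, \mathbf{y}_T) = V^*(T, \mathbf{y})$. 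Since this holds pointwise in $\omega$ and the greedy sequence attains it, the inner maximum equals the constant $V^*(T, \mathbf{y})$ for every $\omega$, and taking expectations gives \eqref{e-strong-duality}.

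The main obstacle I anticipate is the bookkeeping in the strong-duality step: correctly identifying $\E[\cdot \mid \Fscr_{t-1}]$ as an average over only the single next reward, so that it matches the Bellman operator at remaining horizon $T-t+1$, and tracking that $\mathbf{y}_{t-1}$ inside $Q^*$ is $\Fscr_{t-1}$-measurable while $\mathbf{y}_t$ is not. Once the $Q^*$ identification and the horizon indexing $V^*(T-t+1, \mathbf{y}_{t-1}) = V^*(T-(t-1), \mathbf{y}_{t-1})$ are pinned down, the telescoping collapse and the attainment of the bound by the greedy sequence are immediate.
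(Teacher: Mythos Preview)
Your proposal is correct and follows essentially the same approach as the paper. For weak duality both you and the paper insert the zero-mean penalties into $V(\pi,T,\mathbf{y})$ and then enlarge the feasible set; for strong duality the paper phrases the argument as a backward induction showing the inner-problem $Q$-values coincide with $Q^*$, while you obtain the same identification $r_t - z_t^{\textup{ideal}} = Q^*(T{-}t{+}1,\mathbf{y}_{t-1},a_t) - V^*(T{-}t,\mathbf{y}_t)$ directly and then telescope, but these are two presentations of the same computation.
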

Recall that a dual feasible penalty function does not penalize (in expectation) non-anticipating policies, which include \textsc{Opt}.
Even when the future information is available, the DM can earn $V^*$ under the penalties by implementing \textsc{Opt} without taking advantage of future information.
When the DM makes use of future information, she can always outperform \textsc{Opt}, which leads to the weak duality result.
The ideal penalty $z_t^\text{ideal}$ precisely penalizes for the additional profit extracted from using the future information, thereby removing any incentive to deviate from \textsc{Opt} and resulting in the strong duality.

The ideal penalty is, of course, intractable, but its structure highlights what a good penalty may look like.
It implies that there are two sources of additional profit: in DP terminology, one from knowing future immediate rewards and one from knowing future state transitions, each of which will be taken into account later in this paper.

As another implication, it also shows that relaxing more the available information can always be compensated by adding associated terms to the penalty function.
That is, a partial information relaxation (e.g., \newedit{$A_t^\pi$} is measurable w.r.t. $\Gscr_{t-1}$ such that $\newedit{\sigma(H_{t-1})} \subseteq \Gscr_{t-1} \subseteq \sigma(\omega)$) with some penalty function $z_t^\mathbb{G}$ is equivalent to the perfect information relaxation (i.e., \newedit{$A_t^\pi$} is measurable w.r.t. $\sigma(\omega)$) with a penalty function $z_t^\mathbb{G} + z_t^{\sigma(\omega) \setminus \mathbb{G}}$ if the additional term $z_t^{\sigma(\omega) \setminus \mathbb{G}}$ exactly penalizes the relative benefit from having more information $\sigma(\omega)$ than $\Gscr_{t-1}$.
Hence, it is sufficient to consider the perfect information relaxation, as we do in this paper, and the actual amount of information available for the DM can be equivalently controlled by adjusting the penalty function.

Before proceeding, we remark that the above results are already well established in \citet{Brown10} (see Lemma 2.1 and Theorem 2.3 therein) for a general class of MDP problems, except for a subtle difference regarding the assumption on the predictability of reward realizations.
In MDP problems, the reward at each state is typically assumed to be deterministic (otherwise, it is replaced with its expected value), since the stochastic evolution of the state is of a major concern.
By contrast, in MAB problems it is essential to consider the randomness of rewards since learning from the noisy reward realizations is of a major concern, and therefore, we do not assume that $r_t$ is measurable with respect to $\newedit{\sigma(H_{t-1})}$.
As a consequence, our ideal penalty function \eqref{e-ideal-penalty} has a slightly different functional form than the one formulated in \citet{Brown10}.\footnote{
        \citet{Brown10} show that $z_t^\text{ideal} = V^*\left( T-t, \mathbf{y}_t \right) - \E\left[ \left. V^*\left( T-t, \mathbf{y}_t \right)   \right| H_{t-1} \right]$, when $r_t$ is assumed to be measurable with respect to $\sigma(H_{t-1})$ and so $r_t -  \E\left[ r_t \left| H_{t-1} \right. \right] = 0$.
}
We further exploit this fact when designing a variety of penalty functions.
\\

\noindent \textbf{IRS policy.}
\newedit{
Since the true outcome $\omega$ is not available in reality, it cannot be used in online decision making.
We derive a non-anticipating policy by leveraging the idea of ``posterior sampling,'' which utilizes the sampled outcome $\tilde{\omega}$ instead of the true outcome $\omega$.
%The information relaxation does not immediately lead to a feasible policy since the true outcome $\omega$ is not available to the DM and thus cannot be used in online decision making.
%A typical approach in the literature would be to utilize the performance bound as an approximate value function \citep{Moallemi11}.
%We consider an alternative approach that leverages the idea of ``posterior sampling'' in order to obtain a non-anticipating policy.
}

Given a penalty function $z_t$, we characterize a randomized and non-anticipating IRS policy $\pi^z$ as follows.
\newedit{Exploiting the recursive structure of a Bayesian MAB problem,} the policy $\pi^z$ specifies ``which arm to pull when the remaining time is $T$ and the current belief is $\mathbf{y}$,'' \newedit{i.e., the very first action that it would take in an MAB instance with horizon $T$ and prior belief $\mathbf{y}$}.
Given $T$ and $\mathbf{y}$, it (i) first \newedit{randomly generates} an outcome $\tilde{\omega}$ \newedit{(i.e., sampling from $\Iscr(\mathbf{y})$)}, (ii) solves the inner problem to find a best action sequence $\tilde{\mathbf{a}}_{1:T}^*$ with respect to this randomly generated outcome $\tilde{\omega}$ in the presence of penalties $z_t$, and (iii) takes the first action $\tilde{a}_1^*$ that the clairvoyant optimal solution $\tilde{\mathbf{a}}_{1:T}^*$ suggests.
Analogous to \textsc{TS} and \textsc{Opt}, \textbf{it repeats steps (i)--(iii) at every decision epoch}, while updating the remaining time $T$ and belief $\mathbf{y}$ upon each decision making and reward realization.

\begin{algorithm2e}[H] \label{alg-irs}
  \SetAlgoLined\DontPrintSemicolon
  \SetKwFunction{algo}{IRS}
  \SetKwProg{myalg}{Function}{}{}
  \myalg{\algo{$T, \mathbf{y}; z$}}{
  \newedit{\tcp{$T$:remaining time horizon, $\mathbf{y}$:current belief}}
  \nl Sample an outcome $\tilde{\omega} \sim \Iscr(\mathbf{y})$: Equivalently, for each $a \in \Ascr$,
  $$ \tilde{\theta}_a \sim \Pscr_a(y_a), \quad \tilde{R}_{a,n} \sim \Rscr_a(\tilde{\theta}_a), \quad \forall n \in \{1, \ldots, T\}. $$
  %: Equivalently, for each $a \in \Ascr$,
 % $ \tilde{\theta}_a \sim \Pscr_a(y_{t-1,a}), \quad
  %     \tilde{R}_{a,n} = R_{a,n}, \forall n \leq n_{t-1}( \mathbf{A}^z_{1:t-1}, a ), \quad
 %      \tilde{R}_{a,n} \sim \Rscr_a( \tilde{\theta}_a ), \forall n > n_{t-1}( \mathbf{A}^z_{1:t-1}, a ) $ \;
  \nl Find the best action sequence with respect to the sampled outcome $\tilde{\omega}$ under penalties $z_t$:
             $$ \tilde{\mathbf{a}}_{1:T}^* \gets \argmax_{\mathbf{a}_{1:T} \in \Ascr^T} \left\{ \sum_{s=1}^T r_s( \mathbf{a}_{1:t}, \tilde{\omega}) - z_s( \mathbf{a}_{1:s}, \tilde{\omega}; T, \mathbf{y} ) \right\}. $$
  \nl \KwRet $\tilde{a}_1^*$ \;
  }{}
  \vspace{0.3cm}
  \setcounter{AlgoLine}{0}
  \SetKwFunction{proc}{IRS-Outer}
  \SetKwProg{myproc}{Procedure}{}{}
  \myproc{\proc{$T, \mathbf{y}; z$}}{
        \newedit{ \tcp{$T$:time horizon, $\mathbf{y}$:prior belief}}
        \nl $\mathbf{y}_0 \gets \mathbf{y}$ \;
        \nl \For{$t = 1, 2, \ldots, T$}{
        \nl     Pull $A_t \gets $ \algo{$T-t+1, \mathbf{y}_{t-1}; z$}\;
        \nl     Earn and observe a reward $r_t$ and update belief $\mathbf{y}_t \gets \Uscr(\mathbf{y}_{t-1}, A_t, r_t )$
        }
  }
  \caption{Information relaxation sampling (IRS) policy}
\end{algorithm2e}

\newedit{
In step (i), the random generation of the outcome $\tilde{\omega}$ given the belief $\mathbf{y}$ is equivalent to, for each arm $a \in \Ascr$, sampling the parameter from its posterior, $\tilde{\theta}_a \sim \Pscr_a(y_a)$, and then sampling the future reward realizations, $\tilde{R}_{a,n} \sim \Rscr_a( \tilde{\theta}_a )$ for $n = 1, \ldots, T$.
In other words,} the IRS policy $\pi^z$ randomly generates (simulates) a plausible future scenario \newedit{within its own probability space specified by $T$ and $\mathbf{y}$.}

\newedit{The optimization problem in the step (ii) is identical to the inner problem
  \eqref{e-inner-problem} except that the true outcome $\omega$ is replaced with the sampled one
  $\tilde{\omega}$.  Therefore, the dynamic programming algorithm that solves the inner problem
  can also be utilized for this online decision-making process, not only for the computation of
  performance bound $W^z$. Note that there can be multiple solutions to this
  optimization problem and the tie-breaking rule may affect the performance of the policy.  We do
  not observe that the choice of tie-breaking rule is significance in our numerical
  experiments. In some instances that follow, however, we will adopt a specific tie-breaking rule
  for the purpose of theoretical analysis.}

Also note that \newedit{in step (iii)} only the first action $\tilde{a}_1^*$ of the optimal solution $\tilde{\mathbf{a}}_{1:T}^*$ is utilized, and at the following decision epoch a new outcome is sampled based on the updated belief.
If we consider an MAB instance with time horizon $T$, the policy $\pi^z$ solves $T$ different instances of the inner problem throughout the entire decision\newedit{-making} process, \newedit{with a decreasing length of time horizon}, from $T$ to 1, \newedit{and with a stochastically evolving belief state}.
\newedit{See the \textsc{Irs-Outer} procedure in Algorithm \ref{alg-irs}, which is in fact
  identical to that employed in \textsc{Opt} and \textsc{TS}.}

\begin{remark} \label{rem-ideal-penalty-optimality}
The ideal penalty yields the Bayesian optimal policy, i.e., $\pi^\textup{ideal} = \textsc{Opt}$.
\end{remark}

\newedit{Recall that the ideal penalty \eqref{e-ideal-penalty} yields the performance bound $W^\textup{ideal}$ that is equal to the best achievable performance $V^*$, because the DM under the ideal penalty has no incentive to utilize any future information.
For the same reason, the corresponding IRS policy $\pi^\textup{ideal}$ does not utilize the (randomly generated) future information in its decision making, and tries to make the best decision based only on the information revealed so far.
Therefore, its decision should always coincide with the Bayesian optimal policy's decision.}
\\

\noindent \textbf{Choice of penalty functions.}
\newedit{
We have so far described the general framework that takes a penalty function $z_t$ as input, and yields a performance bound $W^z$ and a policy $\pi^z$ as outputs.
While any dual feasible penalty functions can be utilized in general, we propose the following set of penalty functions that are particularly suitable for the MAB problems:}

\begin{align}
        \label{e-penalty-ts}
        z_t^\textsc{TS}( \mathbf{a}_{1:t}, \omega ) &\defeq r_t( \mathbf{a}_{1:t}, \omega ) - \E\left[ r_t( \mathbf{a}_{1:t}, \omega ) \left| \bm{\theta} \right. \right],
        \\
        \label{e-penalty-irs-fh}
        z_t^\textsc{Irs.FH}( \mathbf{a}_{1:t}, \omega ) &\defeq r_t( \mathbf{a}_{1:t}, \omega ) - \newedit{\E_\mathbf{y}}\left[ r_t( \mathbf{a}_{1:t}, \omega ) \left| \newedit{\hat{\bm{\mu}}_{T-1}(\omega)} \right. \right],
        \\
        \label{e-penalty-irs-vzero}
        z_t^\textsc{Irs.V-Zero}( \mathbf{a}_{1:t}, \omega ) &\defeq r_t( \mathbf{a}_{1:t}, \omega) - \newedit{\E_\mathbf{y}}\left[ r_t( \mathbf{a}_{1:t}, \omega ) \left| \newedit{H_{t-1}( \mathbf{a}_{1:t-1}, \omega )} \right. \right],
        \\
        \label{e-penalty-irs-vemax}
        z_t^\textsc{Irs.V-EMax}( \mathbf{a}_{1:t}, \omega ) &\defeq r_t( \mathbf{a}_{1:t}, \omega )  -  \newedit{\E_\mathbf{y}}\left[ r_t( \mathbf{a}_{1:t}, \omega ) \left| \newedit{H_{t-1}( \mathbf{a}_{1:t-1}, \omega )} \right. \right]
                \\ &  \quad + W^\textsc{TS}\left( T-t, \mathbf{y}_t( \mathbf{a}_{1:t}, \omega ) \right) - \newedit{\E_\mathbf{y}}\left[ \left. W^\textsc{TS}\left( T-t, \mathbf{y}_t( \mathbf{a}_{1:t}, \omega ) \right) \right| \newedit{H_{t-1}( \mathbf{a}_{1:t-1}, \omega )} \right], \nonumber
\end{align}
\newedit{where $\hat{\bm{\mu}}_{T-1}(\omega; \mathbf{y}) \defeq \big( \hat{\mu}_{a,T-1}(\omega; y_a)  \big)_{a \in \Ascr}$ and the dependency of some expressions on $T$ and $\mathbf{y}$ is suppressed for clarity.
Also recall that the ideal penalty is given by
\begin{align} \label{e-penalty-ideal}
        z_t^\textup{ideal}( \mathbf{a}_{1:t}, \omega ) &\defeq r_t( \mathbf{a}_{1:t}, \omega ) -  \E_\mathbf{y}\left[ r_t( \mathbf{a}_{1:t}, \omega ) \left| H_{t-1}( \mathbf{a}_{1:t-1}, \omega ) \right. \right]
                \\ &  \quad + V^*\left( T-t,  \mathbf{y}_t( \mathbf{a}_{1:t}, \omega ) \right) - \E_\mathbf{y}\left[ \left. V^*\left( T-t, \mathbf{y}_t( \mathbf{a}_{1:t}, \omega ) \right)   \right| H_{t-1}( \mathbf{a}_{1:t-1}, \omega ) \right]. \nonumber
\end{align}
We can show that these penalty functions satisfy the dual feasibility condition (Definition \ref{defn-penalty-condition}); see Appendix~\ref{prf-dual-feasible} for a formal proof.
}
\begin{remark} \label{rem-dual-feasibility}
All penalty functions \eqref{e-penalty-ts}--\eqref{e-penalty-ideal} are dual feasible.
\end{remark}

\newedit{
This set of penalty functions results in a set of policies that ranges from Thompson sampling (\textsc{TS}) to the Bayesian optimal policy (\textsc{Opt}) and a set of performance bounds that ranges from the conventional regret benchmark $W^\textsc{TS}$ ($=\E[ T \times \max_a \mu_a(\theta_a) ]$) to the optimal value function $W^\textup{ideal}$ ($=V^*$).
More specifically, at one extreme, the simplest penalty function $z_t^\textsc{TS}$ yields \textsc{TS} and $W^\textsc{TS}$ as outputs, and at the other extreme, the ideal penalty function $z_t^\textup{ideal}$ yields \textsc{Opt} and $V^*$ which would be optimal.
The other three penalty functions ($z_t^\textsc{Irs.FH}$, $z_t^\textsc{Irs.V-Zero}$, and
$z_t^\textsc{Irs.V-EMax}$) connect the two extremes and are sequentially ``better'', where we informally say that a penalty function is better than another if it is closer to the ideal penalty function and thus yields a better performing policy and a tighter performance bound.
Deferring detailed explanations to \S \ref{ss-ts-revisited}--\S \ref{ss-irs-vemax}, we briefly illustrate general principles to design ``good'' penalty functions and motivate these penalty functions.
}

\newedit{
In design of information relaxation penalties, we first need to determine to which information set we relax the non-anticipativity constraint, i.e., what kind of additional information will be revealed to the DM in the relaxation.
Although we have described our framework based on the perfect information relaxation (i.e., the relaxation in which the DM perfectly knows the entire future outcomes $\omega$), any imperfect information relaxation can be equivalently described within the perfect information relaxation using a properly constructed penalty function.\footnote{\newedit{
	In fact, this is the main idea underlying the existence of the ideal penalty function; see the discussion after Theorem \ref{thm-strong-duality}.}}
Among the suggested penalty functions,\footnote{\newedit{
	We can motivate one more penalty function that corresponds to the perfect information
        relaxation. Such a penalty function is simply given by $z_t \equiv 0$, which is
        illustrated in Appendix~\ref{s-example}. However, we do not suggest it's use since it
        is even worse than $z_t^\textsc{TS}$.}}
$z_t^\textsc{TS}$ is the one that corresponds to the information relaxation to the parameter information $\bm{\theta}$, $z_t^\textsc{Irs.FH}$ corresponds to the information relaxation to the posterior predictive mean rewards $\hat{\bm{\mu}}_{T-1}$ (i.e., the finite-sample mean-reward estimates),
and $z_t^\textup{ideal}$ corresponds to no information relaxation. %, in which the DM has no incentive to use any future information.
}

\newedit{
One principle to motivate a better penalty function is to choose a smaller set of future information for the relaxation.
When less additional information is revealed to the DM in the relaxation, the additional profit that the DM can extract from this information becomes smaller, and hence the DM has to make more realistic decisions that rely more on the currently available information rather than the future information that is supposed to be unknown.
Comparing $z_t^\textsc{Irs.FH}$ with $z_t^\textsc{TS}$, one may notice that the finite-sample mean-reward estimates $\hat{\bm{\mu}}_{T-1}$ are less informative than the parameters $\bm{\theta}$ for the DM to exploit in her profit maximization because, in terms of mean-reward estimation, the parameters are informative as much as an infinite number of observations (i.e., $\E[ \mu_a(\theta_a) | \bm{\theta} ] = \lim_{T \rightarrow \infty} \E[ \mu_a(\theta_a) | R_{a,1}, \ldots, R_{a,T-1} ] = \lim_{T \rightarrow \infty} \hat{\mu}_{a,T-1}$).
In this sense, $z_t^\textsc{Irs.FH}$ is better than $z_t^\textsc{TS}$, and resulting policy $\pi^\textsc{Irs.FH}$ and performance bound $W^\textsc{Irs.FH}$ improve upon \textsc{TS} and $W^\textsc{TS}$ toward \textsc{Opt} and $V^*$.
}

\newedit{
Another principle to motivate a better penalty function is to adopt a more precise approximation of the ideal penalty function $z_t^\textup{ideal}$, particularly regarding the terms containing the optimal value function $V^*$.
In the presence of penalties that reflect the value of the additional information more accurately, the DM has less incentive to exploit this additional information in the relaxed decision making problem, and similarly to the above argument, this leads to more realistic decisions.
Among our suggestions, $z_t^\textsc{Irs.V-Zero}$ approximates the term $V^*$ with zero, and
$z_t^\textsc{Irs.V-EMax}$ approximates the term $V^*$  with a tractable upper bound $W^\textsc{TS}$.
By doing so, $z_t^\textsc{Irs.V-EMax}$ takes into account the continuation value of each action explicitly and improves upon  $z_t^\textsc{Irs.V-Zero}$.
}

\newedit{
Consider the inner problem associated with each choice of penalty function \eqref{e-penalty-ts}--\eqref{e-penalty-ideal}.
Recall that each inner problem is a deterministic multi-period decision making problem that has a form of $\max_{\mathbf{a}_{1:T} \in \Ascr^T} \sum_{t=1}^T r_t(\mathbf{a}_{1:t}) - z_t(\mathbf{a}_{1:t})$.
A penalty function $z_t$ effectively redefines what the DM earns at each time, i.e., $r_t(\mathbf{a}_{1:t})$ is replaced with $r_t(\mathbf{a}_{1:t}) - z_t(\mathbf{a}_{1:t})$.
More specifically, the penalty function $z_t^\textsc{TS}$ effectively replaces the realized rewards associated with each arm with their expected value given parameters $\bm{\theta}$;
as does $z_t^\textsc{Irs.FH}$ (with their expected value given the finite-sample mean-reward estimates $\hat{\bm{\mu}}_{T-1}$);
as does $z_t^\textsc{Irs.V-Zero}$ (with their expected value conditional on how many times the arm has previously been selected up to each point in time).
}

\begin{table}[H]
  \centering
  \small
\begin{tabular}{ c c c l c }
\toprule
\thead{Penalty\\function} & \thead{Policy} & \thead{Performance\\bound} & \thead{Inner problem} &
                                                                                                  \thead{Run time} \\
  \midrule
$z_t^\textsc{TS}$               &  \textsc{TS}          & $W^\textsc{TS}$                       & Find a best arm given parameters.             & $O(K)$        \\
$z_t^\textsc{Irs.FH}$           &  $\pi^\textsc{Irs.FH}$                & $W^\textsc{Irs.FH}$           & Find a best arm given finite observations.    & \makecell{ $O(K)$ or\\$O(KT)$ } \\
$z_t^\textsc{Irs.V-Zero}$       &  $\pi^\textsc{Irs.V-Zero}$    & $W^\textsc{Irs.V-Zero}$       & Find an optimal allocation of $T$ pulls.              & $O(KT^2)$                     \\
$z_t^\textsc{Irs.V-EMax}$       & $\pi^\textsc{Irs.V-EMax}$     & $W^\textsc{Irs.V-EMax}$       & Find an optimal action sequence.                      & $O(KT^K)$      \\
$z_t^\text{ideal}$              &  \textsc{Opt}                 &  $V^*$                                        & Solve Bellman equations.                              & --             \\
 \bottomrule
\end{tabular}
\caption{
 List of algorithms following from penalty functions \eqref{e-penalty-ts}--\eqref{e-penalty-ideal}.
  \textsc{TS} refers to Thompson sampling and \textsc{Opt} refers to the Bayesian optimal policy.
  Run time represents the computational complexity of solving one instance of the inner problem \eqref{e-inner-problem}, that is, the time required to obtain one sample \newedit{in a computation of} performance bound $W^z$ or to \newedit{decide which arm to select in each period in a run of} policy $\pi^z$.
} \label{tbl-summary}
\end{table}

Table \ref{tbl-summary} summarizes \newedit{these inner problems}.
%We derive a variety of penalty functions by exploiting the recursive structures in the causal process of a Bayesian learner.
As we sequentially increase the computational complexity of a penalty function, from $z_t^\textsc{TS}$ to $z_t^\textup{ideal}$, the penalty function more accurately penalizes the benefit from knowing future outcomes, i.e., more explicitly prevents the DM from exploiting future information.
As a result, the inner problem becomes closer to the original stochastic optimization problem, which results in a better performing policy and a tighter performance bound.
Using this approach, we achieve a family of algorithms that are intuitive and tractable, exhibiting a trade-off between quality and computational efficiency.
See Appendix \S \ref{s-example} for an illustrative example. %that shows how these algorithms work in act.

The run time in Table \ref{tbl-summary} represents the computational complexity of solving one instance of the inner problem, i.e., the time it takes to obtain one sample \newedit{in a computation of} performance bound $W^z$ or to \newedit{decide which arm to select in each period in a run of} policy $\pi^z$.
In this run-time analysis, performing the Bayesian belief updating and the sampling of a random variable is counted as a single operation.

\subsection{Thompson Sampling Revisited} \label{ss-ts-revisited}

With the penalty function $z_t^\textsc{TS}( \mathbf{a}_{1:t}, \omega ) \defeq r_t( \mathbf{a}_{1:t}, \omega ) - \mu_{a_t}( \theta_{a_t} )$, the inner problem \eqref{e-inner-problem} reduces to
\begin{equation}
        \max_{\mathbf{a}_{1:T} \in \Ascr^T} \left\{ \sum_{t=1}^T r_t( \mathbf{a}_{1:t}, \omega ) - z_t( \mathbf{a}_{1:t}, \omega ) \right\}
                = \max_{\mathbf{a}_{1:T} \in \Ascr^T} \left\{ \sum_{t=1}^T \mu_{a_t}( \theta_{a_t} ) \right\}
                = T \times \max_{a \in \Ascr} \mu_a(\theta_a).
\end{equation}
Given an outcome $\omega$, and in the presence of penalties, a hindsight optimal action sequence is to keep pulling the true best arm, i.e., $\argmax_a \mu_a(\theta_a)$, for $T$ times in a row.
The resulting performance bound $W^\textsc{TS}$ reduces to the conventional performance benchmark,
\begin{equation} \label{e-full-info}
       W^\textsc{TS}( T, \mathbf{y} ) = \newedit{\E_\mathbf{y}}\left[ T \times \max_{a \in \Ascr} \mu_a(\theta_a) \right],
\end{equation}
which measures how much the DM could have achieved if the parameters had been revealed in advance.

\begin{remark}
        The performance bound $W^\textsc{TS}$ is the conventional benchmark that has been widely used in the Bayesian regret analysis \citep{Lai85, Russo14, Russo17}.
        The Bayesian regret of a policy $\pi$ is defined as
        \begin{equation}
                \text{BayesRegret}( \pi, T, \mathbf{y} )
                        \defeq \newedit{\E_\mathbf{y}}\left[ \sum_{t=1}^T \max_a \mu_a(\theta_a) - \mu_{A_t^\pi}(\theta_{A_t^\pi}) \right]
                        = W^\textsc{TS}( T, \mathbf{y} ) - V( \pi, T, \mathbf{y} ),
        \end{equation}
        which quantifies the suboptimality of the policy $\pi$.
\end{remark}

It is trivial to see that the corresponding policy $\pi^\textsc{TS}$ is equivalent to Thompson sampling.
The policy $\pi^\textsc{TS}$ utilizes a sampled outcome $\tilde{\omega}$ instead of the true outcome $\omega$;
accordingly, it selects an arm $\newedit{A^\textsc{TS}} = \argmax_a \mu_a(\tilde{\theta}_a)$, where $\tilde{\bm{\theta}} \sim \Pscr(\mathbf{y})$, which is identical to the procedure described in Algorithm \ref{alg-ts}.
In order for the policy $\pi^\textsc{TS}$ to make a decision at a certain time, note that it does not need to sample future rewards, and thus it requires $O(K)$ computations only.

\subsection{IRS.FH} \label{ss-irs-fh}

Recall that $\hat{\mu}_{a,T-1}(\omega; y_a)$ is the \newedit{posterior predictive mean reward} of an arm $a$ that the DM \newedit{will have after observing} $T-1$ reward realizations $R_{a,1}, \ldots, R_{a,T-1}$ given the initial belief $y_a$:
\begin{equation}
        \hat{\mu}_{a,T-1}(\omega; y_a) \defeq \newedit{\E_{y_a}}\left[ \mu_a(\theta_a) \left| R_{a,1}, \ldots, R_{a,T-1} \right. \right].
\end{equation}
Given \eqref{e-penalty-irs-fh}, the optimal solution to the inner problem \eqref{e-inner-problem} is to always pull the arm \newedit{with the highest posterior predictive mean reward}, i.e., $\argmax_a \hat{\mu}_{a,T-1}(\omega;y_a)$:
\begin{equation}
        \max_{\mathbf{a}_{1:T} \in \Ascr^T} \left\{ \sum_{t=1}^T r_t( \mathbf{a}_{1:t}, \omega ) - z_t^\textsc{Irs.FH}( \mathbf{a}_{1:t}, \omega ) \right\}
                = \max_{\mathbf{a}_{1:T} \in \Ascr^T} \left\{ \sum_{t=1}^T \hat{\mu}_{a_t,T-1}( \omega ) \right\}
                = T \times \max_{a \in \Ascr} \hat{\mu}_{a,T-1}(\omega).
\end{equation}
This inner problem yields the performance bound $W^\textsc{Irs.FH}$, such that
\begin{equation}
        W^\textsc{Irs.FH}(T, \mathbf{y}) = \newedit{\E_\mathbf{y}}\left[ T \times \max_{a \in \Ascr} \hat{\mu}_{a,T-1}(\omega; y_a) \right],
\end{equation}
and the policy $\pi^\textsc{Irs.FH}$ that is implemented in Algorithm \ref{alg-irs-fh}.

\begin{algorithm2e}[H] \label{alg-irs-fh}
  \SetAlgoLined\DontPrintSemicolon
  \SetKwFunction{algo}{IRS.FH}
  \SetKwProg{myalg}{Function}{}{}
  \myalg{\algo{$T, \mathbf{y}$}}{
  \newedit{\tcp{$T$:remaining time horizon, $\mathbf{y}$:current belief}}
  \nl Sample parameters $\tilde{\bm{\theta}} \sim \Pscr(\mathbf{y})$ and rewards $\tilde{R}_{a,n} \sim \Rscr_a(\tilde{\theta}_a)$, $\forall n \in \{1, \ldots, T\}$, $\forall a \in \Ascr$.\;
  \nl \KwRet $\argmax_a \left\{ \newedit{\E_{y_a}}\left[ \mu_a(\theta_a) \left| R_{a,1}=\tilde{R}_{a,1}, \ldots, R_{a,T-1} = \tilde{R}_{a,T-1} \right. \right] \right\}$ \;
  }{}
  \caption{Arm selection rule of $\pi^\textsc{Irs.FH}$ when remaining time is $T$ and current belief is $\mathbf{y}$}
\end{algorithm2e}

\textsc{Irs.FH} (\textsc{FH} stands for finite horizon) is almost identical to \textsc{TS} except that \newedit{the conditional mean reward} $\mu_a(\theta_a)$ is replaced with \newedit{the posterior predictive mean reward} $\hat{\mu}_{a,T-1}(\omega)$.
\newedit{As a finite-sample Bayesian estimate of the conditional mean reward, $\hat{\mu}_{a,T-1}(\omega)$ is less informative than $\mu_a(\theta_a)$ from the DM's perspective}.
In terms of mean reward estimation, \newedit{the DM will never be able to identify $\mu_a(\theta_a)$ perfectly within a finite horizon, i.e.,} knowing the parameters is equivalent to having an infinite number of observations.
The inner problem of \textsc{TS} requires the DM to ``identify the best arm based on an infinite number of samples,'' whereas that of \textsc{Irs.FH} requires \newedit{the DM} to ``identify the best arm based on a finite number of samples'' and takes into account the length of the time horizon explicitly.
By restricting the DM's access to fewer information, \newedit{\textsc{Irs.FH}} requires the DM to be more realistic, that is, to consider the uncertainties more precisely.

\begin{figure}[!h]
        \centering
        \includegraphics[width=1.0\linewidth]{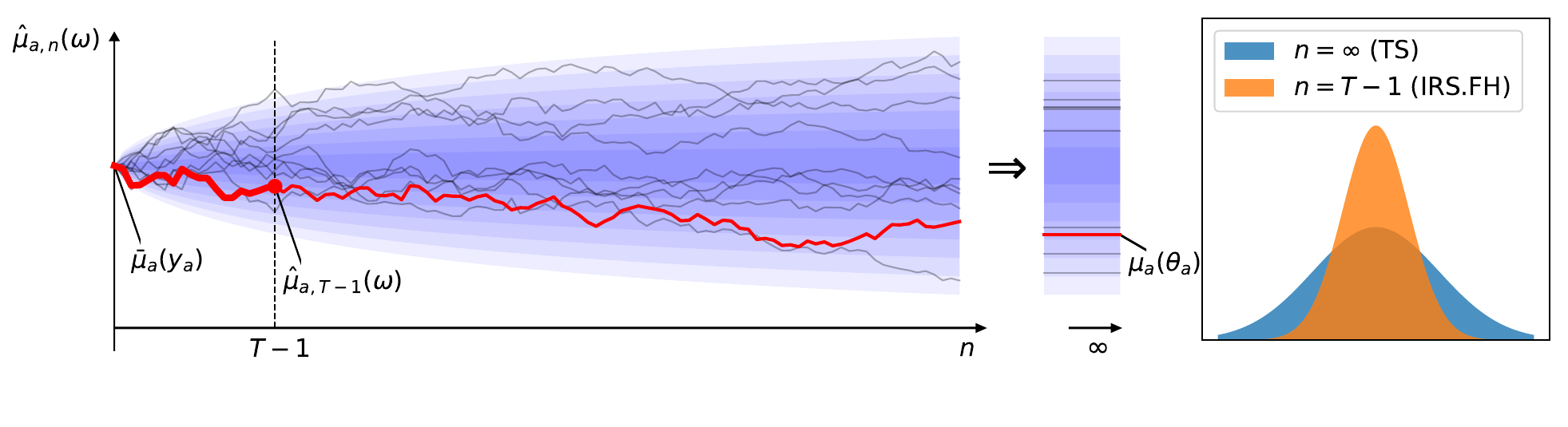}
        \caption{ (Left) Sample paths of \newedit{posterior predictive mean reward process} of an arm $a$, $\big\{ \hat{\mu}_{a,n}(\omega) \big\}_{n \geq 0}$. \newedit{This process is a martingale that starts at (prior) predictive mean $\bar{\mu}_a$ and converges to conditional mean $\mu_a$ (Remark \ref{rem:mean-reward-martingale}).}
                (Right) The distributions of $\hat{\mu}_{a,T-1}$ and $\mu_a$: $\hat{\mu}_{a,T-1}$ is more concentrated than $\mu_a$, while all have the same mean $\bar{\mu}_a(y_a)$.
        }
        \label{fig-estimate-trajectory}
\end{figure}

To sharpen our comparison between \textsc{Irs.FH} and \textsc{TS}, let us compare the variance of $\hat{\mu}_{a,T-1}(\omega)$ and $\mu_a(\theta_a)$ \newedit{induced by the randomness of outcome $\omega$.
As depicted in Figure \ref{fig-estimate-trajectory}, $\mu_a$ is more widely distributed than $\hat{\mu}_{a,T-1}$ because a larger (infinite vs. $T-1$) number of samples makes it easier for the posterior to deviate from the initial prior (see also Remark \ref{rem:mean-reward-martingale}).}
By Jensen's inequality, we further have $W^\textsc{Irs.FH} = \E[ T \times \max_a \hat{\mu}_{a,T-1}(\omega) ] \leq \E[ T \times \max_a \mu_a(\theta_a) ] = W^\textsc{TS}$ for any problem instance, meaning that \textsc{Irs.FH} yields a performance bound that is tighter than the conventional benchmark.
Also note that the same argument holds for the comparison between $\hat{\mu}_{a,T-1}(\tilde{\omega})$ and $\mu_a(\tilde{\theta}_a)$ since the synthesized outcome $\tilde{\omega}$ is identically distributed with the (true) outcome $\omega$.
The variability of $\hat{\mu}_{a,T-1}(\tilde{\omega})$ (respectively, $\mu_a(\tilde{\theta}_a)$) governs the randomness of the action taken by policy $\pi^\textsc{Irs.FH}$ (resp., $\pi^\textsc{TS}$), i.e., $\newedit{A^\textsc{Irs.FH}} = \argmax_a \hat{\mu}_{a,T-1}(\tilde{\omega})$ (resp., $\newedit{A^\textsc{TS}} = \argmax_a \mu_a(\tilde{\theta}_a)$).
Given $T$ and $\mathbf{y}$, the policy $\pi^\textsc{Irs.FH}$ performs fewer random explorations than \textsc{TS}, as it is less likely to deviate from the myopic decision to play an arm with the largest current estimate $\bar{\mu}_a(y_a)$.
More desirably, the degree of exploration of $\pi^\textsc{Irs.FH}$ is controlled by the remaining time horizon as the variance of $\hat{\mu}_{a,T-1}(\omega)$ depends on $T$.
At the last decision epoch ($T=1$), $\pi^\textsc{Irs.FH}$ takes a myopic action that is indeed optimal.

\noindent \textbf{\newedit{Efficiently sampling $\hat{\mu}_{a,T-1}(\tilde{\omega})$ for natural exponential families}.}
In order to obtain $\hat{\mu}_{a,T-1}(\tilde{\omega})$ for each arm $a$ for a synthesized outcome $\tilde{\omega}$, one may apply Bayes' rule sequentially for each reward realization, which will take \newedit{$O(T)$} computations \newedit{per arm}.
%We illustrate that it can be done in $O(K)$ computations when the belief can be updated in a batch by the use of sufficient statistics.

\newedit{
As discussed in \S \ref{ss-exponential-family}, in an MAB were the reward distribution is a
natural exponential family, the posterior predictive mean reward is given by
        \begin{equation}
                \hat{\mu}_{a,T-1}( \tilde{\omega}; \xi_a, \nu_a ) = \frac{ \xi_a + \sum_{n=1}^{T-1} \tilde{R}_{a,n} }{ \nu_a + T-1 }.
        \end{equation}
Therefore, it is sufficient to sample the sum of $T-1$ future rewards, $\sum_{n=1}^{T-1}
\tilde{R}_{a,n}$, in otder to sample the posterior predictive mean reward.
Observe that the conditional distribution of the sum given $\tilde{\theta}_a$ also belongs to the natural exponential family, induced by a log-partition function $(T-1) A_a(\tilde{\theta}_a)$.
This distribution may be tractable to compute: for example, its distribution is $\text{Binomial}( T-1, \mu_a(\tilde{\theta}_a))$ for the Beta-Bernoulli case, and $\Nscr\big( (T-1) \cdot \mu_a(\tilde{\theta}_a), (T-1) \cdot \sigma_a^2 \big)$ for the Gaussian case.
In these settings, we can sample the sum $\sum_{n=1}^{T-1} \tilde{R}_{a,n}$ directly from the tractable distribution (after sampling $\tilde{\theta}_a$) using $O(1)$ computation, and then use it to compute $\hat{\mu}_{a,T-1}(\tilde{\omega})$ without sequentially updating the belief.
In such cases, a single decision of $\pi^\textsc{Irs.FH}$ can be made within $O(K)$ operations, independent of $T$, similar in computational complexity to \textsc{TS}.
%This distribution may be tractable to compute and thus may be sampled using $O(1)$ computations, independent of $T$.
}

\subsection{IRS.V-Zero} \label{ss-irs-vzero}

\textsc{IRS.V-Zero} introduces a further complication \newedit{in} that its inner problem requires the DM to consider her causal process in the course of solving the inner problem.
Under the penalty $z_t^\textsc{Irs.V-Zero}$ given in \eqref{e-penalty-irs-vzero}, the DM at time $t$ earns $\E\left[ r_t( \mathbf{a}_{1:t}, \omega ) \left| \newedit{H_{t-1}( \mathbf{a}_{1:t-1}, \omega )} \right. \right]$, the expected mean reward that she can infer from observations prior to time $t$.
As we defined $R_{a,n}$ to be a reward from the $n^\text{th}$ pull on arm $a$ (not the pull at time $n$), the posterior belief associated with each arm is determined only by the number of past pulls performed on that arm.
Recall that $\hat{\mu}_{a,n}(\omega)$ is the expected mean reward of arm $a$ that the DM can infer from the first $n$ reward realizations:
\begin{equation}
        \hat{\mu}_{a,n}(\omega; y_a) \defeq \newedit{\E_{y_a}}\left[ \mu_a( \theta_a ) \left| R_{a,1}, \ldots, R_{a,n} \right. \right].
\end{equation}
Therefore, the DM earns $\hat{\mu}_{a,n-1}(\omega)$ from the $n^\text{th}$ pull on arm $a$, irrespective of the detailed sequence of the past actions.
More formally, the DM's earning at time $t$ is
\begin{equation}
        r_t( \mathbf{a}_{1:t}, \omega ) - z_t^\textsc{Irs.V-Zero}( \mathbf{a}_{1:t}, \omega )
                = \newedit{\E_\mathbf{y}}\left[ \mu_{a_t}( \theta_{a_t} ) \left| \newedit{H_{t-1}( \mathbf{a}_{1:t-1}, \omega )} \right. \right]
                = \hat{\mu}_{a_t,n_{t-1}(\mathbf{a}_{1:t-1},a_t)}(\omega),
\end{equation}
where $n_{t-1}(\mathbf{a}_{1:t-1}, a)$, defined in \eqref{e-count}, denotes the number of pulls conducted on a particular arm $a$ prior to time $t$.

Let $S_{a,n}(\omega) \defeq \sum_{i=1}^n \hat{\mu}_{a,i-1}(\omega)$ be the cumulative payoff from the first $n$ pulls of an arm $a$.
Given an outcome $\omega$, we observe that the total payoff is determined only by the total number of pulls \newedit{on} each arm, and not the sequence in which the arms have been pulled.
Therefore, solving the inner problem \eqref{e-inner-problem} is equivalent to ``finding the optimal allocation $(n_1^*, n_2^*, \ldots, n_K^*)$ among $T$ remaining opportunities'':  more formally,
\begin{equation} \label{e-inner-problem-irs-vzero}
        \max_{\mathbf{a}_{1:T} \in \Ascr^T}\left\{ \sum_{t=1}^T \hat{\mu}_{a_t,n_{t-1}(\mathbf{a}_{1:t-1},a_t)} \right\}
                = \max_{\mathbf{a}_{1:T} \in \Ascr^T}\left\{ \sum_{a=1}^K \sum_{n=1}^{n_T(\mathbf{a}_{1:T},a)} \hat{\mu}_{a,n-1} \right\}
                = \max_{\mathbf{n}_{1:K} \in N_T}\left\{ \sum_{a=1}^K S_{a,n_a} \right\},
\end{equation}
where $N_T \defeq \{ (n_1,\ldots,n_K) \in \N_0^K: \sum_{a=1}^K n_a = T \}$ is the set of all feasible allocations.
Once the $S_{a,n}$'s are computed, we can solve this inner problem within $O(KT^2)$ operations by sequentially applying \newedit{the} sup convolution $K$ times.
The detailed implementation is provided in \S \ref{ss-irs-vzero-impl}.

\newedit{The policy $\pi^\textsc{Irs.V-Zero}$ further needs to decide which arm to pull given the optimal allocation $(\tilde{n}_1^*, \tilde{n}_2^*, \ldots, \tilde{n}_K^*)$ that is obtained for the sampled outcome $\tilde{\omega}$.}
In principle, any arm $a$ that was included in the solution of the inner problem, $\tilde{n}_a^* > 0$, would suffice, but we suggest a selection rule by which the arm that needs the most pulls is chosen, i.e., $\newedit{A^\textsc{Irs.V-Zero}} = \argmax_a \tilde{n}_a^*$.
\newedit{This} guarantees that $\pi^\textsc{Irs.V-Zero}$ behaves like \textsc{TS} when $T$ is large, as formally stated in Proposition \ref{prop-asymptotic-behavior}.
%One may consider different selection rules but we did not observe a significant change in our attempts in numerical simulation.
\\

\noindent \textbf{Comparison with \textnormal{\textsc{TS}} and \textnormal{\textsc{Irs.FH}}}.
Recall that in the inner problems of \textsc{TS} and \textsc{Irs.FH}, the DM at time $t$ earns $\E[ r_t | \bm{\theta} ]$ and $\E[ r_t | \hat{\bm{\mu}}_{T-1} ]$, respectively, which are the mean reward estimates that rely on the information not available at the moment; e.g., $\hat{\mu}_{a,T-1}$ is revealed only after playing the arm $a$ for $T-1$ times.
\textsc{Irs.V-Zero} is more restrictive for the DM in the sense that she at time $t$ earns $\E[ r_t | \newedit{H_{t-1}} ]$, which does not include any information that does not belong to \newedit{$H_{t-1}$}.
\textsc{Irs.V-Zero} reflects the fact that the $n^\text{th}$ reward of an arm will not be revealed unless the arm is pulled $n$ times, and its inner problem requires the DM to allocate a pull in order to incorporate the next reward realization into her information set; thus learning about an arm comes at the cost of sacrificing an opportunity to learn about the other arms.

More specifically, let us focus on the total payoff of a particular allocation $(n_1,\ldots,n_K)$ under each penalty function $z_t^\textsc{Irs.V-Zero}$ and $z_t^\textsc{Irs.FH}$.
The allocation yields $\sum_{a=1}^K S_{a,n_a}(\omega)$ in the inner problem of \textsc{Irs.V-Zero} whereas the same allocation yields $\sum_{a=1}^K n_a \times \hat{\mu}_{a,T-1}(\omega)$ in the inner problem of \textsc{Irs.FH}.
In terms of variability originating from the randomness of $\omega$, we observe that each summand $S_{a,n_a}(\omega) = \sum_{i=1}^{n_a} \hat{\mu}_{a,i-1}(\omega)$ is less \newedit{noisy} than its counterpart $n_a \times \hat{\mu}_{a,T-1}(\omega)$ since \newedit{a larger number of observations makes it easier for the posterior to deviate from the initial prior and hence} the variance of individual terms $\hat{\mu}_{a,0}(\omega), \ldots, \hat{\mu}_{a,n_a-1}(\omega)$ is smaller than the variance of $\hat{\mu}_{a,T-1}(\omega)$ and, therefore, $\sum_{a=1}^K S_{a,n_a}(\omega)$ is smaller than $\sum_{a=1}^K n_a \times \hat{\mu}_{a,T-1}(\omega)$.
Analogous to the comparison between \textsc{Irs.FH} and \textsc{TS}, we have that \textsc{Irs.V-Zero} yields a performance bound $W^\textsc{Irs.V-Zero}$ that is tighter than $W^\textsc{Irs.FH}$ (formally stated in Theorem \ref{thm-monotonicity}) and a policy $\pi^\textsc{Irs.V-Zero}$ that performs fewer random explorations than $\pi^\textsc{Irs.FH}$.

\subsection{IRS.V-EMax} \label{ss-irs-vemax}
Under perfect information relaxation, the DM perfectly knows not only (i) what she will earn at future times but also (ii) how her belief will evolve as a result of her action sequence.
The previous algorithms focus on the former component by making the DM adjust the future rewards by conditioning (e.g.,  $\E[ r_t(a_t) | \bm{\theta} ]$, $\E[ r_t(a_t) | \hat{\bm{\mu}}_{T-1} ]$ and $\E[ r_t(a_t) | \newedit{H_{t-1}} ]$).
\textsc{Irs.V-EMax} also focuses on the \newedit{latter} component as well by charging \newedit{the DM} an additional cost for using the information on her future belief transitions.

To motivate this in detail, recall that the ideal penalty $z_t^\text{ideal}$ \eqref{e-ideal-penalty} is
\begin{align}
        z_t^\textup{ideal}( \mathbf{a}_{1:t}, \omega ) &\defeq r_t( \mathbf{a}_{1:t}, \omega ) -  \E\left[ r_t( \mathbf{a}_{1:t}, \omega )\left| H_{t-1}( \mathbf{a}_{1:t-1}, \omega ) \right. \right]
                \\ &  \quad + V^*\left( T-t, \mathbf{y}_t( \mathbf{a}_{1:t}, \omega ) \right) - \E\left[ \left. V^*\left( T-t, \mathbf{y}_t( \mathbf{a}_{1:t}, \omega ) \right)   \right| H_{t-1}( \mathbf{a}_{1:t-1}, \omega ) \right], \nonumber
\end{align}
where $V^*\left( T-t, \mathbf{y}_t \right)$ measures the value of having a belief $\mathbf{y}_t$ at a future time $t+1$.
Note that, at the moment the DM takes an action $a_t$, the next belief state $\mathbf{y}_t = \Uscr( \mathbf{y}_{t-1}, a_t, r_t )$ is not measurable with respect to the natural filtration \newedit{$\sigma(H_{t-1})$} since the next observation $r_t$ is unknown.
In DP terms, the conditional expectation $\E\left[ \left. V^*\left( T-t, \mathbf{y}_t \right) \right| \newedit{H_{t-1}} \right]$ captures the expected value of \newedit{a} (random) next state given the current state.
Accordingly, the gap between its realized value and its expected value, $V^*\left( T-t, \mathbf{y}_t \right) - \E\left[ \left. V^*\left( T-t, \mathbf{y}_t \right) \right| \newedit{H_{t-1}} \right]$, measures the additional gain from knowing the next belief state $\mathbf{y}_t$.
In addition to the term $r_t - \E\left[ r_t \left| \newedit{H_{t-1}} \right. \right] $ ($=z_t^\textsc{Irs.V-Zero}$), which measures the benefit from knowing which action will \newedit{yield} a large immediate reward, the ideal penalty also penalizes the long-term benefit from knowing which action will lead to a favorable belief state.

The penalty function $z_t^\textsc{Irs.V-EMax}$ is obtained from $z_t^\textup{ideal}$ by replacing $V^*(T, \mathbf{y})$ with $W^\textsc{TS}(T, \mathbf{y})$, which is rather tractable.
The use of $W^\textsc{TS}(T, \mathbf{y}) \defeq \E_{\mathbf{y}}\left[ T \times \max_a \mu_a(\theta_a) \right]$, introduced in \eqref{e-full-info}, leads to a simple expression for its conditional expectation: since $\bm{\theta}|H_{t-1}$ is distributed with $\Pscr(\mathbf{y}_{t-1})$, we have
\begin{align} \label{e-expected-full-info}
        \E_{\mathbf{y}}\left[ \left. W^\textsc{TS}\left( T-t, \mathbf{y}_t \right) \right| H_{t-1} \right]
                &= (T-t) \times  \E_{\mathbf{y}}\left[ \left. \max_a \mu_a(\theta_a) \right| H_{t-1} \right]
                \\&= (T-t) \times  \E_{\mathbf{y}_{t-1}}\left[ \max_a \mu_a(\theta_a)  \right]
                \\&= W^\textsc{TS}\left( T-t, \mathbf{y}_{t-1} \right).
\end{align}
In the associated inner problem, the payoff that the DM earns at time $t$ is
\begin{align}
        & r_t(\mathbf{a}_{1:t}, \omega) - z_t^\textsc{Irs.V-EMax}(\mathbf{a}_{1:t}, \omega)
        \\&= \hat{\mu}_{a_t,n_{t-1}(\mathbf{a}_{1:t-1},a_t)}(\omega) - W^\textsc{TS}\left( T-t, \mathbf{y}_t(\mathbf{a}_{1:t}, \omega) \right) + W^\textsc{TS}\left( T-t, \mathbf{y}_{t-1}(\mathbf{a}_{1:t-1}, \omega) \right)
        \\&= \bar{\mu}_{a_t}( [\mathbf{y}_{t-1}( \mathbf{a}_{1:t-1}, \omega )]_{a_t} ) - W^\textsc{TS}\left( T-t, \mathbf{y}_t(\mathbf{a}_{1:t}, \omega) \right) + W^\textsc{TS}\left( T-t, \mathbf{y}_{t-1}(\mathbf{a}_{1:t-1}, \omega) \right), \label{e-payoff-irs-vemax}
\end{align}
which is completely determined by the prior belief $\mathbf{y}_{t-1}$ and the posterior belief $\mathbf{y}_t$.

We further observe that, given $\omega$, the future belief $\mathbf{y}_t(\mathbf{a}_{1:t}, \omega)$ depends only on how many times each arm has been pulled, irrespective of the sequence of the pulls.
For example, consider two action sequences $\mathbf{a}_{1:t}^A = (1,1,2,1,2)$ and $\mathbf{a}_{1:t}^B = (2,1,1,2,1)$.
Even though the order of observations would differ, in both cases the agent would observe $(R_{1,1}, R_{1,2}, R_{1,3})$ from arm 1 and $(R_{2,1}, R_{2,2})$ from arm 2 and end up with the same belief $\mathbf{y}_t(\mathbf{a}_{1:t}^A, \omega) = \mathbf{y}_t(\mathbf{a}_{1:t}^B, \omega)$.
We may conclude from this observation that a belief state can be sufficiently parameterized with the pull counts $\mathbf{n}_{1:K}=(n_1,\ldots,n_K)$ instead of action sequence $\mathbf{a}_{1:t}$ \deledit{that is, with $\mathbf{y}_t(\mathbf{n}_{1:K})$ instead of $\mathbf{y}_t(\mathbf{a}_{1:t})$}.

\newedit{As a result, the total number of possible future beliefs is $O(T^K)$, not $O(K^T)$, and we can come up with a dynamic programming algorithm that solves the inner problem within $O(c_W T^K + KT^K)$ computations where $c_W$ is the cost of numerically calculating $W^\textsc{TS}(T, \mathbf{y})$.
We refer the interested reader to \S \ref{ss-irs-vemax-impl}.}

\deledit{Given the above observations, we can solve the inner problem within $O(KT^K)$ computations by dynamic programming.
While deferring the detailed description of procedure to \S \ref{ss-irs-vemax-impl}, we here briefly highlight the main idea of dynamic programming.}

\deledit{Let us consider a subproblem of \eqref{e-inner-problem} given a pull allocation $\mathbf{n}_{1:K}$, in which we are constrained to play each arm $n_1,\ldots,n_K$ times and we are looking for the best sequence of pulls $\mathbf{a}_{1:t}$ that maximizes the total payoff with $t = \sum_a n_a$.
The maximal value of this subproblem can be computed from the result of other $K$ subproblems parameterized with $\mathbf{n}_{1:K} - \mathbf{e}_1, \mathbf{n}_{1:K} - \mathbf{e}_2, \ldots, \mathbf{n}_{1:K} - \mathbf{e}_K$, where $\mathbf{e}_a$ is a basis vector whose $a^\text{th}$ component is one;
i.e., having decided to play an arm $a$ at time $t$, the previous belief state should be $\mathbf{y}_{t-1}(\mathbf{n}_{1:K}-\mathbf{e}_a)$ and we can earn at most the maximal value of the subproblem with $\mathbf{n}_{1:K}-\mathbf{e}_a$ plus the payoff of transition from $\mathbf{y}_{t-1}(\mathbf{n}_{1:K}-\mathbf{e}_a)$ to $\mathbf{y}_t(\mathbf{n}_{1:K})$, as represented in \eqref{e-payoff-irs-vemax}.}

\deledit{Each subproblem can be solved in $O(K)$ computations if the previous subproblems and the payoffs are pre-calculated.
Note that the total number of possible future beliefs is $O(T^K)$, not $O(K^T)$.
Therefore, the inner problem \eqref{e-inner-problem} can be solved by sequentially solving $O(T^K)$ subproblems, which will require $O(c_W T^K + KT^K)$ operations, where $c_W$ is the cost of numerically calculating $W^\textsc{TS}(T, \mathbf{y})$.}

\subsection{IRS.Index Policy} \label{ss-irs-index}

Finally, we propose \newedit{the} \textsc{Irs.Index} policy, which does not strictly belong to the IRS framework, and does not produce a performance bound, but does exhibit strong empirical performance.

Roughly speaking, \newedit{the} \textsc{Irs.Index} is \newedit{a single-sample approximation} of the finite-horizon Gittins index \citep{Kaufmann12}, where \newedit{the approximation is motivated by \textsc{Irs.V-EMax} algorithm}.
It first solves the single-armed bandit problem for each arm in isolation, and makes a decision based on the results of these subproblems.
\\

\noindent \textbf{Single-armed bandit problem.}
Consider a special case of an MAB instance in which there is a single arm $a$ that yields stochastic rewards $R_{a,n} \sim \Rscr_a(\theta_a)$ with an outside option that yields a deterministic reward $\lambda$.
We have a prior distribution $\Pscr_a(y_a)$ over unknown parameter $\theta_a$ whereas the deterministic reward $\lambda$ is known a priori.

Given an outcome $\omega_a = \left( \theta_a, (R_{a,n})_{n \in \N} \right)$, we can simulate the future belief trajectory $( y_{a,n} )_{n \in \{0,\ldots,T\}}$, where $y_{a,n}$ is the belief after $n$ reward realizations are observed:
\begin{equation}
        y_{a,0} \defeq y_a, \quad y_{a,n} \defeq \Uscr_a( y_{a,n-1}, R_{a,n} ), \quad \forall n = 1, \ldots, T.
\end{equation}
\newedit{Let $V^*(T, y_a, \lambda)$ be the optimal value function associated with this single-armed bandit problem.}
We consider the penalty function $z_t^\textsc{Irs.V-EMax}$ in which the value function $V^*(T, y_a, \lambda)$ is approximated by $W^\textsc{TS}(T, y_a, \lambda) = \E_{y_a}\left[ T \times \max( \mu_a(\theta_a), \lambda ) \right]$.
We define $\Ascr \defeq \{0,1\}$ such that $a_t = 1$ if the stochastic arm at time $t$ is selected, and $a_t=0$ if the outside option is selected.
The associated inner problem is
\begin{align} \label{e-irs-index-1}
        \text{maximize} \quad & \sum_{t=1}^T \hat{\mu}_{a,n_t-1}(\omega_a) \cdot \mathbf{1}\{ a_t = 1 \} + \lambda \cdot \mathbf{1}\{a_t = 0 \} - (T-t) \times \left(\Gamma^\lambda_{n_t}(\omega_a) - \Gamma^\lambda_{n_{t-1}}(\omega_a) \right)
        \\
        \text{subject to} \quad & n_t = \sum_{s=1}^t \mathbf{1}\{a_t=1\}, \quad a_t \in \{0,1\}, \quad \forall t=1,\ldots,T,
\end{align}
where $\hat{\mu}_{a,n}(\omega_a) \defeq \E_{y_a}[ \mu_a(\theta_a) | R_{a,1},\ldots,R_{a,n} ] = \bar{\mu}_a(y_{a,n})$ and
\begin{equation}
        \Gamma^\lambda_n(\omega_a) \defeq \E_{y_{a,n}}\left[ \max( \mu_a(\theta_a), \lambda ) \right].
\end{equation}
With some algebra (Proposition \ref{prop-irs-index-reformulation} in \S \ref{ss-irs-index-impl}), we can reformulate the optimization problem as
\begin{equation} \label{e-irs-index-2}
        \max_{0 \leq n \leq T}\left\{ T \times \Gamma^\lambda_0(\omega_a) + (T-n) \times \left( \lambda - \min_{0 \leq i \leq n} \Gamma^\lambda_i(\omega_a) \right)  + \sum_{i=1}^n \left( \hat{\mu}_{a,i-1}(\omega_a) - \Gamma^\lambda_{i-1}(\omega_a) \right) \right\},
\end{equation}
where the decision variable $n$ is the total number of pulls on the stochastic arm.

Let $\varphi_a(\lambda, \omega_a)$ be the (maximal) relative benefit from pulling the stochastic arm against not pulling at all:
\begin{equation} \label{e-irs-index-worth-trying}
        \varphi_a(\lambda, \omega_a) \defeq \max_{1 \leq n \leq T} \left\{ T \times \Gamma^\lambda_0 + (T-n) \times \left( \lambda - \min_{0 \leq i \leq n} \Gamma^\lambda_i \right) + \sum_{i=1}^n \left( \hat{\mu}_{a,i-1} - \Gamma^\lambda_{i-1} \right) \right\} - T \times \lambda.
\end{equation}
Note that $\max\{ \cdot \}$ was taken over $n \geq 1$.
We interpret the meaning of the sign of $\varphi_a(\lambda, \omega_a)$ as follows: given an outcome $\omega_a$, the stochastic arm is worth trying against the deterministic outside option $\lambda$ if $\varphi_a(\lambda, \omega_a) \geq 0$, and not worth trying if $\varphi_a(\lambda, \omega_a) < 0$.

Given $\omega_a$ and $\lambda$, the value of $\varphi_a(\lambda, \omega_a)$ can be computed in $O(T)$ operations by precalculating $\sum_{i=1}^n \hat{\mu}_{a,i-1}(\omega_a)$, $\min_{0 \leq i \leq n} \Gamma^\lambda_i(\omega_a)$, and $\sum_{i=1}^n \Gamma^\lambda_{i-1}(\omega_a)$ over $n=1,\ldots,T$ sequentially.
The single-armed bandit problem has an additional advantage of computational efficiency: in contrast to the implementation of \textsc{Irs.V-EMax} in the multi-arm setting, the approximate value function (captured by $\Gamma^\lambda_n$) often admits a closed-form expression in the single-armed setting.
In the cases of the Beta-Bernoulli MAB and the Gaussian MAB, for example, we have
\begin{align}
        \E_{\mu \sim \text{Beta}(\alpha, \beta)}\left[ \max\left( \mu, \lambda \right) \right]
                &= \lambda \times F^{\text{beta}}_{\alpha, \beta}\left( \lambda \right) + \frac{\alpha}{\alpha+\beta} \times \left( 1 - F^{\text{beta}}_{\alpha+1, \beta}\left( \lambda \right) \right),
        \\
        \E_{\mu \sim \Nscr(m, \nu^2)}\left[ \max\left( \mu, \lambda \right) \right]
                &= m + (\lambda - m) \times \Phi\left( \nu^{-1} (\lambda - m) \right) + \nu \times \phi\left( \nu^{-1}(\lambda - m) \right),
\end{align}
where $F^\text{beta}_{\alpha,\beta}( \cdot )$ represents the c.d.f. of $\text{Beta}(\alpha, \beta)$ distribution, and $\Phi(\cdot)$ and $\phi(\cdot)$ represent the c.d.f. and the p.d.f. of the standard normal distribution, respectively.
With these expressions, $\Gamma_n^\lambda(\omega_a)$'s can be computed very efficiently without using numerical integration or Monte Carlo sampling.
%The peculiar structures in the single-armed bandit problem lead to a lot more efficient implementation of \textsc{Irs.V-EMax} compared to its application to the MAB instance with two stochastic arms ($K=2$).
\\

\noindent \textbf{Index policy.}
We now return to the original MAB problem with $K$ arms.
Recall that the single-armed bandit algorithm tells us whether an arm (given an outcome $\omega_a$) is worth trying against the deterministic reward $\lambda$.
We use this algorithm as a module to compute the index of each arm.

More specifically, consider a certain decision epoch when the remaining time is $T$ and the belief is $\mathbf{y}$.
For each arm $a=1,\ldots,K$ separately, the policy $\pi^\textsc{Irs.Index}$ samples the future outcome $\tilde{\omega}_a$ (i.e., draws $\tilde{\theta}_a \sim \Pscr_a(y_a)$ and $\tilde{R}_{a,n} \sim \Rscr_a(\tilde{\theta}_a)$ for $n = 1, \ldots, T$), and finds a threshold value on the deterministic outside option that makes the arm barely worth trying:
\begin{equation}
        \lambda_a^*(\tilde{\omega}_a) \defeq \sup\left\{ \lambda \in \R ~; ~ \varphi_a(\lambda, \tilde{\omega}_a) \geq 0 \right\}.
\end{equation}
By the definition of $\varphi_a(\lambda, \omega_a)$, the threshold value $\lambda_a^*(\tilde{\omega}_a)$ measures the value of arm $a$ as an opportunity cost of not pulling arm $a$ at all, given a particular outcome $\tilde{\omega}_a$.
We use the value $\lambda_a^*(\tilde{\omega}_a)$ as an index of arm $a$ so that the index policy plays the arm with the largest index, \newedit{i.e.,} $\newedit{A^\textsc{Irs.Index}} = \argmax_a \lambda_a^*(\tilde{\omega}_a)$.

Although the monotonicity of the mapping $\lambda \mapsto \varphi_a(\lambda, \tilde{\omega}_a)$ is not theoretically proven, we observe that the bisection search works sufficiently well in our numerical experiments.
Since each instance of single-armed bandit problems requires $O(T)$ computations to solve, the entire procedure for \newedit{arm selection} requires a run time of $O( c_b \times KT )$, where $c_b$ represents the number of iterations in a bisection search.
See \S \ref{ss-irs-index-impl} for the implementation details.

In addition to the \textsc{Irs.Index} policy described above, some numerical experiments include a heuristic variation of it, \newedit{called} \textsc{Irs.Index*}, that is obtained by using
\begin{equation}
        \varphi_a(\lambda, \omega_a) \defeq \max_{1 \leq n \leq T} \left\{ \sum_{i=1}^n \left( \hat{\mu}_{a,i-1}(\omega_a) - \lambda - \left( \Gamma_i^\lambda(\omega_a) - \Gamma_0^\lambda(\omega_a) \right) \right) \right\},
\end{equation}
instead of \eqref{e-irs-index-worth-trying}.
This alternative formulation yields indices that are relatively stable across the different samples of outcome $\tilde{\omega}_a$.

We note that our index, $\lambda_a^*(\tilde{\omega}_a)$, is a random approximation of the finite-horizon Gittins (FH-Gittins) index studied in \citet{Kaufmann12}, \citet{NinoMora11}, and \citet{Lattimore16}.
The original FH-Gittins algorithm precisely solves the single-armed bandit problem, which is shown to be an optimal stopping problem in which one must decide when to stop pulling the stochastic arm as one's belief state evolves stochastically.
Applying the information relaxation framework to the single-armed bandit problem, we solve, instead, a simple deterministic problem in which one must find a deterministic schedule optimized to a particular belief trajectory associated with a randomly generated outcome $\tilde{\omega}$.
As in the previous algorithms, the penalties help us to obtain a solution close to the optimal stopping policy of the original single-armed bandit problem.

%%% Local Variables:
%%% mode: latex
%%% TeX-master: "info-relax-sampling"
%%% End:

\section{Analysis} \label{s-analysis}
In this section, we provide theoretical analyses that characterize IRS policies and performance bounds in particular for \textsc{TS}, \textsc{Irs.FH}, and \textsc{Irs.V-Zero}.

\begin{remark}[Single-period optimality] \label{prop-last-decision} When $T=1$, all of the policies
  $\pi^\textsc{Irs.FH}$, $\pi^\textsc{Irs.V-Zero}$, $\pi^\textsc{Irs.V-Emax}$, and
  $\pi^\textsc{Irs.Index}$ take the optimal action; i.e., they pull the myopically best arm
  $\newedit{A^*}=\argmax_a \bar{\mu}_a(y_a)$.
\end{remark}

\begin{proposition}[Asymptotic behavior] \label{prop-asymptotic-behavior}
Assume that $\mu_i(\theta_i) \ne \mu_j(\theta_j)$ almost surely for any two distinct arms $i \ne j$. As
$T \nearrow \infty$, the distribution of the $\pi^\textsc{Irs.FH}$'s action converges to that of Thompson sampling:
\begin{equation}
	\lim_{T \rightarrow \infty} \PR\left[ \newedit{A^\textsc{Irs.FH}(T, \mathbf{y})} = a \right] = \PR\left[ \newedit{A^\textsc{TS}(\mathbf{y})} = a \right]
	, \quad \forall a \in \Ascr.
\end{equation}
Similarly, so does the distribution of the $\pi^\textsc{Irs.V-Zero}$'s action:\footnote{We assume a particular selection rule such that $\tilde{a}^\textsc{Irs.V-Zero}=\argmax_a \tilde{n}_a^*$, as discussed in \S \ref{ss-irs-vzero}.}
\begin{equation}
	\lim_{T \rightarrow \infty} \PR\left[ \newedit{A^\textsc{Irs.V-Zero}(T, \mathbf{y})} = a \right] = \PR\left[ \newedit{A^\textsc{TS}(\mathbf{y})} = a \right]
	, \quad \forall a \in \Ascr.
\end{equation}
\end{proposition}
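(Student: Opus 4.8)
The plan hinges on one convergence fact that drives both claims: for each arm $a$, the posterior mean estimate $\mu_{a,n}(\omega) = \E[\mu_a(\theta_a) \mid R_{a,1},\ldots,R_{a,n}]$ converges almost surely to the realized mean $\mu_a(\theta_a)$ as $n \to \infty$. First I would establish this. The sequence $(\mu_{a,n})_{n \geq 0}$ is a Doob martingale with respect to the filtration generated by $R_{a,1}, R_{a,2}, \ldots$, and $\mu_a(\theta_a) \in L^1$ under the integrability assumption, so by Lévy's upward convergence theorem it converges a.s. (and in $L^1$) to $\E[\mu_a(\theta_a) \mid \Fscr_\infty^a]$, where $\Fscr_\infty^a = \sigma(R_{a,1},R_{a,2},\ldots)$. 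By the strong law of large numbers $\tfrac1n\sum_{m=1}^n R_{a,m} \to \mu_a(\theta_a)$ a.s., so $\mu_a(\theta_a)$ is itself $\Fscr_\infty^a$-measurable and the limit equals $\mu_a(\theta_a)$. Since the arms are independent, the vector $(\mu_{1,n}, \ldots, \mu_{K,n})$ converges jointly a.s. to $(\mu_1(\theta_1), \ldots, \mu_K(\theta_K))$.

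For \textsc{Irs.FH}, recall the policy samples $\tilde\omega$ and plays $\argmax_a \mu_{a,T-1}(\tilde\omega)$, whereas \textsc{TS} plays $\argmax_a \mu_a(\tilde\theta_a)$. It is convenient to realize all horizons on one probability space by drawing $\tilde\theta_a$ once and an i.i.d.\ reward sequence $\tilde R_{a,1},\tilde R_{a,2},\ldots$ from ${\cal R}_a(\tilde\theta_a)$, letting $\mu_{a,T-1}$ use its first $T-1$ entries; the marginal law of the action at each $T$ is unchanged by this coupling. Applying the convergence fact gives $\mu_{a,T-1}(\tilde\omega) \to \mu_a(\tilde\theta_a)$ a.s. The no-tie assumption guarantees that a.s.\ the limit vector has a unique maximizer, so the $\argmax$ map is a.s.\ continuous at the limit and $\1\{\argmax_b \mu_{b,T-1}(\tilde\omega) = a\} \to \1\{\argmax_b \mu_b(\tilde\theta_b) = a\}$ a.s.\ for each $a$. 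Taking expectations and invoking bounded convergence yields $\PR[\textsc{Irs.FH}(T,\mathbf{y}) = a] \to \PR[\textsc{TS}(\mathbf{y}) = a]$.

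For \textsc{Irs.V-Zero}, let $a^* \defeq \argmax_a \mu_a(\tilde\theta_a)$ (a.s.\ unique) and $\Delta \defeq \mu_{a^*}(\tilde\theta_{a^*}) - \max_{a \ne a^*}\mu_a(\tilde\theta_a) > 0$ a.s. The goal is to show the optimal allocation $\tilde{\mathbf{n}}^*$ of \eqref{e-inner-problem-irs-vzero} concentrates on $a^*$, i.e.\ $\sum_{a\ne a^*}\tilde n_a^* = o(T)$, so that $\argmax_a \tilde n_a^* = a^*$ for large $T$, matching \textsc{TS}. The workhorse is a value comparison. Fix $\epsilon \in (0,\Delta/2)$ and let $N$ be the random finite index beyond which $|\mu_{a,n}-\mu_a(\tilde\theta_a)| < \epsilon$ for every arm. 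Since allocating all $T$ pulls to $a^*$ is feasible, the optimal value obeys $W^* \ge S_{a^*,T} \ge T\mu_{a^*}(\tilde\theta_{a^*}) - \epsilon T - C''$ for a constant $C''$ absorbing the finitely many burn-in terms; conversely, bounding each $S_{a,n_a^*} \le C_a + n_a^*(\mu_a(\tilde\theta_a)+\epsilon)$ and using $\sum_a n_a^*\mu_a(\tilde\theta_a) \le T\mu_{a^*}(\tilde\theta_{a^*}) - \Delta\sum_{a\ne a^*}n_a^*$ gives $W^* \le C + \epsilon T + T\mu_{a^*}(\tilde\theta_{a^*}) - \Delta\sum_{a\ne a^*}n_a^*$. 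Combining forces $\sum_{a\ne a^*}\tilde n_a^* \le (C+C'')/\Delta + (2\epsilon/\Delta)T$, and letting $\epsilon \downarrow 0$ yields $\sum_{a\ne a^*}\tilde n_a^*/T \to 0$, hence $\tilde n_{a^*}^*/T \to 1$ and $\argmax_a \tilde n_a^* = a^*$ for large $T$. (A single-pull exchange argument gives the right intuition—at an optimum no transfer from $a$ to $a^*$ helps, so $\mu_{a,\tilde n_a^*-1} \ge \mu_{a^*,\tilde n_{a^*}^*}$, which past burn-in contradicts $\Delta > 2\epsilon$ and caps each suboptimal count—but ruling out a small $\tilde n_{a^*}^*$ still needs the value comparison.) Bounded convergence then gives $\PR[\textsc{Irs.V-Zero}(T,\mathbf{y})=a] \to \PR[a^*=a] = \PR[\textsc{TS}(\mathbf{y})=a]$.

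The routine parts are the martingale/SLLN convergence and the final bounded-convergence passages. The main obstacle is the \textsc{Irs.V-Zero} concentration step: the marginal values $\mu_{a,n}$ are random, only converge, are not monotone in $n$, and for small $n$ can sit far from $\mu_a(\tilde\theta_a)$, so one must carefully show that the bounded burn-in contributions are negligible against the leading term of order $T$. The no-tie assumption is exactly what powers this, since it supplies the strict gap $\Delta > 0$ almost surely; without it the optimal allocation could split pulls among tied arms and the $\argmax_a \tilde n_a^*$ tie-break would require separate treatment.
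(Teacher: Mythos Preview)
Your proposal is correct. The \textsc{Irs.FH} part is essentially identical to the paper's proof: both establish $\mu_{a,n}\to\mu_a(\theta_a)$ a.s.\ via the Doob martingale/L\'evy upward theorem plus SLLN, then use uniqueness of the $\argmax$ under the no-tie assumption to pass from a.s.\ convergence of the index vector to convergence of the selected action, finishing with bounded convergence (the paper phrases this last step as ``a.s.\ convergence implies convergence in distribution'').

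For \textsc{Irs.V-Zero} the strategies differ. The paper argues by contradiction with an explicit deviation: assuming $\argmax_a \tilde n_a^*\ne a^*$ for large $T$, it splits into two cases according to whether $\tilde n^*_{a^*}$ has already passed the burn-in threshold $\tilde N$, and in each case exhibits a reallocation (a single-pull swap in Case~1, a block transfer in Case~2) that strictly improves the inner objective, contradicting optimality. Your route is a global value sandwich: bound the optimal inner value from below by the all-$a^*$ allocation and from above by $\sum_a C_a + n_a^*(\mu_a(\tilde\theta_a)+\epsilon)$, then subtract to force $\sum_{a\ne a^*}\tilde n_a^*\le (C+C'')/\Delta + (2\epsilon/\Delta)T$, which gives $\tilde n_{a^*}^*>T/2$ eventually. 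Both arguments rest on the same burn-in idea (finite random $N$ after which all $\mu_{a,n}$ are $\epsilon$-close to their limits); your version is somewhat cleaner in that it avoids the case split and does not need the auxiliary quantity $\tilde M=\sup_{a,n}|\tilde\mu_{a,n}|$ that the paper introduces, while the paper's version is more constructive in that it pinpoints exactly which pulls should be moved. Your parenthetical remark that the single-pull exchange alone is insufficient is apt: it corresponds precisely to the paper's Case~1, and Case~2 is exactly the value-comparison patch you supply.
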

\newedit{$A^\textsc{TS}(\mathbf{y})$, $A^\textsc{Irs.FH}(T, \mathbf{y})$ and $A^\textsc{Irs.V-Zero}(T, \mathbf{y})$} denote the action taken by policies $\pi^\textsc{TS}$, $\pi^\textsc{Irs.FH}$, and $\pi^\textsc{Irs.V-Zero}$, respectively, when the remaining time is $T$ and the current belief is $\mathbf{y}$.
These actions are random variables, since each of these policies uses a randomly sampled outcome $\tilde{\omega}$ of its own.
Remark \ref{prop-last-decision} can be easily verified by observing that, when $T=1$, $r_1(a, \omega) - z_1(a, \omega; T, \mathbf{y}) = \bar{\mu}_a(y_a)$ for any $a \in \Ascr$ for each of the penalty functions.
\newedit{The results in Proposition \ref{prop-asymptotic-behavior} follow from Remark \ref{rem:mean-reward-martingale} stating that the posterior predictive mean reward process converges to the conditional mean reward, i.e., $\lim_{n \rightarrow \infty} \hat{\mu}_{a,n}(\tilde{\omega}) = \mu_a(\tilde{\theta}_a)$}.
The assumption $\mu_i(\theta_i) \ne \mu_j(\theta_j)$ is made to avoid the ambiguity of the tie-breaking rule that is used in \textsc{TS}.

Remark \ref{prop-last-decision} and Proposition \ref{prop-asymptotic-behavior} illustrate that $\pi^\textsc{Irs.FH}$ and $\pi^\textsc{Irs.V-Zero}$ behave like TS during the initial decision epochs, gradually shift toward the myopic scheme, and end up with the optimal decision; by contrast, \textsc{TS} continues to explore.
The transition from exploration to exploitation under these IRS policies occurs smoothly, without relying on an auxiliary control parameter.
While maintaining their recursive structure, IRS policies take into account the time horizon $T$, and naturally balance exploitation and exploration.

\begin{theorem}[Monotonicity of performance bounds] \label{thm-monotonicity}
\textsc{Irs.FH} and \textsc{Irs.V-Zero} monotonically improve the performance bound
\begin{equation} \label{e-monotonicity-1}
	W^\textsc{TS}( T, \mathbf{y} ) \geq W^\textsc{Irs.FH}( T, \mathbf{y} ) \geq W^\textsc{Irs.V-Zero}(T, \mathbf{y}),
\end{equation}
and also
\begin{equation} \label{e-monotonicity-2}
	W^\textsc{TS}(T, \mathbf{y}) \geq W^\textsc{Irs.V-EMax}(T, \mathbf{y}).
\end{equation}
Recall that $W^\textsc{TS}(T, \mathbf{y}) = \mathbb{E}_{\mathbf{y}}\left[ T \times \max_a \mu_a(\theta_a) \right]$ is the conventional regret benchmark.
\end{theorem}
Empirically (\S \ref{s-numerical}), we observe that $W^\textsc{Irs.V-Zero} \geq W^\textsc{Irs.V-EMax}$.
In addition, we have $W^\textsc{Irs.V-EMax} \geq W^\textup{ideal}$ since $W^\textup{ideal}$ is the lowest attainable upper bound (Theorem \ref{thm-strong-duality}).
\newedit{The second inequality \eqref{e-monotonicity-2} holds in a stronger sense: for every outcome $\omega$, the maximal value of the inner problem associated with $W^\textsc{TS}$ is greater than that of the inner problem associated with $W^\textsc{Irs.V-EMax}$.}

While the entire proof is provided in \S \ref{prf-monotonicity}, we highlight here the main ideas.
The first result \eqref{e-monotonicity-1} follows from the monotonicity of the information structure incorporated in each penalty function:
\textsc{TS}, \textsc{Irs.FH}, and \textsc{Irs.V-Zero} \newedit{replace the realized rewards with} $\E(r_t|\bm{\theta})$, $\E(r_t | \hat{\bm{\mu}}_{T-1})$, and $\E(r_t|\newedit{H_{t-1}})$\deledit{ at time $t$}, respectively, \newedit{where} $\bm{\theta}$ is more informative than $\hat{\bm{\mu}}_{T-1}$, and $\hat{\bm{\mu}}_{T-1}$ is more informative than \newedit{$H_{t-1}$} \newedit{for the DM} to infer the value of future reward $r_t$.
Based on this observation, we use a variant of Jensen's inequality to prove the results.\footnote{\label{foot-monotonicity}
	We remark that $W^\textsc{Irs.FH} \geq W^\textsc{Irs.V-Zero}$ is not an immediate consequence of the fact that $\sigma(\hat{\bm{\mu}}_{T-1})$ is a stronger filtration than $\sigma(H_{t-1})$.
	It \newedit{further relies on a particular structure} of MAB problems: \newedit{the rewards of an arm are independent and identically distributed conditionally on the parameter}.
	%Recall that \textsc{Irs.V-Zero} penalizes for the additional gain from knowing the next reward but not for the additional gain from knowing the next belief state.
	%When these two components are not aligned (this is not the case in MAB problems), penalizing only the first component may not yield a tighter bound.
	See \S \ref{prf-monotonicity2} for a further discussion.
}
The second result \eqref{e-monotonicity-2} is proven based on Theorem 4 of \cite{Desai12}, which says that if an approximate value function $\widehat{V}$ is a supersolution (Definition \ref{def-supersolution}) to the Bellman equation and a penalty function $\hat{z}$ approximates the ideal penalty with $\widehat{V}$ in place of $V^*$, the resulting performance bound $W^{\hat{z}}$ is smaller than $\widehat{V}$.
By showing that $W^\textsc{TS}$ is a supersolution to \eqref{e-bellman}, we prove that $W^\textsc{Irs.V-EMax} \leq W^\textsc{TS}$ since $z_t^\textsc{Irs.V-EMax}$ is constructed upon $W^\textsc{TS}$.

%\todo{Seungki: add footnote that proof is more complex than nested filtration}

Although Theorem \ref{thm-monotonicity} compares the performance bound among IRS algorithms, we interpret that its tightness, $W^z-V^*$, reflects the degree of optimism that its corresponding policy $\pi^z$ possesses.
Recall that $W^z$ is the expected value of the best possible payoff when the DM is informed of some future outcomes in advance.
The weak duality $W^z \geq V^*$ implies that IRS policies are basically optimistic: an IRS policy takes an action as if it can earn more than the optimal policy in the belief that the sampled outcome is the ground truth.
In this sense, the gap $W^z - V^*$ captures how optimistically the policy $\pi^z$ interprets the sampled outcome.
When $W^z - V^*$ is relatively small for a certain penalty function $z_t$, we may conclude that the penalty function $z_t$ makes the DM less optimistic and induces a policy $\pi^z$ that performs fewer random explorations.

We further compare the performance of IRS policies using an alternative suboptimality measure.
We define the ``suboptimality gap'' of an IRS policy $\pi^z$ to be $W^z(T, \mathbf{y}) - V(\pi^z, T, \mathbf{y})$, and analyze it instead of the conventional (Bayesian) regret, $W^\textsc{TS}(T, \mathbf{y}) - V(\pi^z, T, \mathbf{y})$.
While its non-negativity is guaranteed by weak duality (Theorem \ref{thm-weak-duality}), more desirably, the optimal policy yields a zero suboptimality gap (Theorem \ref{thm-strong-duality} and Remark \ref{rem-ideal-penalty-optimality}).
This measure coincides with the conventional regret measure only for TS.

\newedit{
  \begin{theorem}[Suboptimality gap for natural exponential families] \label{thm-suboptimality}
    Consider an MAB instance with a reward distribution that is a natural exponential family
    distribution, as described in \S \ref{ss-exponential-family}, in which each arm $a \in \Ascr$ is
    described with a log-partition function $A_a(\theta_a)$ and a hyperparameter
    $y_a = (\xi_a, \nu_a)$.  Suppose that all the log-partition functions are $L$-smooth, i.e.,
\begin{equation}
	\frac{d^2}{d\theta_a^2} A_a(\theta_a) \leq L, \quad \forall \theta_a \in \Theta_a, ~ a \in \Ascr.
\end{equation}
Further assume that $\nu_a = \nu$ for all $a \in \Ascr$.
Then, for any $T \geq 2$, we have
\begin{align}
	W^\textsc{TS}(T, \mathbf{y}) - V( \pi^\textsc{TS}, T, \mathbf{y} ) &\leq 2 \sqrt{L}\left[ \frac{1}{\sqrt{\nu}} + \sqrt{ 2 \log T } \times \left( \frac{K}{\sqrt{\nu}} + 2 \sqrt{KT} \right) \right],
	\label{e-suboptimality-ts}
	\\
	W^\textsc{Irs.FH}(T, \mathbf{y}) - V( \pi^\textsc{Irs.FH}, T, \mathbf{y} ) &\leq 2 \sqrt{L}\left[ \frac{1}{\sqrt{\nu}} + \sqrt{ 2 \log T } \times \left( \frac{K}{\sqrt{\nu}} + 2 \sqrt{KT} - \frac{1}{3} \sqrt{ \frac{T}{K} } \right) \right],
	\label{e-suboptimality-irs-fh}
	\\
	W^\textsc{Irs.V-Zero}(T, \mathbf{y}) - V( \pi^\textsc{Irs.V-Zero}, T, \mathbf{y} ) &\leq \sqrt{L}\left[ \frac{1}{\sqrt{\nu}} + \sqrt{ 2 \log T } \times \left( \frac{K}{\sqrt{\nu}} + 2 \sqrt{KT} - \frac{1}{3} \sqrt{ \frac{T}{K} } \right) \right].
	\label{e-suboptimality-irs-vzero}
	%\\
	%W^\textup{ideal}(T, \mathbf{y}) - V( \pi^\textup{ideal}, T, \mathbf{y} ) &= 0
\end{align}
\end{theorem}
\begin{remark} \label{rem:bernoulli-mab}
	For a Bernoulli MAB with symmetric arms, each of which has a prior $\text{Beta}(\alpha, \beta)$ for its mean reward, we have $L = \frac{1}{2}$ and $\sqrt{\nu} = \sqrt{\alpha+\beta}$.
\end{remark}
\begin{remark} \label{rem:gaussian-mab}
	For a Gaussian MAB with symmetric arms, each of which has a prior $\Nscr(m, v^2)$ for its mean reward and a noise variance $\sigma^2$, we have $L = \sigma$ and $\sqrt{\nu} = v/\sigma$.
\end{remark}
}
Theorem \ref{thm-suboptimality} indirectly shows to the improvements to the suboptimality gaps:
although all the bounds have the same asymptotic order of $O(\sqrt{KT \log T})$, the IRS policies improve the leading coefficient or the additional term.\footnote{\newedit{
	Recall that $W^\textsc{TS} - V(\pi^\textsc{TS})$ represents the Bayesian regret of \textsc{TS}.
	It will be worth mentioning some known results that may be comparable to the bound \eqref{e-suboptimality-ts} established in Theorem \ref{thm-suboptimality}.
	
	For the cases where the reward distributions have a bounded support in $[0,1]$,
        \citet{Bubeck13} have shown that the Bayesian regret of \textsc{TS} is bounded from above
        by $14 \sqrt{KT}$; and further shown that its asymptotic order is unimprovable in the
        sense that for any policy there exists a prior distribution such that the policy
        experiences Bayesian regret no smaller than $\frac{1}{20} \sqrt{KT}$. However, this does not imply that the regret of the Bayesian optimal policy is bounded from below by $\frac{1}{20}\sqrt{KT}$ in the context of Theorem~\ref{thm-suboptimality}, since we consider a specific prior and the policy optimized to that prior.
	
	%For Gaussian MAB in the Bayesian setting, \citet{Slivkins19} have established the regret bound of $O( \sqrt{KT \log T} )$.
	
	For Gaussian MAB in the non-Bayesian setting, \citet{Agrawal13} have shown that the regret of \textsc{TS} is $O( \sqrt{KT \log T} )$; and further shown that its asymptotic order is unimprovable in the sense that for any policy there exists an instance (i.e., the set of true mean values) such that the policy's regret is at least $\Omega(\sqrt{KT \log K})$.
	
	While there is no result in the literature that is comparable to the other bounds \eqref{e-suboptimality-irs-fh} and \eqref{e-suboptimality-irs-vzero}, we conjecture that they will be tight just as the bound for \textsc{TS} \eqref{e-suboptimality-ts} is, given the fact all three policies exhibit the identical asymptotic behavior for large $T$ (Proposition~\ref{prop-asymptotic-behavior}).
}}
\newedit{These results hold for a wide range of MAB problems including the Bernoulli MAB and the Gaussian MAB as stated in Remarks \ref{rem:bernoulli-mab} and \ref{rem:gaussian-mab}.
}
%\footnote{
%	\citet{Bubeck13} have shown that the Bayesian regret of \textsc{TS} is bounded by $14 \sqrt{KT}$ when the rewards have a bounded support in $[0,1]$, as in the case of Beta-Bernoulli MAB, and it cannot be improved in terms of asymptotic order.
%	Despite its lower asymptotic order, the actual number given in \eqref{e-suboptimality-ts} is tighter than $14 \sqrt{KT}$ for small $T$.
%}

The proof of Theorem \ref{thm-suboptimality}, provided in \S \ref{prf-suboptimality}, relies on an essential property of IRS policies that generalizes the ``probability matching'' property of \textsc{TS}, i.e., a matching between nature's randomness and \newedit{the} decision maker's randomness.
It is well known that \textsc{TS} is randomized in a way that, conditional on past observations, the probability that an action $a$ is chosen equals the probability that the action $a$ is chosen by someone who knows the parameters.
Analogously, the IRS policy $\pi^z$ is randomized in a way that, conditional on past observations, the probability that an action $a$ is chosen equals the probability that the action $a$ is chosen by someone who knows the entire future but is penalized (Proposition \ref{prop-generalized-posterior-sampling}).
Recall that the penalties are designed to penalize the benefit from having additional future information.
A better choice of penalty function would prevent the policy $\pi^z$ from picking an action that is overly optimized for a randomly sampled future realization, which in turn would improve the quality of the decision making.

Given the above observation, our proof \newedit{utilizes} the approach taken by \citet{Russo14} \newedit{that exploits} the probability matching property of \textsc{TS} to bound its Bayesian regret.
More specifically, for each penalty function, we carefully construct a sequence of confidence intervals on the mean reward such that the corresponding policy's instantaneous suboptimality at each time (loss against the hindsight solution) is bounded by the width of the confidence interval \deledit{with high probability}\newedit{approximately}.
\newedit{For} a better penalty function, the confidence intervals can be made tighter so that the total suboptimality can also be bounded more effectively.
\newedit{In our analysis, the natural exponential family is assumed in order to analyze the concentration of posterior distribution in a closed form, and the smoothness condition on the log-partition function is assumed in order to guarantee that the reward distribution is sub-Gaussian, whereas \citet{Russo14} consider an arbitrary reward distribution with a bounded support.}

%%% Local Variables:
%%% mode: latex
%%% TeX-master: "info-relax-sampling"
%%% End:

\section{Numerical Experiments} \label{s-numerical}
\subsection{Experimental Setup}

We conduct numerical simulations to evaluate the effectiveness of our framework in comparison to alternative algorithms.
In addition to the IRS algorithms discussed so far, we consider other recently developed algorithms that are particularly suitable for a Bayesian setting: the Bayesian upper confidence bound \citep{Kaufmann12} (\textsc{Bayes-UCB}, with a quantile of $1 - \frac{1}{t}$), information-directed sampling \citep{Russo17} (\textsc{IDS}), the optimistic Gittins index \citep{Farias18} (\textsc{OGI}, one-step look-ahead approximation with a discount factor of $\gamma_t = 1 - \frac{1}{t}$), \newedit{and the Lagrangian index policies suggested in \citet{Brown20} (\textsc{Lagr-RT} and \textsc{Lagr-OT}, with a random and an optimal tie-breaking rule, respectively)}.

Our numerical experiments include Beta-Bernoulli MABs and Gaussian MABs.
Given an MAB problem instance specified by the prior distribution $\Pscr( \mathbf{y} )$ and the reward distribution $\Rscr$, we simulate the policies and calculate the IRS bounds with respect to the different values of time horizon $T$.

Let $S$ be the number of simulations we perform.
For each $s \in \{1, \ldots, S\}$, we first sample the parameters $\theta_a^{(s)} \sim \Pscr_a(y_a)$ and the rewards $R_{a,n}^{(s)} \sim \Rscr_a(\theta_a^{(s)})$ for all $n \in \{1, \ldots, T_\text{max}\}$ and $a \in \Ascr$, which is equivalent to sampling an outcome $\omega^{(s)} \sim \Iscr(\mathbf{y})$.
Given the $s^\text{th}$ sampled outcome $\omega^{(s)}$, for each time horizon $T \in \{ 5,10,15,\ldots,T_{\max}\}$, we simulate each policy $\pi$ (that may utilize the time horizon $T$);
i.e., at each time $t=1,\ldots,T$, the policy makes a decision\footnote{\newedit{
	Recall that IRS policies are randomized policies that perform their own simulations at each time along the sample path. 
	This posterior sampling procedure is independent of the random generation of true outcomes.
}} \newedit{on} which arm to pull, $A_t^\pi$, and then the associated reward, $r_t( \mathbf{A}_{1:t}^\pi, \omega^{(s)} ) = R_{A_t^\pi, n_t(\mathbf{A}_{1:t}^\pi, A_t^\pi)}^{(s)}$, is revealed accordingly.
After simulating one sample path, $\sum_{t=1}^T \mu_{A_t^\pi}( \theta_{A_t^\pi}^{(s)} )$ is recorded as the performance of $\pi$ for the $s^\text{th}$ sample, and the expected performance $V(\pi, T, \mathbf{y})$ is measured by its sample average across $S$ samples for each $T$.

In order to compute IRS bounds, we use the same set of samples $\omega^{(1)}, \ldots, \omega^{(S)}$.
For each penalty function $z$ and for each $T \in \{ 5,10,\ldots,T_{\max}\}$, we solve the associated inner problems with respect to $\omega^{(1)}, \ldots, \omega^{(S)}$, and the IRS bound $W^z(T, \mathbf{y})$ is evaluated by taking the average of the maximal values over $S$ instances.

More explicitly, we use the following sample averages to calculate $V(\pi, T, \mathbf{y})$ and $W^z(T, \mathbf{y})$:
\begin{equation}
        V(\pi, T, \mathbf{y}) \approx \frac{1}{S} \sum_{s=1}^S \left( \sum_{t=1}^T \mu_{A_t^\pi}( \theta_{A_t^\pi}^{(s)} ) \right)
        , \quad
        W^z(T, \mathbf{y}) \approx \frac{1}{S} \sum_{s=1}^S \max_{\mathbf{a}_{1:T} \in \Ascr^T} \left\{ \sum_{t=1}^T r_t( \mathbf{a}_{1:t}, \omega^{(s)} ) - z_t( \mathbf{a}_{1:t}, \omega^{(s)} ) \right\}.
\end{equation}
Note again that the same outcome $\omega^{(s)}$ is used across the different values of time horizon $T$ and across different algorithms.
Sharing the randomness enhances the consistency of the estimates.
In what follows, we use 20,000 samples (i.e., $S=20,000$).

Based on $V(\pi, T, \mathbf{y})$ and $W^\textsc{TS}(T, \mathbf{y})$ measured with the sample averages, we calculate the Bayesian regret of a policy $\pi$:
\begin{align}
                \text{BayesRegret}( \pi, T, \mathbf{y} )
                       & \defeq \E\left[ \sum_{t=1}^T \max_a \mu_a(\theta_a) - \mu_{A_t^\pi}(\theta_{A_t^\pi}) \right]
                        = W^\textsc{TS}(T, \mathbf{y}) - V(\pi, T, \mathbf{y}),
\end{align}
which is a conventional measure in performance analysis of Bayesian algorithms as discussed in \S \ref{ss-ts-revisited}.
We further calculate the regret (lower) bound obtained from a IRS penalty function $z_t$:
\begin{align}
                \text{RegretBound}( z, T, \mathbf{y} )
                		&\defeq W^\textsc{TS}(T, \mathbf{y}) - W^z(T, \mathbf{y}).
\end{align}
By the weak duality (Theorem \ref{thm-weak-duality}), we have $\text{BayesRegret}( \pi, T, \mathbf{y} ) \geq \text{RegretBound}( z, T, \mathbf{y} )$ for any $\pi \in \Pi_\mathbb{F}$.
By its definition, the regret bound produced by \textsc{TS} is zero.

\subsection{Results}

\noindent \textbf{Bernoulli MAB with two arms ($K=2$).}
We first provide the results for a Bernoulli MAB in which
\begin{equation}
        \mu_a \sim \text{Beta}(1,1), \quad R_{a,n} \sim \text{Bernoulli}(\mu_a), \quad \forall a \in \{1, 2\}.
\end{equation}
We consider relatively short time horizons ($\leq T_{\max}=200$) since we are focusing on a finite-horizon regime rather than an asymptotic regime.
In this particular case, since the state (belief) space is discrete and small in size, $O(T^4)$, we are able to solve the Bellman equations \eqref{e-bellman} numerically, and thus we can implement the Bayesian optimal policy, which is labeled as \textsc{Opt} in what follows.

Figure \ref{fig-bern-2arms} shows the regrets (solid lines) of all the policies discussed above
and the regret bounds (dashed lines) produced by the IRS algorithms.\footnote{\newedit{
There also exists a performance bound induced by the Lagrangian index policies.
We omit it from Figure \ref{fig-bern-2arms}, however, since that bound is not so tight and thus not informative to be displayed in the same plot; e.g., when $T=200$, the associated regret bound is $-12.54$, which is far below the current x-axis.
}}  Table~\ref{tbl-bern-2arms} provides further details including the percentage improvement in regret
  over \textsc{TS}, i.e.,
  \[
    \text{RegretImprovement}(\pi, T, \mathbf{y}) \defeq
    1-\frac{\text{BayesRegret}( \pi, T, \mathbf{y} )}
    {\text{BayesRegret}( \textsc{TS}, T, \mathbf{y})},
  \]
  and the improvement in regret bound over \text{TS} benchmarked to the regret
  of the best performing algorithm, i.e.,
  \[
    \text{BoundImprovement}(\pi, T, \mathbf{y}) \defeq
    \frac{ \text{RegretBound}( z, T, \mathbf{y} ) -\text{RegretBound}( z^\textsc{TS}, T, \mathbf{y}
      ) }{\min_{\pi'} \text{BayesRegret}( \pi', T, \mathbf{y}) }.
  \]
In Figure~\ref{fig-bern-2arms}, note that lower regret curves are better, and higher bound
curves are better.

\iffalse
We first observe that all policies are nearly optimal compared to \textsc{Opt}.
Even \textsc{TS}, which exhibits the worst performance, has a regret differs by only $1.5$ points from that of \textsc{Opt} when $T=200$, which means that \textsc{TS} chose the suboptimal arms only six times more than \textsc{Opt} did, on average.
\fi

Comparing the IRS algorithms (\textsc{TS}, \textsc{Irs.FH}, \textsc{Irs.V-Zero}, \textsc{Irs.V-Emax}, and \textsc{Opt}), we first observe a clear improvement in both the performance of policies and the tightness of bounds, as we adopt a more complicated penalty function, \newedit{albeit one that} requires a longer run time:
as visualized in Figure \ref{fig-bern-2arms}, the regret curve approaches the \textsc{Opt} curve from above and the bound curve approaches it from below, where the \textsc{Opt} curve represents the lowest attainable regret that is the highest attainable regret bound at the same time.
The suboptimality gap (the gap between a regret curve and its corresponding bound curve) becomes smaller, which is consistent with the implication of Theorem \ref{thm-suboptimality}.

Finally, we note that the \textsc{Irs.Index} policy is outperforming all the other policies; i.e., the regret curve of \textsc{Irs.Index} is surprisingly close to the \textsc{Opt} curve.
Although it is developed based on \textsc{Irs.V-EMax}, it performs better than \textsc{Irs.V-EMax}, and the reasons for that still need to be researched.

\begin{figure}[H]
        \centering
        \includegraphics[width=0.9\linewidth]{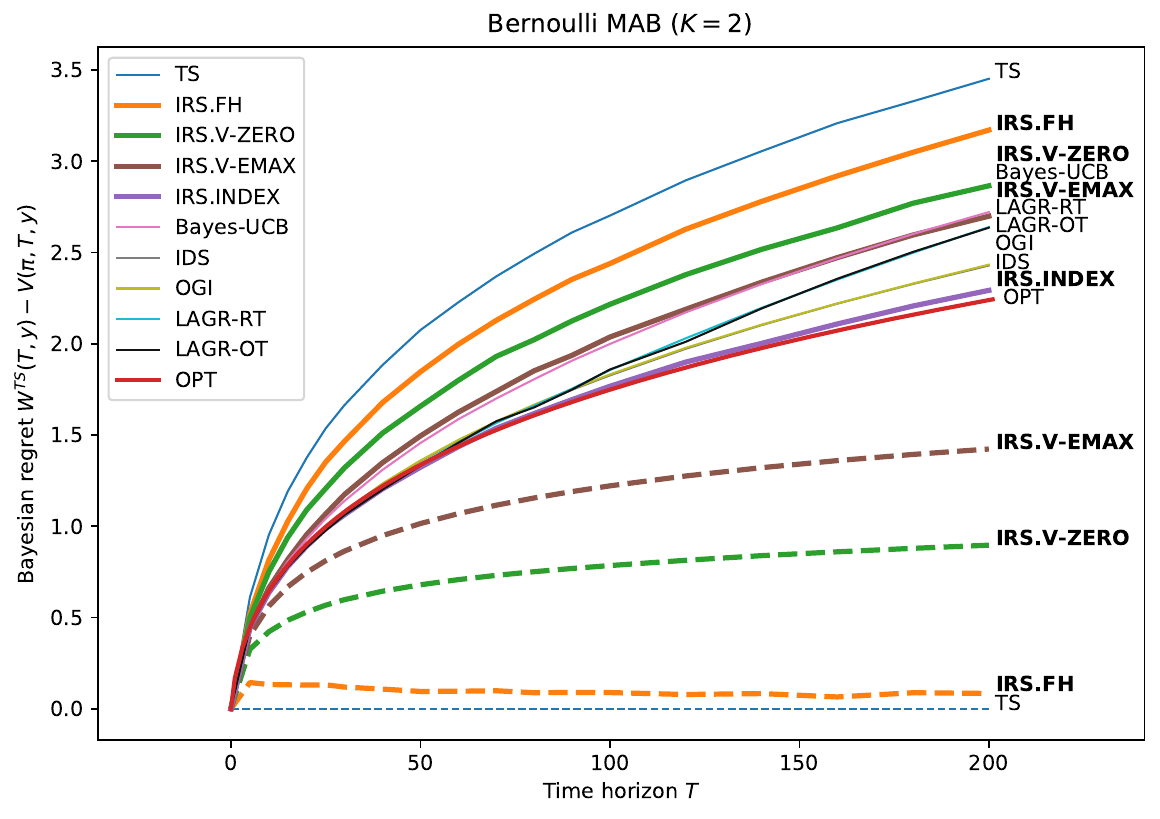}
        \caption{Regret plot for a Bernoulli MAB with two arms.
        		The solid lines represent the (Bayesian) regret of algorithms, $W^\textsc{TS}(T, \mathbf{y}) - V(\pi, T, \mathbf{y})$, and the dashed lines represent the regret bounds that IRS algorithms produce, $W^\textsc{TS}(T, \mathbf{y}) - W^z(T, \mathbf{y})$.
		\newedit{Each data point reports the average across 20,000 simulations.}}
        \label{fig-bern-2arms}
\end{figure}

\begin{table}[H]
\centering
{\small

\begin{tabular}{ c c c c c c }
 \toprule
\thead{Algorithm} & \thead{Bayesian regret \\ (s.e.)} & \thead{Regret \\ improvement} & \thead{Regret lower \\ bound (s.e.)} & \thead{Bound \\ improvement} & \thead{Policy\\run time} \\
 \midrule
 \textsc{TS} & 3.45 (0.021) & 0.0\% & 0.00 (--) & 0.0\% &  17 ms  \\ 
 \textsc{Irs.FH} & 3.17 (0.020) & 8.1\% & 0.08 (0.040) & 3.8\% &  37 ms  \\ 
 \textsc{Irs.V-Zero} & 2.87 (0.021) & 17.0\% & 0.90 (0.055) & 40.0\% &  527 ms  \\ 
 \textsc{Irs.V-EMax} & 2.70 (0.020) & 21.8\% & 1.42 (0.326) & 63.6\% &  29.5 sec  \\ 
 \textsc{Irs.Index} & 2.29 (0.023) & 33.6\% & -- & -- &  3.6 sec  \\ 
 \textsc{Bayes-UCB} & 2.72 (0.020) & 21.2\% & -- & -- &  44 ms  \\ 
 \textsc{IDS} & 2.43 (0.028) & 29.6\% & -- & -- &  3.7 sec  \\ 
 \textsc{OGI} & 2.43 (0.028) & 29.5\% & -- & -- &  262 ms  \\ 
 \newedit{\textsc{Lagr-RT}} & \newedit{2.64 (0.046)} & \newedit{23.5\%} & \newedit{-12.54} & \newedit{--} &  \newedit{19 ms*}  \\ 
 \newedit{\textsc{Lagr-OT}} & \newedit{2.64 (0.046)} & \newedit{23.6\%} & \newedit{-12.54} & \newedit{--} &  \newedit{14 ms*}  \\ 
 \textsc{Opt} & \textbf{2.24} (--) & 35.1\% & \textbf{2.24} (--) & 100.0\% &  --  \\ 

 \bottomrule
\end{tabular}

}
\caption{ \newedit{Simulation results for} a Bernoulli MAB with two arms when $T=200$.  The best
  results are emphasized with bold letters.  The third and fifth columns show the percentage
  improvements over \textsc{TS} in regret and in bound respectively; e.g., \textsc{Irs.V-EMax}
  achieves a regret that is 21.8\% better than that of \textsc{TS}, and yields a regret bound that
  accounts for 63.7\% of the lowest regret observed
  empirically. \\
  The last column shows the average time required \newedit{for a policy to make decisions along one sample path including the time required posterior sampling for the case of IRS policies.
  ${}^*$\textsc{Lagr-RT} and \textsc{Lagr-OT} require substantial
    offline computation prior to simulation. This takes around 20 hours in the setting of this
    simulation.}}
\label{tbl-bern-2arms}
\end{table}

\noindent \textbf{Bernoulli MAB with ten arms ($K=10$).}  We next consider a Bernoulli MAB with
ten arms and $T_{\max} = 500$.  \newedit{\textsc{Irs.V-EMax} and \textsc{Opt} are omitted from
  this simulation due to their computational cost, and so are \textsc{Lagr-RT} and \textsc{Lagr-OT}
  for long horizons\footnote{\label{foot:lagrangian-index}\newedit{ \textsc{Lagr-RT} and
      \textsc{Lagr-OT} require substantial offline pre-computation. This involves a convex
      optimization problem with $T$ decision variables, where a single evaluation of
      the objective function requires $\Theta(T^3)$ operations.  As recommended by \citet{Brown20}, we
      have implemented a cutting-plane method using a commercial optimization software (Gurobi),
      but it takes over a week to complete the pre-computation when $T=350$.  }} ($T > 350$).}
Figure \ref{fig-bern-10arms} and Table \ref{tbl-bern-10arms} show the simulation results.  We
again observe a monotonic improvement in the performance of policies and the tightness of bounds
among IRS algorithms, and the \textsc{Irs.Index} policy still performs best.

\begin{figure}[H]
        \centering
        \includegraphics[width=0.9\linewidth]{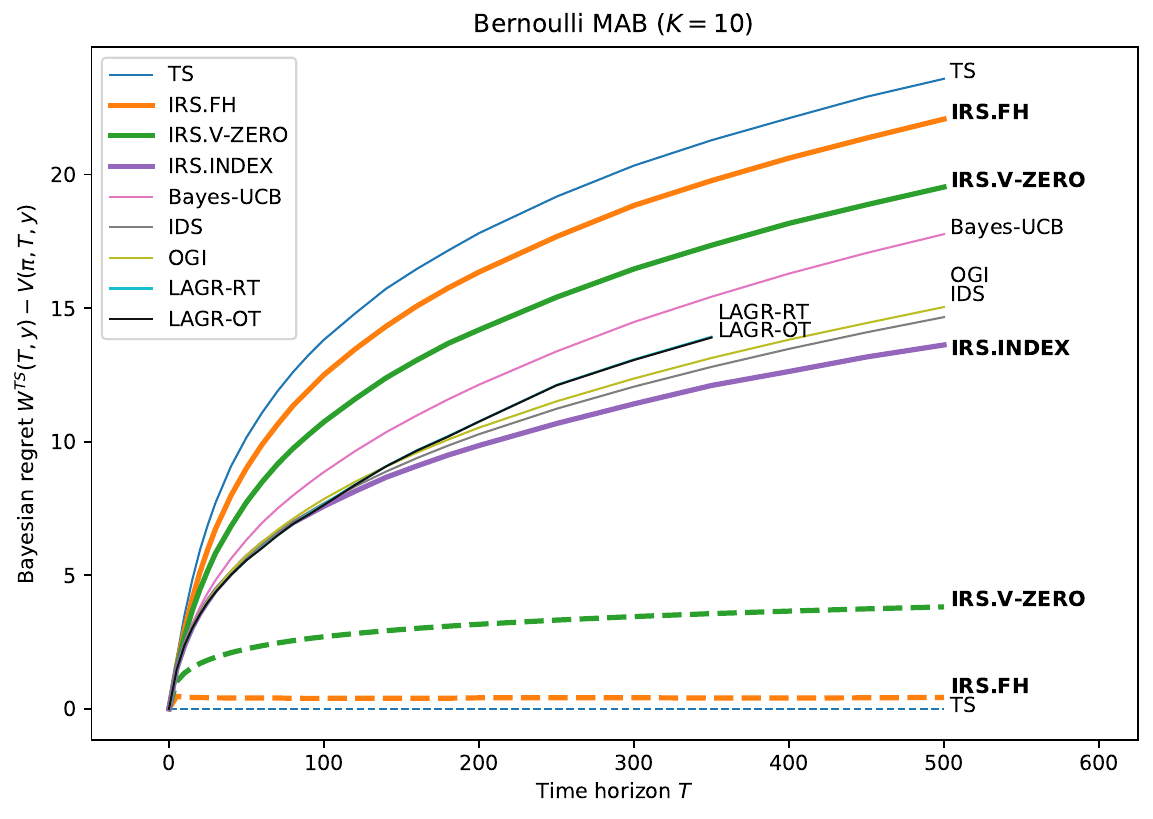}
        \caption{Regret plot for a Bernoulli MAB with ten arms.
        		\newedit{\textsc{Lagr-RT} and \textsc{Lagr-OT} are simulated only for $T \leq 350$ due to the computational cost (see Footnote \ref{foot:lagrangian-index}).}}
        \label{fig-bern-10arms}
\end{figure}

\begin{table}[H]
\centering
%\floatconts
  {%
        \small
        
\begin{tabular}{ c c c c c c }
 \toprule
\thead{Algorithm} & \thead{Bayesian regret \\ (s.e.)} & \thead{Regret \\ improvement} & \thead{Regret lower \\ bound (s.e.)} & \thead{Bound \\ improvement} & \thead{Policy\\run time} \\
 \midrule
 \textsc{TS} & 23.59 (0.078) & 0.0\% & 0.00 (--) & 0.0\% &  50 ms  \\ 
 \textsc{Irs.FH} & 22.08 (0.076) & 6.4\% & 0.43 (0.042) & 4.0\% &  300 ms  \\ 
 \textsc{Irs.V-Zero} & 19.54 (0.074) & 17.2\% & \textbf{3.82} (0.058) & 35.6\% &  17.0 sec  \\ 
 \textsc{Irs.Index} & \textbf{13.62} (0.080) & 42.2\% & -- & -- &  56.2 sec  \\ 
 \textsc{Bayes-UCB} & 17.77 (0.077) & 24.7\% & -- & -- &  140 ms  \\ 
 \textsc{IDS} & 14.67 (0.093) & 37.8\% & -- & -- &  16.4 sec  \\ 
 \textsc{OGI} & 15.04 (0.092) & 36.2\% & -- & -- &  2.6 sec  \\ 

 \bottomrule
\end{tabular}

  }
  \caption{\newedit{Simulation results} for a Bernoulli MAB with ten arms when $T=500$.}% caption command
  \label{tbl-bern-10arms}% label
\end{table}

\newpage
\noindent \textbf{Gaussian MABs ($K=2$ or $10$).}
We next consider Gaussian MABs in which
\begin{equation}
	\mu_a \sim \Nscr(0,1^2), \quad R_{a,n} \sim \Nscr(\mu_a, 1^2), \quad \forall a \in \{1,\ldots,K\}.
\end{equation}
Figure \ref{fig-gauss-2arms} and Table \ref{tbl-gauss-2arms} show the case of two arms ($K=2$), and Figure \ref{fig-gauss-10arms} and Table \ref{tbl-gauss-10arms} show the case of ten arms ($K=10$).
\newedit{The algorithms \textsc{Lagr-RT} and \textsc{Lagr-OT} are not implemented for Gaussian
  MABs since they require either discrete belief states or some form of state discretization.}
The results are similar to those of Bernoulli MABs.

\begin{figure}[H]
        \centering
        \includegraphics[width=0.9\linewidth]{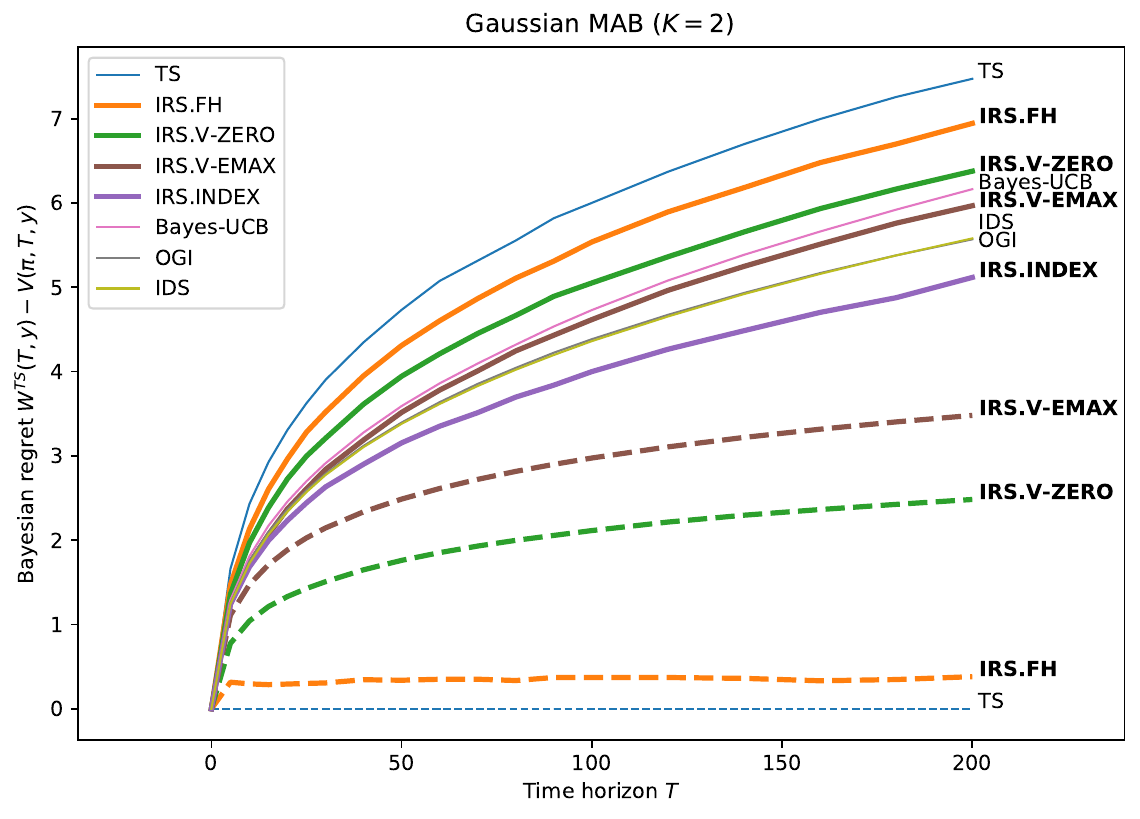}
        \caption{Regret plot for a Gaussian MAB with two arms.}
        \label{fig-gauss-2arms}
\end{figure}

\begin{table}[H]
\centering
  {%
        \small
        
\begin{tabular}{ c c c c c c }
 \toprule
\thead{Algorithm} & \thead{Bayesian regret \\ (s.e.)} & \thead{Regret \\ improvement} & \thead{Regret lower \\ bound (s.e.)} & \thead{Bound \\ improvement} & \thead{Policy\\run time} \\
 \midrule
 \textsc{TS} & 7.47 (0.047) & 0.0\% & 0.00 (--) & 0.0\% &  17 ms  \\ 
 \textsc{Irs.FH} & 6.94 (0.045) & 7.1\% & 0.38 (0.100) & 7.4\% &  37 ms  \\ 
 \textsc{Irs.V-Zero} & 6.38 (0.048) & 14.7\% & 2.48 (0.133) & 48.5\% &  625 ms  \\ 
 \textsc{Irs.V-EMax} & 5.97 (0.044) & 20.2\% & \textbf{3.48} (1.154) & 68.0\% &  13.3 sec  \\ 
 \textsc{Irs.Index} & \textbf{5.12} (0.054) & 31.5\% & -- & -- &  2.2 sec  \\ 
 \textsc{Bayes-UCB} & 6.16 (0.045) & 17.5\% & -- & -- &  38 ms  \\ 
 \textsc{IDS} & 5.58 (0.068) & 25.3\% & -- & -- &  679 ms  \\ 
 \textsc{OGI} & 5.57 (0.067) & 25.5\% & -- & -- &  196 ms  \\ 

 \bottomrule
\end{tabular}

  }
%\floatconts
  \caption{Simulation results for a Gaussian MAB with two arms when $T=200$.}% caption command
  \label{tbl-gauss-2arms}% label
\end{table}

\begin{figure}[!h]
        \centering
        \includegraphics[width=0.9\linewidth]{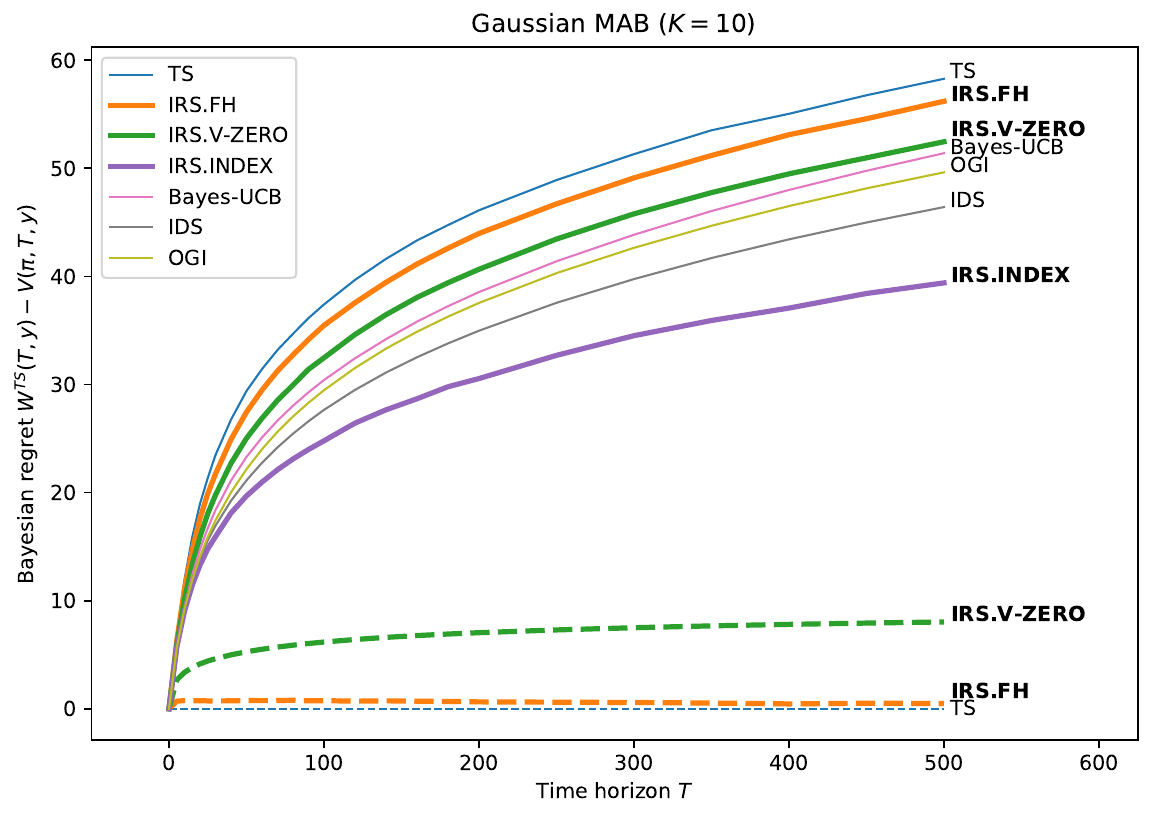}
        \caption{Regret plot for a Gaussian MAB with ten arms.}
        \label{fig-gauss-10arms}
\end{figure}

\begin{table}[!htbp]
\centering
%\floatconts
  {%
        \small
        
\begin{tabular}{ c c c c c c }
 \toprule
\thead{Algorithm} & \thead{Bayesian regret \\ (s.e.)} & \thead{Regret \\ improvement} & \thead{Regret lower \\ bound (s.e.)} & \thead{Bound \\ improvement} & \thead{Policy\\run time} \\
 \midrule
 \textsc{TS} & 58.28 (0.180) & 0.0\% & 0.00 (--) & 0.0\% &  35 ms  \\ 
 \textsc{Irs.FH} & 56.20 (0.180) & 3.6\% & 0.48 (0.156) & 1.2\% &  215 ms  \\ 
 \textsc{Irs.V-Zero} & 52.46 (0.188) & 10.0\% & \textbf{8.04} (0.216) & 20.4\% &  13.7 sec  \\ 
 \textsc{Irs.Index} & \textbf{39.40} (0.244) & 32.4\% & -- & -- &  30.4 sec  \\ 
 \textsc{Bayes-UCB} & 51.40 (0.178) & 11.8\% & -- & -- &  77 ms  \\ 
 \textsc{IDS} & 46.41 (0.324) & 20.4\% & -- & -- &  4.0 sec  \\ 
 \textsc{OGI} & 49.63 (0.335) & 14.8\% & -- & -- &  1.6 sec  \\ 

 \bottomrule
\end{tabular}

  }
  \caption{Simultion results for a Gaussian MAB with ten arms when $T=500$.}% caption command
  \label{tbl-gauss-10arms}% label
\end{table}

\noindent \textbf{Gaussian MAB with different noise variances ($K=5$).}
We next consider a problem where
\begin{equation}
        \mu_a \sim \Nscr(0,1^2), \quad R_{a,n} \sim \Nscr(\mu_a, \sigma_a^2), \quad \forall a \in \{1,\ldots,5\}
\end{equation}
and $(\sigma_1, \sigma_2, \sigma_3, \sigma_4, \sigma_5) = (0.1,~ 0.4,~ 1,~ 4,~ 10)$.
In this MAB instance, it is particularly crucial for the algorithms to consider how much the DM can learn about each of the arms during the remaining time periods, since the difficulty of estimating the mean reward of an arm $a$ heavily depends on the noise level $\sigma_a$ that varies across the arms.\footnote{
        In order for the posterior distribution to be concentrated so as to have a standard deviation of $0.1$, for example, one observation is enough for arm 1 whereas 100 and 10,000 observations are required for arm 3 and arm 5, respectively.
}

As shown in Figure \ref{fig-gauss-5arms-asym}, \textsc{Bayes-UCB} shows a particularly poor performance, as it keeps pulling arm 5 without considering the fact that arm 5 is too noisy to be learnt within such a short period of time (i.e., $T \leq 500$).
By contrast, we observe that our IRS policies and \textsc{IDS} algorithm outperform \newedit{the} \textsc{Bayes-UCB}, \textsc{OGI}, and \textsc{TS} algorithms, since they explicitly take into account the value of exploration by quantifying the informativeness of a new observation for each arm (more specifically, by considering how the belief will change as a new reward realization is revealed).
Notably, the \textsc{Irs.FH} policy, which is a very simple modification of \textsc{TS}, significantly improves \newedit{the performance of} \textsc{TS} without degrading its computational efficiency.

The example also illustrates the significance of having a tighter performance bound.
If the benchmark is set to $W^\textsc{Irs.V-Zero}$, when $T=500$, the \textsc{Irs.Index*} policy\footnote{
        The \textsc{Irs.Index*} policy is a heuristic modification of the \textsc{Irs.Index} policy. See \S \ref{ss-irs-index-impl}.
} achieves 94\% $\left(= \frac{V(\pi^\textsc{Irs.Index*},T,\mathbf{y})}{ W^\textsc{Irs.V-Zero}(T, \mathbf{y}) }\right)$ of the benchmark.
If the benchmark is set to $W^\textsc{TS}$ instead, as in a conventional regret analysis, we might have concluded that the \textsc{Irs.Index*} policy achieves only 88\% $\left(= \frac{V(\pi^\textsc{Irs.Index*},T,\mathbf{y})}{ W^\textsc{TS}(T, \mathbf{y}) }\right)$ of that (looser) bound, which would suggest a larger margin of possible improvement.

\begin{figure}[H]
        \centering
        \includegraphics[width=0.9\linewidth]{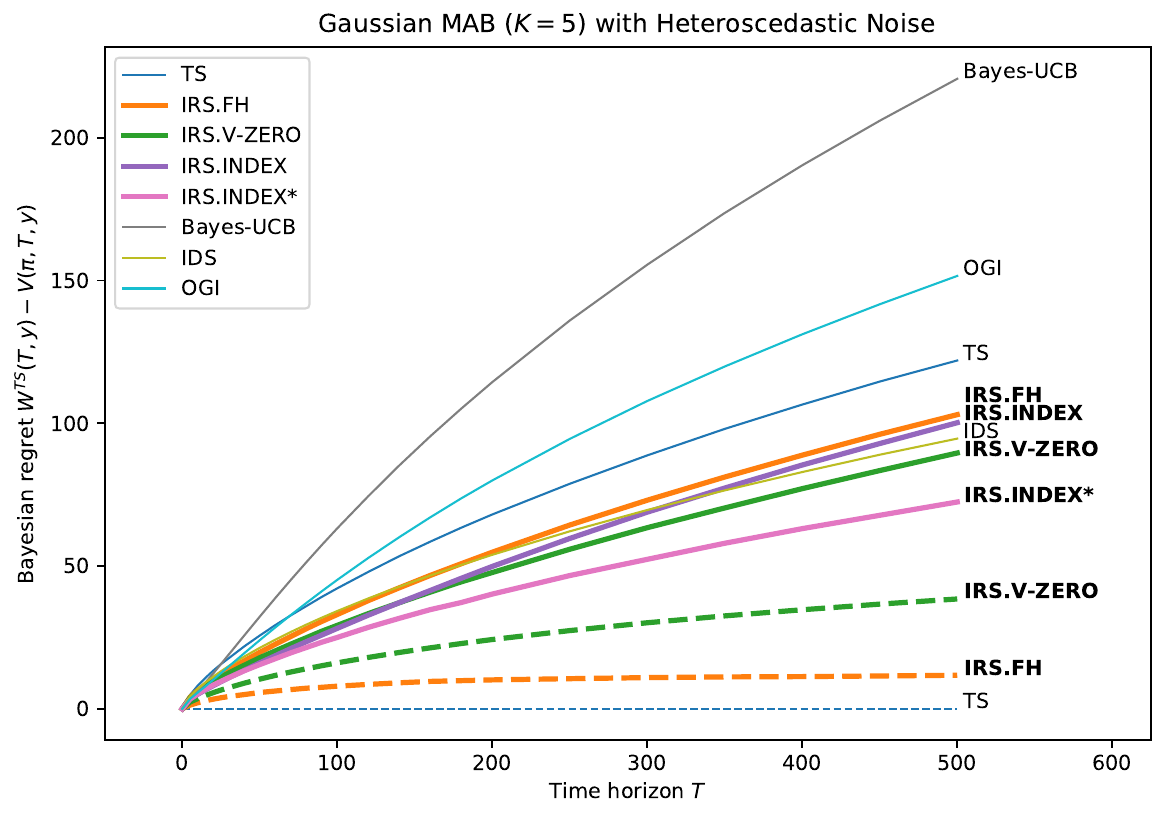}
        \caption{Regret plot for a Gaussian MAB with five arms with different noise variances.}
        \label{fig-gauss-5arms-asym}
\end{figure}

\begin{table}[H]
\centering
{\small

\begin{tabular}{ c c c c c c }
 \toprule
\thead{Algorithm} & \thead{Bayesian regret \\ (s.e.)} & \thead{Regret \\ improvement} & \thead{Regret lower \\ bound (s.e.)} & \thead{Bound \\ improvement} & \thead{Policy\\run time} \\
 \midrule
 \textsc{TS} & 121.99 (0.615) & 0.0\% & 0.00 (--) & 0.0\% &  34 ms  \\ 
 \textsc{Irs.FH} & 103.03 (0.628) & 15.5\% & 11.75 (0.656) & 16.2\% &  128 ms  \\ 
 \textsc{Irs.V-Zero} & 89.59 (0.690) & 26.6\% & \textbf{38.47} (0.827) & 53.1\% &  7.4 sec  \\ 
 \textsc{Irs.Index} & 100.20 (0.657) & 17.9\% & -- & -- &  12.8 sec  \\ 
 \textsc{Irs.Index*} & \textbf{72.43} (0.866) & 40.6\% & -- & -- &  12.3 sec  \\ 
 \textsc{Bayes-UCB} & 220.66 (1.285) & -80.9\% & -- & -- &  88 ms  \\ 
 \textsc{IDS} & 94.63 (0.817) & 22.4\% & -- & -- &  2.9 sec  \\ 
 \textsc{OGI} & 151.61 (1.030) & -24.3\% & -- & -- &  829 ms  \\ 

 \bottomrule
\end{tabular}

}
\caption{Simulation results for a Gaussian MAB with five arms with different noise variances when $T=500$.}
\label{tbl-gauss-5arms-asym}
\end{table}

%%% Local Variables:
%%% mode: latex
%%% TeX-master: "info-relax-sampling"
%%% End:

\section{Extensions} \label{s-extensions}
Below, we describe several natural generalizations of the methods developed in this paper beyond
the setting of Section~\ref{s-problem}:

%\todo{CCM: edit below, check against NIPS paper}

\noindent \textbf{MAB with unknown time horizon.}
This paper studies finite-time horizon MABs for which we suggest algorithms that exploit the knowledge of the time horizon $T$ and we focus on a relatively small $T$ such that the time horizon becomes an important ingredient in optimally balancing exploration and exploitation.
We briefly illustrate how to relax our framework's dependency on $T$, i.e., \newedit{how to extend to} the setting with an unknown horizon and the setting with an indefinitely long horizon.

First, our framework (penalties, policies, and upper bounds) can naturally incorporate the unknown $T$ within the \textit{Bayesian setting}; i.e., the horizon $T$ is also a random variable whose prior distribution is known.
As a simple case, if $T$ is independent of the DM's actions, we can reformulate the objective function of the inner problem as  $\sum_{t=1}^\infty \gamma_t \left( r_t(\mathbf{a}_{1:t}, \omega) - z_t(\mathbf{a}_{1:t}, \omega) \right)$ where the discount factor $\gamma_t \defeq \PR[ T \geq t ]$ is the survivor probability, and $r_t(\cdot)$ and $z_t(\cdot)$ are the reward and penalty terms used in the paper.
Alternatively, we can treat the random variable $T$ like the random reward realizations \newedit{by sampling} $T$ from its prior distribution while a penalty function (additionally) penalizes for the gain from knowing $T$ (one can imagine that the outcome $\omega$ now includes the realization of $T$ \newedit{and} not only the future reward realizations).
Structural results such as weak duality and strong duality will continue to hold.

Second, we can consider \newedit{a} practical modification of IRS policies when $T$ is large or infinite.
We can construct a dual feasible penalty function that mixes \textsc{Irs.FH} and \textsc{Irs.V-Zero},\footnote{
        In its inner problem, \textsc{Irs.V-Zero}-like penalties are applied for the initial $\lfloor T_0 / K \rfloor$ pulls and then \textsc{Irs.FH}-like penalties are applied for the later pulls.
} which induces an algorithm whose complexity is $O\left( K \min\{ T, T_0 \}^2 \right)$ for some predefined constant $T_0$.
Alternatively, we can convert \newedit{the} \textsc{Irs.V-EMax} or \textsc{Irs.Index} policy into an anytime policy by setting the inner problem's horizon large enough, despite that the performance bound will \deledit{be }no longer \newedit{be} obtainable.

\noindent \textbf{MAB in more complicated settings.}
Even though this paper develops a framework for the stochastic MAB with independent arms, which would be the simplest and oldest problem in \newedit{the} MAB literature, we believe that our framework applies \newedit{to} more complicated settings.
Consider the following examples:
\begin{itemize}
        \item A finite-horizon MAB with correlated arms (e.g., $R_{a,n} \sim \mathcal{N}(\mathbf{x}_a^\top \bm{\theta}, \sigma_a^2)$ where $\bm{\theta} \in \mathbb{R}^d$ is shared across the arms, and $\mathbf{x}_a \in \mathbb{R}^d$ is an arm's feature vector): \textsc{Irs.V-Zero} can be immediately implemented by adopting the DP algorithm discussed in \S \ref{ss-irs-vemax-impl}.

        \item MAB with the delayed reward realization: \textsc{Irs.FH} can be immediately implemented by simulating the DM's learning process in the presence of delay.

        \item MAB with a budget constraint (in which each arm consumes a certain amount of budget and the DM wants to maximize the total reward within a limited budget. See \citet{Ding13}): all IRS algorithms can be implemented by solving a budget-constrained optimization problem instead of a horizon-constrained optimization problem.
\end{itemize}
In these extensions, we obtain not only the online decision\newedit{-}making policies but also their performance bounds as in this paper.
Generally speaking, our framework provides a \newedit{systematic} way of improving \textsc{TS} by taking into account the exploitation-exploration trade-off more carefully, particularly in the presence of some constraint that \newedit{induces} incomplete learning; the main challenge would be to design a suitable penalty function that is tractable yet captures the problem-specific exploration-exploitation trade-off precisely.

%%% Local Variables:
%%% mode: latex
%%% TeX-master: "info-relax-sampling"
%%% End:

\section{Conclusion} \label{s-conclusion}
%\todo{CCM: edit below, check against NIPS paper}

\noindent \textbf{Contribution to MAB literature.}
We first highlight that our IRS framework generalizes Thompson sampling to the finite-horizon MAB setting.
As pointed out in \citet{Russo17b}, \textsc{TS} may perform poorly in time-sensitive learning problems in which exploitation is rather more encouraged than exploration. % due to the time constraint.
Interpreted as a special case of IRS policies, it is clear that \textsc{TS} is implicitly assuming an infinite time horizon in the sense that its associated inner problem solves a best-arm identification problem with an infinite number of observations.
As summarized in Table \ref{tbl-summary}, IRS algorithms consider more complicated inner problems in which the benefit from exploration is limited by the time-horizon constraint.
While maintaining the Bayesian recursive structure of its sequential decision-making process, we improve \textsc{TS} within a unified framework that also includes the Bayesian optimal policy as another special case.

Furthermore, the IRS framework provides a set of (Bayesian) performance bounds that are tighter than the conventional benchmark that has been widely used since \cite{Lai85}.
We believe that these benchmarks would be useful, in a Bayesian setting, in measuring the optimality of an algorithm or in assessing the intrinsic difficulty of an MAB problem instance.

\noindent \textbf{Contribution to information relaxation literature.}
The information relaxation framework is certainly a powerful tool to obtain performance bounds in a general class of decision-making problems.
Although there have been several studies \citep{Moallemi11} that elicit a decision-making policy based on this framework, they are limited to using a performance bound as a proxy for the value function.
Instead of approximating the value function explicitly, the IRS framework considers simulation-based randomized policies that make each decision that is optimized to a single instance of \newedit{a} simulated environment, and our results show that this scheme is very powerful in online learning problems where random exploration is required.

In applying the information relaxation framework to a particular application, the most challenging task is to find a suitable penalty function that is tractable yet yields a tight performance bound.
In this paper, by exploiting the recursive structures embedded in the Bayesian learning process, we derive a series of penalty functions so that users themselves can find a balance between the quality of policies/bounds and the computational cost.
We also provide theoretical analyses of the tightness of performance bounds and the suboptimality of associated policies by leveraging the existing analysis developed in the MAB literature.
These analytic results would be rare in the information relaxation literature due to the complex nature of the performance bound produced by the information relaxation framework. % -- the expected maximal value of random instances of optimization problem.

%%% Local Variables:
%%% mode: latex
%%% TeX-master: "info-relax-sampling"
%%% End:

\bibliography{info-relax-sampling}

\newpage

\appendix

\newpage
\section{An Illustrative Example} \label{s-example}

Let us consider a Bernoulli MAB with \newedit{eight periods ($T=8$)} and three arms ($K=3)$ with the following priors:
\begin{equation}
	\mu_1 \sim \text{Beta}(3,1)
	, \quad
	\mu_2 \sim \text{Beta}(1,1)
	, \quad
	\mu_3 \sim \text{Beta}(1,3),
\end{equation}
where $R_{a,n} \sim \text{Bernoulli}(\mu_a)$ for each $a \in \{1,2,3\}$ and $n \in \{1,2,\cdots,8\}$.
Given this prior belief, the \newedit{predictive} mean reward of each arm is $\bar{\mu}_1 = \E_{\mu_1 \sim \text{Beta}(3,1)}[ \mu_1 ] = \frac{3}{4}$, $\bar{\mu}_2 = \frac{1}{2}$, and $\bar{\mu}_3 = \frac{1}{4}$, respectively.
As an illustrative example, we examine a particular instance where the true outcome $\omega$ is given as follows:
\begin{table}[htbp]
\centering
\begin{tabular}{c | c |  c  c  c  c  c  c  c  c }
\toprule
 & \multirow{2}{*}{ \thead{ True means $\mu_a(\theta_a)$ } } & \multicolumn{8}{c}{ \thead{ Rewards $R_{a,n}$} } \\
\cline{3-10}
 &  & $n=1$ & $2$ & $3$ & $4$ & $5$ & $6$ & $7$ & $8$ \\
\hline
\hline
Arm 1 ($a=1$) & $0.235$ & $0$ & $1$ & $1$ & $1$ & $0$ & $0$ & $0$ & $0$ \\
Arm 2 ($a=2$) & $0.443$ & $1$ & $0$ & $0$ & $1$ & $1$ & $1$ & $1$ & $0$ \\
Arm 3 ($a=3$) & $0.787$ & $1$ & $1$ & $1$ & $1$ & $0$ & $0$ & $1$ & $1$ \\
\bottomrule
\end{tabular}
\caption{An example of the outcome in a Bernoulli MAB with $K=3$ and $T=8$.} \label{tbl-outcome-example}
\end{table}

If we consider only the priors, arm~1 is best since $\bar{\mu}_1$ is largest among $(\bar{\mu}_1, \bar{\mu}_2, \bar{\mu}_3)$.
If, however, we have full information about the parameter values, arm~3 is best since $\mu_3$ is largest among $(\mu_1, \mu_2, \mu_3)$.

\subsection{Inner Problems Induced by Different Penalty Functions}

\noindent \textbf{No penalty.}
To clarify the role of penalties, we first consider the case of zero penalty, i.e., $z_t \equiv 0$, which was not discussed in \S \ref{s-framework}.
With zero penalty, the DM at any time earns the current realized reward without adjustment.
The clairvoyant DM, who is informed of the outcome $\omega$, can find the best action sequence for this particular outcome $\omega$.
Recall that $R_{a,n}$ is defined to be the reward from the $n^\text{th}$ pull of arm $a$, not the reward from arm $a$ at time $n$, and so the DM is not allowed to skip any of the reward realizations and the total reward does not depend on the order of pulls.
As depicted in the table below, the optimal solution is to pull arm 1 four times, arm 2 once, and arm 3 three times, which yields a total reward of $7$.

\begin{table}[htbp]
\centering
\begin{tabular}{c |  c  c  c  c  c  c  c  c  | c}
\toprule
 & \multicolumn{8}{c|}{ \thead{Payoffs under zero penalty} } & \multirow{2}{*}{ \thead{Maximal payoff} } \\
 \cline{2-9}
 & $n=1$ & 2 & 3 & 4 & 5 & 6 & 7 & 8 \\
\hline
\hline
Arm 1 & \cellcolor{black!25}$0$ & \cellcolor{black!25}$1$ & \cellcolor{black!25}$1$ & \cellcolor{black!25}$1$ & $0$ & $0$ & $0$ & $0$ & \multirow{3}{*}{$7$} \\
Arm 2 & \cellcolor{black!25}$1$ & $0$ & $0$ & $1$ & $1$ & $1$ & $1$ & $0$ \\
Arm 3 & \cellcolor{black!25}$1$ & \cellcolor{black!25}$1$ & \cellcolor{black!25}$1$ & $1$ & $0$ & $0$ & $1$ & $1$ \\
\bottomrule
\end{tabular}
\end{table}

\noindent \textbf{\textsc{TS} penalty.}
Next, let us examine the penalty $z_t^\textsc{TS}(\mathbf{a}_{1:t},\omega) \defeq r_t(\mathbf{a}_{1:t},\omega) - \mu_{a_t}(\theta_{a_t})$ under which the DM earns $\mu_a$ whenever playing arm $a$.
The hindsight optimal action sequence is to pull arm 3 (the arm with the largest mean reward $\mu_a$) eight times in a row and the DM can earn a total reward of $T \times \mu_3 = 6.296$ at most.

\begin{table}[htbp]
\centering
\begin{tabular}{c |  c  c  c  c  c  c  c  c  | c}
\toprule
 & \multicolumn{8}{c|}{ \thead{Payoffs under \textnormal{$z_t^\textsc{TS}$}} } & \multirow{2}{*}{ \thead{Maximal payoff} } \\
\cline{2-9} 
 & $n=1$ & 2 & 3 & 4 & 5 & 6 & 7 & 8 \\
\hline
\hline
Arm 1 & $.235$ & $.235$ & $.235$ & $.235$ & $.235$ & $.235$ & $.235$ & $.235$ & \multirow{3}{*}{$6.296$} \\
Arm 2 & $.443$ & $.443$ & $.443$ & $.443$ & $.443$ & $.443$ & $.443$ & $.443$ \\
Arm 3 & \cellcolor{black!25}$.787$ & \cellcolor{black!25}$.787$ & \cellcolor{black!25}$.787$ & \cellcolor{black!25}$.787$ & \cellcolor{black!25}$.787$ & \cellcolor{black!25}$.787$ & \cellcolor{black!25}$.787$ & \cellcolor{black!25}$.787$ \\
\bottomrule
\end{tabular}
\end{table}

\noindent \textbf{\textsc{IRS.FH} penalty.}
When the penalties are given by $z_t^\textsc{Irs.FH}(\mathbf{a}_{1:t},\omega) \defeq r_t(\mathbf{a}_{1:t},\omega) - \hat{\mu}_{a_t,T-1}(\omega)$, the DM earns $\hat{\mu}_{a,T-1}(\omega)$ whenever playing arm $a$.
Recall that $\hat{\mu}_{a,T-1}(\omega)$ is the Bayesian estimate on mean reward of arm $a$ after observing reward realizations $R_{a,1},\cdots,R_{a,T-1}$.
In this particular example, we have $\left(\hat{\mu}_{1,T-1}, \hat{\mu}_{2,T-1}, \hat{\mu}_{3,T-1}\right) = \left( \frac{6}{11}, \frac{6}{9}, \frac{6}{11} \right)$ and the maximal payoff is $T \times \hat{\mu}_{2,T-1} = 5.333$, which can be obtained by playing arm 2 throughout the entire time horizon.

\begin{table}[htbp]
\centering
\begin{tabular}{c |  c  c  c  c  c  c  c  c  | c}
\toprule
 & \multicolumn{8}{c|}{ \thead{Payoffs under \textnormal{$z_t^\textsc{Irs.FH}$}} } & \multirow{2}{*}{ \thead{Maximal payoff} } \\
\cline{2-9} 
 & $n=1$ & 2 & 3 & 4 & 5 & 6 & 7 & 8 \\
\hline
\hline
Arm 1 & $6/11$ & $6/11$ & $6/11$ & $6/11$ & $6/11$ & $6/11$ & $6/11$ & $6/11$ & \multirow{3}{*}{$5.333$} \\
Arm 2 & \cellcolor{black!25}$6/9$ & \cellcolor{black!25}$6/9$ & \cellcolor{black!25}$6/9$ & \cellcolor{black!25}$6/9$ & \cellcolor{black!25}$6/9$ & \cellcolor{black!25}$6/9$ & \cellcolor{black!25}$6/9$ & \cellcolor{black!25}$6/9$ \\
Arm 3 & $6/11$ & $6/11$ & $6/11$ & $6/11$ & $6/11$ & $6/11$ & $6/11$ & $6/11$ \\
\bottomrule
\end{tabular}
\end{table}

\noindent \textbf{\textsc{IRS.V-Zero} penalty.}
Finally, let us focus on $z_t^\textsc{Irs.V-Zero}(\mathbf{a}_{1:t}, \omega) \defeq r_t(\mathbf{a}_{1:t}, \omega) - \hat{\mu}_{a_t,n_{t-1}(\mathbf{a}_{1:t-1}, a_t)}$ under which the DM earns $\hat{\mu}_{a,n-1}(\omega)$ from the $n^\text{th}$ pull of arm $a$.
Since the payoff from an arm changes over time as the Bayesian estimate evolves, playing only one arm is no longer optimal, unlike in the previous two cases.
It can be easily verified that the optimal allocation is to play arm 1 six times and arm 2 two times, as visualized in the table below.

\begin{table}[htbp]
\centering
\begin{tabular}{c |  c  c  c  c  c  c  c  c  | c}
\toprule
 & \multicolumn{8}{c|}{ \thead{Payoffs under \textnormal{$z_t^\textsc{Irs.V-Zero}$}} } & \multirow{2}{*}{ \thead{Maximal payoff} }  \\
\cline{2-9} 
 & $n=1$ & 2 & 3 & 4 & 5 & 6 & 7 & 8 \\
\hline
\hline
Arm 1 & \cellcolor{black!25}$3/4$ & \cellcolor{black!25}$3/5$ & \cellcolor{black!25}$4/6$ & \cellcolor{black!25}$5/7$ & \cellcolor{black!25}$6/8$ & \cellcolor{black!25}$6/9$ & $6/10$ & $6/11$ & \multirow{3}{*}{$5.314$} \\
Arm 2 & \cellcolor{black!25}$1/2$ & \cellcolor{black!25}$2/3$ & $2/4$ & $2/5$ & $3/6$ & $4/7$ & $5/8$ & $6/9$ \\
Arm 3 & $1/4$ & $2/5$ & $3/6$ & $4/7$ & $5/8$ & $5/9$ & $5/10$ & $6/11$ \\
\bottomrule
\end{tabular}
\end{table}

\noindent \textbf{\textsc{IRS.V-EMax} and the ideal penalty.}
Regarding the penalty functions $z_t^\textsc{Irs.V-EMax}$ and $z_t^\textup{ideal}$, we cannot visualize the optimal solution with a table since the total payoff depends on the detailed sequence of pulls and not only the number of pulls.
While omitting the visual proof of optimality, we have that the action sequence $\mathbf{a}_{1:8}^* = (1, 2, 2, 1, 1, 1, 1, 1)$ achieves the maximal payoff of $5.806$ under $z_t^\textsc{Irs.V-EMax}$, and $\mathbf{a}_{1:8}^* = ( 1, 1, 1, 1, 1, 1, 1, 1 )$ achieves the maximal payoff of $6.063$ under $z_t^\textup{ideal}$.
In particular for $z_t^\textup{ideal}$, the maximal payoff depends only on the prior belief $\mathbf{y}$ and the time horizon $T$, irrespective of the outcome\footnote{
	For details, see the proof of the strong duality theorem in \S \ref{prf-weak-daulity}.
	While the maximal value does not depend on $\omega$, the optimal action sequence still depends on $\omega$.
	More specifically, it is the sequence of actions that the (non-anticipating) Bayesian optimal policy will take when $\omega$ is sequentially revealed.
} $\omega$.

We have so far illustrated how the different penalty functions induce the different inner problems and the different best actions given the same outcome $\omega$.
The readers may notice from the above examples that, as the penalty function becomes more complicated, the hindsight best action sequence becomes less dependent on a particular realization of
$\omega$. Instead, it becomes more dependent on the prior belief.

\subsection{IRS Performance Bounds}

The maximal payoffs above are calculated for a particular outcome given by Table \ref{tbl-outcome-example}.
Recall that the IRS performance bound $W^z$ is defined as the expected value of the maximal payoff where the expectation is taken with respect to the randomness of outcome $\omega$ over its prior distribution $\Iscr(T, \mathbf{y})$.
We can obtain this value by simulation, i.e., by solving a bunch of inner problems with respect to the randomly generated outcomes $\omega^{(1)}, \omega^{(2)}, \cdots, \omega^{(S)}$ and taking the average of the maximal values.
For this particular Bernoulli MAB setting ($T=8$ with given priors), we obtain the following performance bounds:

\begin{table}[htbp]
\centering
\begin{tabular}{c |  c | c  | c | c || c }
\toprule
$W^{0}$ & $W^\textsc{TS}$ & $W^\textsc{Irs.FH}$ & $W^\textsc{Irs.V-Zero}$ & $W^\textsc{Irs.V-EMax}$ & $W^\textup{ideal} = V^*$ \\
\hline
\hline
6.805 & 6.429 & 6.279 & 6.111 & 6.075 & 6.063 \\
\bottomrule
\end{tabular}
\end{table}

We observe that the performance bounds are monotone, i.e., $W^{0} > W^\textsc{TS} > W^\textsc{Irs.FH} > W^\textsc{Irs.V-Zero} > W^\textsc{Irs.V-EMax} > W^\textup{ideal} = V^*$, which is consistent with Theorem \ref{thm-monotonicity}.

\subsection{Illustration of the IRS Policy (IRS.V-Zero)}

We illustrate how the policy $\pi^\textsc{Irs.V-Zero}$ makes decisions sequentially when the true outcome $\omega$ is the one specified in Table \ref{tbl-outcome-example}.
At $t=1$, it first synthesizes a future scenario based on the prior belief (i.e., sampling $\tilde{\omega}_1 \sim \Iscr(\mathbf{y}_0)$) and finds the best action sequence in the presence of penalties $z_t^\textsc{Irs.V-Zero}$ in the belief that the sampled outcome $\tilde{\omega}_1$ is the ground truth.
The following table shows an example in which $\pi^\textsc{Irs.V-Zero}$ plays arm 1.

\begin{table}[htbp]
\centering
\begin{tabular}{c | c |  c  c  c  c  c  c  c  c  | c}
\toprule
\multirow{2}{*}{$t=1$} & \multirow{2}{*}{ \thead{Priors $\mathbf{y}_{0}$} } & \multicolumn{8}{c|}{ \thead{Payoffs with respect to $\tilde{\omega}_1 \sim \Iscr(\mathbf{y}_{0})$} } & \multirow{2}{*}{ \thead{Action} } \\
\cline{3-10} 
  &  & $n=1$ & 2 & 3 & 4 & 5 & 6 & 7 & 8 \\
\hline
\hline
Arm 1 & Beta($3,1)$ & \cellcolor{black!25}$3/4$ & \cellcolor{black!25}$4/5$ & \cellcolor{black!25}$5/6$ & \cellcolor{black!25}$6/7$ & \cellcolor{black!25}$7/8$ & \cellcolor{black!25}$7/9$ & \cellcolor{black!25}$8/10$ & \cellcolor{black!25}$9/11$ & \multirow{3}{*}{$a_{1}=1$} \\
Arm 2 & Beta($1,1)$ & $1/2$ & $1/3$ & $1/4$ & $1/5$ & $1/6$ & $1/7$ & $2/8$ & $3/9$ \\
Arm 3 & Beta($1,3)$ & $1/4$ & $1/5$ & $1/6$ & $1/7$ & $1/8$ & $1/9$ & $1/10$ & $2/11$ \\
\bottomrule
\end{tabular}
\end{table}

As a result of the first action ($a_1=1$), we observe that $R_{1,1}=0$ (encoded in the true outcome $\omega$) and the associated belief is updated from $\text{Beta}(3,1)$ to $\text{Beta}(3,2)$ according to Bayes' rule.
In order to make the next decision $a_2$ at time $t=2$, $\pi^\textsc{Irs.V-Zero}$ simulates an outcome for the remaining time horizon, i.e., $\tilde{\omega}_2 \sim \Iscr(\mathbf{y}_1)$, independently of the outcome $\tilde{\omega}_1$ used at $t=1$.
Again, $\pi^\textsc{Irs.V-Zero}$ finds the best action sequence for this new scenario and takes its first action.\footnote{In case of \textsc{Irs.V-Zero}, we select the arm with the largest pull allocation as a first action.}
The table below shows an instance of $\tilde{\omega}_2$ in which the policy will pull arm 2.

\begin{table}[htbp]
\centering
\begin{tabular}{c | c |  c  c  c  c  c  c  c  | c}
\toprule
\multirow{2}{*}{$t=2$} & \multirow{2}{*}{ \thead{Priors $\mathbf{y}_{1}$} } & \multicolumn{7}{c|}{ \thead{Payoffs with respect to $\tilde{\omega}_2 \sim \Iscr(\mathbf{y}_{1})$} } & \multirow{2}{*}{ \thead{Action} } \\
\cline{3-9} 
  &  & $n=1$ & 2 & 3 & 4 & 5 & 6 & 7 \\
\hline
\hline
Arm 1 & Beta($3,2)$ & \cellcolor{black!25}$3/5$ & \cellcolor{black!25}$4/6$ & $4/7$ & $4/8$ & $4/9$ & $5/10$ & $5/11$ & \multirow{3}{*}{$a_{2}=2$} \\
Arm 2 & Beta($1,1)$ & \cellcolor{black!25}$1/2$ & \cellcolor{black!25}$2/3$ & \cellcolor{black!25}$3/4$ & \cellcolor{black!25}$3/5$ & \cellcolor{black!25}$4/6$ & $4/7$ & $5/8$ \\
Arm 3 & Beta($1,3)$ & $1/4$ & $1/5$ & $1/6$ & $1/7$ & $1/8$ & $1/9$ & $1/10$ \\
\bottomrule
\end{tabular}
\end{table}

We can update the prior of arm 2 as a new reward realization $R_{2,1}=1$ is revealed.
In the following decision epochs $t=3,4,\cdots$, the policy repeats the same decision-making procedure -- (i) samples $\tilde{\omega}_t \sim \Iscr(\mathbf{y}_{t-1})$, (ii) solves the inner problem, and (iii) plays the best arm that the optimal solution suggests -- while updating the priors as the true reward realizations are revealed sequentially.

\iffalse
\begin{table}[htbp]
\centering
\begin{tabular}{c | c |  c  c  c  c  c  c  | c}
\toprule
\multirow{2}{*}{$t=3$} & \multirow{2}{*}{Priors $\mathbf{y}_{2}$} & \multicolumn{6}{c|}{Payoffs with respect to $\tilde{\omega} \sim \Iscr(6,\mathbf{y}_{2})$} & \multirow{2}{*}{Action} \\
\cline{3-8} 
  &  & $n=1$ & 2 & 3 & 4 & 5 & 6 \\
\hline
\hline
Arm 1 & Beta($3,2)$ & $3/5$ & $4/6$ & $4/7$ & $4/8$ & $5/9$ & $5/10$ & \multirow{3}{*}{$a_{3}=2$} \\
Arm 2 & Beta($2,1)$ & \cellcolor{black!25}$2/3$ & \cellcolor{black!25}$3/4$ & \cellcolor{black!25}$3/5$ & \cellcolor{black!25}$4/6$ & \cellcolor{black!25}$5/7$ & \cellcolor{black!25}$5/8$ \\
Arm 3 & Beta($1,3)$ & $1/4$ & $1/5$ & $2/6$ & $2/7$ & $2/8$ & $2/9$ \\
\bottomrule
\end{tabular}
\end{table}
\fi

The following table illustrates the last decision epoch.
As there remains one time period only, the policy $\pi^\textsc{Irs.V-Zero}$ tries to maximize $\hat{\mu}_{a,0}(\tilde{\omega}_7) = \bar{\mu}_a(\mathbf{y}_7)$, which is the expected mean reward given the prior at that moment.
Such a decision is totally myopic, but it is Bayesian optimal.

\begin{table}[htbp]
\centering
\begin{tabular}{c | c |  c  | c}
\toprule
\multirow{2}{*}{$t=8$} & \multirow{2}{*}{ \thead{Priors $\mathbf{y}_{7}$} } & \multicolumn{1}{c|}{ \thead{Payoffs with respect to $\tilde{\omega}_7 \sim \Iscr(\mathbf{y}_{7})$} } & \multirow{2}{*}{ \thead{Action} } \\
\cline{3-3} 
  &  & $n=1$ \\
\hline
\hline
Arm 1 & Beta($6,3)$ & \cellcolor{black!25}$6/9$ & \multirow{3}{*}{$a_{8}=1$} \\
Arm 2 & Beta($2,2)$ & $2/4$ \\
Arm 3 & Beta($1,3)$ & $1/4$ \\
\bottomrule
\end{tabular}
\end{table}

\newpage

\section{Algorithms in Detail}

\subsection{Implementation of \textsc{IRS.V-Zero}} \label{ss-irs-vzero-impl}
We provide a pseudo-code of the policy $\pi^\textsc{Irs.V-Zero}$ introduced in \S \ref{ss-irs-vzero}.
The same logic can be directly used to compute the performance bound $W^\textsc{Irs.V-Zero}$ if the sampled outcome $\tilde{\omega}$ is replaced with the true outcome $\omega$.

\begin{algorithm2e}[H]
  \SetAlgoLined\DontPrintSemicolon
  \SetKwFunction{algo}{IRS.V-Zero}
  \SetKwProg{myalg}{Function}{}{}
  \myalg{\algo{$T, \mathbf{y}$}}{
\nl	$\tilde{\theta}_a \sim \Pscr_a(y_a), \tilde{R}_{a,n} \sim \Rscr_a(\tilde{\theta}), \quad \forall n \in \{1, \ldots, T\}, \forall a \in \{1, \ldots, K\}$ \;
\nl	\For{$a=1,\cdots,K$} {
\nl		$\tilde{y}_{a,0} \gets y_a, \tilde{S}_{a,0} \gets 0$ \;
\nl		\For{$n=1,\cdots,T$} {
\nl			$\tilde{S}_{a,n} \gets \tilde{S}_{a,n-1} + \bar{\mu}_a(\tilde{y}_{a,n-1})$ \;
\nl			 $\tilde{y}_{a,n} \gets \Uscr_a( \tilde{y}_{a,n-1}, \tilde{R}_{a,n} )$ \;
		}
	}
\nl	$\tilde{M}_{0,0} \gets 0, \tilde{M}_{0,n} \gets -\infty, \forall n \in \{1, \ldots, T\}$ \;
\nl	\For{$a=1,\cdots,K$} {
\nl		\For{$n=0,\cdots,T$} {
\nl			$\tilde{M}_{a,n} \gets \max_{0 \leq m \leq n}\{ \tilde{M}_{a-1,n-m} + \tilde{S}_{a,m} \} $ \;
\nl			$\tilde{L}_{a,n} \gets \argmax_{0 \leq m \leq n}\{ \tilde{M}_{a-1,n-m} + \tilde{S}_{a,m} \} $ \;
		}
	}
\nl	$\tau \gets T$ \;
\nl	\For{$a=K,\cdots,1$} {
\nl		$\tilde{n}_a^* \gets \tilde{L}_{a,\tau}$ \;
\nl		$\tau \gets \tau - \tilde{n}_a^*$ \;
	}
\nl	\KwRet $\argmax_a \tilde{n}_a^*$ \;
  }{}
  \caption{ \newedit{Arm selection rule of $\pi^\textsc{Irs.V-Zero}$ when remaining time is $T$ and current belief is $\mathbf{y}$} }
\end{algorithm2e}

\subsection{Implementation of \textsc{IRS.V-EMax}} \label{ss-irs-vemax-impl}
We use the notation $\mathbf{y}_t( \mathbf{n}_{1:K}, \omega )$ to denote the belief as a function of pull counts $\mathbf{n}_{1:K}\defeq ( n_1,\cdots, n_K ) \in \N_0^K$, based on the observation that the belief is completely \newedit{determined} by how many times each of the arms \newedit{has been} pulled, $\mathbf{n}_{1:K}$, irrespective of the specific sequence in which the arms \newedit{have been} pulled.
Given the pull counts $\mathbf{n}_{1:K}$, we define the payoff of pulling arm $a$ one more time after pulling \newedit{the individual arms} $n_1,\cdots,n_K$ times \newedit{respectively}: with $t=\sum_{a=1}^K n_a$, \newedit{the effective payoff associated with arm $a$ at time $t$ is}
\begin{equation}
	r^z( \mathbf{n}_{1:K}, a, \omega ) \defeq \hat{\mu}_{a,n_a}(\omega) - W^\textsc{TS}\left( T-t-1, \mathbf{y}_{t+1}( \mathbf{n}_{1:K} + \mathbf{e}_a, \omega ) \right)
		 + W^\textsc{TS}\left( T-t-1, \mathbf{y}_t( \mathbf{n}_{1:K}, \omega ) \right),
\end{equation}
where $\mathbf{e}_a \in \N_0^K$ is a basis vector such that the $a^\text{th}$ component is one and the others are zero.
Note that we used the fact that $\E\left[ \left. W^\textsc{TS}\left( T-t, \mathbf{y}_t \right) \right| \newedit{H_{t-1}} \right] = W^\textsc{TS}\left( T-t, \mathbf{y}_{t-1} \right)$.

Consider a subproblem of \eqref{e-inner-problem} that maximizes the total payoff given the number of pulls $\mathbf{n}_{1:K}$ across all the arms: with $t = \sum_{a=1}^K n_a$, we get
\begin{equation}
	M(\mathbf{n}_{1:K}, \omega) \defeq \max_{\mathbf{a}_{1:t} \in \Ascr^t}\left\{ \sum_{s=1}^t r_s(\mathbf{a}_{1:s},\omega) - z_s^\textsc{Irs.V-EMax}(\mathbf{a}_{1:s}, \omega); ~ \sum_{s=1}^t \mathbf{1}\{ a_s=a \} = n_a, \forall a \right\}.
\end{equation}
Consequently, the maximal value $M(\mathbf{n}_{1:K}, \omega)$ should satisfy the following Bellman equation:
\begin{equation} \label{e-irs-vemax-subproblem}
	M(\mathbf{n}_{1:K}, \omega) = \max_{a \in \Ascr : n_a \geq 1}\left\{ M(\mathbf{n}_{1:K} - \mathbf{e}_a, \omega ) + r^z( \mathbf{n}_{1:K} - \mathbf{e}_a, a, \omega ) \right\},
\end{equation}
\newedit{i.e., when letting $a^*$ be the maximizer of \eqref{e-irs-vemax-subproblem}, it is optimal to play arm $a^*$ after making the best effort within the allocation $\mathbf{n}_{1:K} - \mathbf{e}_a$.}
For all feasible counts $\mathbf{n}_{1:K}$'s such that $\sum_{a=1}^K n_a \leq T$, we can compute $M(\mathbf{n}_{1:K}, \omega)$'s by sequentially solving \eqref{e-irs-vemax-subproblem} in an appropriate order.
By doing so, we can obtain the maximal value of the original inner problem \eqref{e-inner-problem} by evaluating
\begin{equation} \label{e-irs-vemax-finalize}
	\max_{\mathbf{n}_{1:K} \in N_T }\left\{ M(\mathbf{n}_{1:K}, \omega) \right\},
\end{equation}
where $N_T \defeq \{ (n_1,\cdots,n_K) \in \N_0^K: \sum_{a=1}^K n_a = T \}$\newedit{, and the performance bound $W^\textsc{Irs.V-EMax}$ is the expected value of \eqref{e-irs-vemax-finalize} with respect to the random realization of $\omega$.}
The optimal action sequence $\mathbf{a}_{1:T}^*$ can be obtained by tracking $M(\mathbf{n}_{1:K}, \omega)$'s backward.
\\

\begin{algorithm2e}[H]
  \SetAlgoLined\DontPrintSemicolon
  \SetKwFunction{algo}{IRS.V-EMax}
  \SetKwProg{myalg}{Function}{}{}
  \myalg{\algo{$T, \mathbf{y}$}}{
\nl	$\tilde{\theta}_a \sim \Pscr_a(y_a), \tilde{R}_{a,n} \sim \Rscr_a(\tilde{\theta}), \quad \forall n \in \{1, \ldots, T\}, \forall a \in \{1, \ldots, K\}$ \;
\nl	$ \tilde{y}_{a,0} \gets y_a, \tilde{y}_{a,n} \gets \Uscr_a( \tilde{y}_{a,n-1}, \tilde{R}_{a,n} ), \quad \forall n \in \{1, \ldots, T\}, \forall a \in \{1, \ldots, K\}$ \;
\nl	\For{each $\mathbf{n}_{1:K} \in N_{\leq T}$} {
\nl		$\tilde{\Gamma}[ \mathbf{n}_{1:K} ] \gets \E_{\tilde{\mathbf{y}}(\mathbf{n}_{1:K})}\left[ \max_a \mu_a(\theta_a) \right]$ \;
	}
\nl	\For{each $\mathbf{n}_{1:K} \in N_{< T}$} {
\nl		$ \tilde{r}^z[ \mathbf{n}_{1:K}, a ] \gets \bar{\mu}_a( \tilde{y}_{a,n_a-1} ) + \left( T - \sum_{a=1}^K n_a - 1 \right) \times \left( \tilde{\Gamma}[ \mathbf{n}_{1:K} ] - \tilde{\Gamma}[ \mathbf{n}_{1:K} + \mathbf{e}_a] \right), ~ \forall a \in \{1, \ldots, K\}$ \;
	}
\nl	$\tilde{M}[ \mathbf{0} ] \gets 0$ \;
\nl	\For{each $\mathbf{n}_{1:K} \in N_{\leq T} \setminus \{ \mathbf{0} \}$ in order \newedit{with increasing $\sum_{a=1}^K n_a$}} {
\nl		$\tilde{M}[ \mathbf{n}_{1:K} ] \gets \max_{a : n_a > 0} \left\{ \tilde{M}[ \mathbf{n}_{1:K} - \mathbf{e}_a ] + \tilde{r}^z[ \mathbf{n}_{1:K} - \mathbf{e}_a, a ] \right\}$ \;
\nl		$\tilde{A}[ \mathbf{n}_{1:K} ] \gets \argmax_{a : n_a > 0} \left\{ \tilde{M}[ \mathbf{n}_{1:K} - \mathbf{e}_a ] + \tilde{r}^z[ \mathbf{n}_{1:K} - \mathbf{e}_a, a ] \right\}$ \;
	}
\nl	$\mathbf{m}_{1:K} \gets \argmax_{\mathbf{n}_{1:K} \in N_T}\left\{ \tilde{M}[ \mathbf{n}_{1:K} ] \right\}$ \;
\nl	\For{$t=T,\cdots,1$} {
\nl		$\tilde{a}_t^* \gets \tilde{A}[ \mathbf{m}_{1:K} ]$ \;
\nl		$m_{\tilde{a}_t^*} \gets m_{\tilde{a}_t^*} - 1$ \;
	}
\nl	\KwRet $\tilde{a}_1^*$ \;
  }{}
  \caption{ \newedit{Arm selection rule of $\pi^\textsc{Irs.V-Zero}$ when remaining time is $T$ and current belief is $\mathbf{y}$} }
\end{algorithm2e}
Here, $\tilde{\mathbf{y}}(\mathbf{n}_{1:K}) \defeq ( \tilde{y}_{1,n_1}, \cdots, \tilde{y}_{K,n_K})$, $N_{\leq T} \defeq \{ \mathbf{n}_{1:K}; \sum_a n_a \leq T \}$, $N_{< T} \defeq \{ \mathbf{n}_{1:K}; \sum_a n_a < T \}$, and in line 8, $\mathbf{n}_{1:K}$ iterates over $N_{\leq T} \setminus \{ \mathbf{0} \}$ in an order in which $\sum_{a=1}^K n_a$ is non-decreasing.

Since $| N_{\leq T} | = O(T^K)$, it requires $O(KT^K)$ operations to compute all $M(\mathbf{n}_{1:K}, \omega)$'s.
However, another practical issue is the cost of computing $W^\textsc{TS}(T, \mathbf{y}) = T \times \E_{\newedit{\mathbf{y}}}\left[ \max_a \mu_a(\theta_a) \right]$ which has to be evaluated $O(T^K)$ times in total.
There is no simple closed-form expression in general, and it should be evaluated with numerical integration or Monte Carlo sampling.

\subsection{Implementation of IRS.Index} \label{ss-irs-index-impl}

\newedit{We first prove the identity that was utilized in \S \ref{ss-irs-index}, and then provide the pseudo code for \textsc{Irs.Index} policy.}

\begin{proposition} \label{prop-irs-index-reformulation}
	The optimization problem \eqref{e-irs-index-1} can be reformulated as
	\begin{equation} %\label{e-irs-index-2}
		\max_{0 \leq n \leq T}\left\{ T \times \Gamma^\lambda_0 + (T-n) \times \left( \lambda - \min_{0 \leq i \leq n} \Gamma^\lambda_i \right)  + \sum_{i=1}^n \left( \hat{\mu}_{a,i-1} - \Gamma^\lambda_{i-1} \right) \right\}.
	\end{equation}
	Here, the decision variable $n$ is the total number of pulls of a stochastic arm.
\end{proposition}
\begin{proof}
Fix $m \defeq n_T$\newedit{, i.e., the total number of pulls on the stochastic arm}.
Note that if $a_t=0$, then $(T-t) \times (\Gamma^\lambda_{n_t} - \Gamma^\lambda_{n_{t-1}}) = 0$ since $n_t = n_{t-1}$.
The objective function can be represented as
\begin{equation} \label{e-irs-index-3}
	\sum_{n=1}^m \hat{\mu}_{a,n-1}  + (T - m) \times \lambda - \sum_{n=1}^m (T-t_n) \times \left(\Gamma^\lambda_n - \Gamma^\lambda_{n-1} \right),
\end{equation}
where $t_n \defeq \inf\{ t; n_t \geq n \}$ \newedit{represents the time at which the $n^\text{th}$ pull on the stochastic arm is made}.
It suffices to find \newedit{the optimal pulling times} $(t_1,\cdots,t_m)$ with $1 \leq t_1 < t_2 < \cdots < t_m \leq T$ \newedit{by which} $\sum_{n=1}^m (T-t_n) \times \left(\Gamma^\lambda_n - \Gamma^\lambda_{n-1} \right)$ \newedit{is minimized}.
With $t_0 \defeq 0$ and $t_{m+1} \defeq T+1$, we have
\begin{align}
	&\sum_{n=1}^m (T-t_n) \times \left(\Gamma^\lambda_n - \Gamma^\lambda_{n-1} \right)
		\\&= \sum_{n=1}^m (T-t_n) \times \Gamma^\lambda_n - \sum_{n=1}^m (T-t_n) \times \Gamma^\lambda_{n-1} 
		\\&= \sum_{n=1}^m (T-t_n) \times \Gamma^\lambda_n - \sum_{n=0}^{m-1} (T-t_{n+1}) \times \Gamma^\lambda_n 
		%\\&= (T-t_m) \times \Gamma^\lambda_m - (T - t_1) \times \Gamma^\lambda_0 +  \sum_{n=0}^{m-1} (t_{n+1}-t_n) \times \Gamma^\lambda_n
		\\&= \sum_{n=0}^m (T-t_n) \times \Gamma^\lambda_n  - (T-t_0) \times \Gamma_0^\lambda  - \sum_{n=0}^m (T-t_{n+1}) \times \Gamma^\lambda_n + (T-t_{m+1}) \times \Gamma_m^\lambda
		\\&= -\Gamma^\lambda_m - T \times \Gamma_0^\lambda + \sum_{n=0}^m (t_{n+1}-t_n) \times \Gamma_n^\lambda.
		\label{e-irs-index-4}
\end{align}
\newedit{Consider the minimum value among $\Gamma_0^\lambda, \ldots, \Gamma_m^\lambda$ and let $n^* \defeq \argmin_{0 \leq n \leq m} \Gamma_n^\lambda$.
In order to minimize \eqref{e-irs-index-4}, it should satisfy that $t_{n+1}-t_n = T-m+1$ for $n = n^*$ and $t_{n+1}-t_n=1$ for $n \ne n^*$}.
For such $t_n$'s, \eqref{e-irs-index-3} reduces to
\begin{align}
	& \sum_{n=1}^m \hat{\mu}_{a,n-1}  + (T - m) \times \lambda - \left( - \Gamma^\lambda_m - T \times \Gamma_0^\lambda + \sum_{n=0}^m \Gamma_n^\lambda + (T-m) \times \min_{0 \leq n \leq m} \Gamma_m^\lambda \right)
	\\&= \sum_{n=1}^m \hat{\mu}_{a,n-1}  + (T - m) \times \left( \lambda - \min_{0 \leq n \leq m} \Gamma_m^\lambda \right) + T \times \Gamma_0^\lambda - \sum_{n=0}^{m-1} \Gamma_n^\lambda.
\end{align}
By taking its maximum value over $m=0,\cdots,T$, we obtain \eqref{e-irs-index-2}.
\end{proof}

\newedit{The following pseudo code implements the arm selection rule of the \textsc{Irs.Index} policy when remaining time is $T$ and current belief is $\mathbf{y}$.
In line 14, the infimum can be found via the bisection method, and $\tilde{\mathbf{y}}_{a,0:T} \defeq ( \tilde{y}_{a,0}, \ldots, \tilde{y}_{a,T} )$ represents the sequence of beliefs under the sampled outcome.
}

\begin{algorithm2e}[H]
  \SetAlgoLined\DontPrintSemicolon
  \SetKwFunction{algosingle}{IRS.Single.Worth-Trying}
  \SetKwProg{myalg}{Function}{}{}
  \myalg{\algosingle{$a, T, \lambda, \newedit{\tilde{\mathbf{y}}_{a,0:T}} $}}{
\nl    $\tilde{\Gamma}_n^\lambda \gets \E_{\newedit{\tilde{y}_{a,n}}}\left[ \max( \mu_a(\theta_a), \lambda ) \right], \forall n \in \{0,\ldots,T\}$ \;
\nl	$\tilde{S}^\mu_{a,0} \gets 0$, $\tilde{S}^\Gamma_0 \gets 0$, $\tilde{m}^\Gamma_0 \gets \tilde{\Gamma}_0^\lambda$ \;
\nl	\For{$n=1,\cdots,T$} {
\nl		$\tilde{S}^\mu_{a,n} \gets \tilde{S}^\mu_{a,n-1} + \bar{\mu}_a(\tilde{y}_{a,n-1})$ \;
\nl		 $\tilde{S}^\Gamma_n \gets \tilde{S}^\Gamma_{a,n-1} + \tilde{\Gamma}_n^\lambda$ \;
\nl		 $\tilde{m}^\Gamma_n \gets \min\left( \tilde{m}^\Gamma_{n-1}, \tilde{\Gamma}_{n-1}^\lambda \right)$ \;
	}
\nl	$\tilde{\varphi}_a \gets \max_{1 \leq n \leq T} \left\{ \tilde{S}^\mu_{a,n} + T \times \tilde{\Gamma}^\lambda_0 + (T-n) \times \left( \lambda - \tilde{m}^\Gamma_n \right)  - \tilde{S}^\Gamma_n \right\} - T \times \lambda$ \;
\nl	\uIf{$\tilde{\varphi}_a \geq 0$} {
\nl		\KwRet \text{true} \;
	} \Else {
\nl		\KwRet \text{false} \;
	}
  }{}
\vspace{0.3cm}
  \SetKwFunction{algo}{IRS.Index}
  \SetKwProg{myalg}{Function}{}{}
  \myalg{\algo{$T, \mathbf{y}$}}{
\nl	$\tilde{\theta}_a \sim \Pscr_a(y_a), \tilde{R}_{a,n} \sim \Rscr_a(\tilde{\theta}), \quad \forall n \in \{1, \ldots, T\}, \forall a \in \{1, \ldots, K\}$ \;
\nl	$ \tilde{y}_{a,0} \gets y_a, \quad \tilde{y}_{a,n} \gets \Uscr_a( \tilde{y}_{a,n-1}, \tilde{R}_{a,n} ), \quad \forall n \in \{1, \ldots, T\}, \quad \forall a \in \{1, \ldots, K\}$ \;
\nl	\For{$a=1,\cdots,K$} {
\nl		$\tilde{\lambda}^*_a \gets \inf \left\{ \lambda; \algosingle{$a, T, \lambda, \newedit{\tilde{\mathbf{y}}_{a,0:T}}$} = \text{true} \right\} $ \;
	}
\nl	\KwRet $\argmax_a \tilde{\lambda}^*_a $ \;
  }{}
  \caption{ \newedit{Arm selection rule of \textsc{Irs.Index} policy when remaining time is $T$ and current belief is $\mathbf{y}$} }
\end{algorithm2e}
\vspace{0.5cm}

\section{Proofs for \S \ref{s-framework}} \label{ss-proofs-duality}

\begin{proposition}[Mean equivalence] \label{prop-mean-equivalence}
If the penalty function $z_t$ is dual feasible, \newedit{the presence of penalties does not affect the performance of a non-anticipating policy $\pi$: i.e.,}
\begin{equation}	
	\E^\pi_{\newedit{\mathbf{y}}}\left[ \sum_{t=1}^T r_t( \newedit{\mathbf{A}_{1:t}^\pi}, \omega ) - z_t(\newedit{\mathbf{A}_{1:t}^\pi}, \omega) \right]
	= \E^\pi_{\newedit{\mathbf{y}}}\left[ \sum_{t=1}^T r_t( \newedit{\mathbf{A}_{1:t}^\pi}, \omega ) \right]
	=: V(\pi, T, \mathbf{y}).
\end{equation}
\end{proposition}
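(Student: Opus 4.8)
The plan is to reduce the claim to showing that the expected total penalty vanishes, i.e.\ $\E^\pi_{\omega \sim {\cal I}(T, \mathbf{y})}\big[ \sum_{t=1}^T z_t(\mathbf{a}_{1:t}^\pi; \omega) \big] = 0$; granting this, the stated identity follows immediately by linearity of expectation, since $\E^\pi[\sum_t r_t(\mathbf{a}_{1:t}^\pi;\omega)]$ is by definition $V(\pi,T,\mathbf{y})$. By linearity it further suffices to prove $\E^\pi[z_t(\mathbf{a}_{1:t}^\pi; \omega)] = 0$ for each fixed $t \in [T]$. The central difficulty is that the dual feasibility condition in Definition~\ref{defn-penalty-condition} is stated for a fixed, \emph{deterministic} action prefix $\mathbf{a}_{1:t}$, whereas under a policy $\pi$ the realized prefix $\mathbf{a}_{1:t}^\pi$ is random and history-adapted; the proof is essentially an exercise in bridging these two via the tower property.

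To do so, I would partition the sample space by the realized prefix. For each candidate $\mathbf{a}_{1:t} \in {\cal A}^t$ let $E_{\mathbf{a}_{1:t}} \defeq \{ \mathbf{a}_{1:t}^\pi = \mathbf{a}_{1:t} \}$. Since exactly one such event occurs on every sample path, we have the pointwise decomposition $z_t(\mathbf{a}_{1:t}^\pi; \omega) = \sum_{\mathbf{a}_{1:t} \in {\cal A}^t} \mathbf{1}_{E_{\mathbf{a}_{1:t}}}\, z_t(\mathbf{a}_{1:t}; \omega)$, reducing the task to showing $\E^\pi[\mathbf{1}_{E_{\mathbf{a}_{1:t}}}\, z_t(\mathbf{a}_{1:t}; \omega)] = 0$ for every fixed $\mathbf{a}_{1:t}$.

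The key structural observation is that, because $\pi \in \Pi_\mathbb{F}$ is non-anticipating, each $a_s^\pi$ is $\Fscr_{s-1}$-measurable; in particular $a_t^\pi$ is $\Fscr_{t-1}$-measurable. Hence on the event that the realized prefix equals $\mathbf{a}_{1:t-1}$, the policy-induced history coincides with the natural filtration $\Fscr_{t-1}(\mathbf{a}_{1:t-1}; \omega)$ of that fixed sequence, and the indicator $\mathbf{1}_{E_{\mathbf{a}_{1:t}}}$ is measurable with respect to it. Invoking the tower property and then dual feasibility gives
\[
	\E^\pi\!\left[ \mathbf{1}_{E_{\mathbf{a}_{1:t}}}\, z_t(\mathbf{a}_{1:t}; \omega) \right] = \E^\pi\!\left[ \mathbf{1}_{E_{\mathbf{a}_{1:t}}}\, \E\!\left[ z_t(\mathbf{a}_{1:t}; \omega) \,\middle|\, \Fscr_{t-1}(\mathbf{a}_{1:t-1}; \omega) \right] \right] = 0,
\]
which is exactly what is needed. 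Summing over $\mathbf{a}_{1:t}$ and then over $t$ completes the argument.

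For randomized policies I would augment the probability space with an auxiliary randomization variable $U$ independent of $\omega$, so that conditional on $U$ the policy is deterministic and the display above applies verbatim; integrating $U$ out preserves the zero because $U \perp \omega$ ensures $\E[z_t(\mathbf{a}_{1:t};\omega) \mid \Fscr_{t-1}(\mathbf{a}_{1:t-1};\omega), U] = \E[z_t(\mathbf{a}_{1:t};\omega) \mid \Fscr_{t-1}(\mathbf{a}_{1:t-1};\omega)] = 0$. The main obstacle is the measurability bookkeeping in the previous paragraph: one must carefully verify that, restricted to $E_{\mathbf{a}_{1:t}}$, the $\sigma$-algebra generated by the policy's observed history really does agree with $\Fscr_{t-1}(\mathbf{a}_{1:t-1}; \omega)$, so that the fixed-sequence dual feasibility condition can be legitimately applied to the random-prefix quantity.
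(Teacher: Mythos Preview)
Your proposal is correct and follows essentially the same approach as the paper: reduce to $\E^\pi[z_t(\mathbf{a}_{1:t}^\pi;\omega)]=0$, decompose the random action into deterministic cases, use non-anticipativity to pull out the indicator, and apply dual feasibility via the tower property. The only cosmetic difference is that the paper decomposes just the final coordinate $a_t$ (writing $z_t(\mathbf{a}_{1:t}^\pi)=\sum_{a}\mathbf{1}\{a_t^\pi=a\}\,z_t(\mathbf{a}_{1:t-1}^\pi\oplus a)$ and treating $\mathbf{a}_{1:t-1}^\pi$ as already $\Fscr_{t-1}$-measurable), whereas you partition over the full prefix $\mathbf{a}_{1:t}$; your version is more explicit about the measurability bookkeeping you flag, but the underlying argument is the same.
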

\begin{proof}
	\newedit{The claim immediately follows from the definition of dual feasibility and the linearity of the expectation operator.}
\end{proof}
\iffalse
\begin{proof}
We define an appending operator $\oplus$ that concatenates an element into a vector such that $\mathbf{a}_{1:t} \equiv \mathbf{a}_{1:t-1} \oplus a_t$.
When $\pi \in \Pi_\mathbb{F}$ and $z_t$ is dual feasible and $\omega$ is omitted for brevity, we have
\begin{align}
	\E\left[ \sum_{t=1}^T r_t( \mathbf{a}_{1:t}^\pi ) - z_t(\mathbf{a}_{1:t}^\pi) \right]
		&= \E\left[ \sum_{t=1}^T r_t( \mathbf{a}_{1:t}^\pi ) - \E\left[ \left. z_t(\mathbf{a}_{1:t}^\pi ) \right| \Fscr_{t-1} \right] \right]
		\\&= \E\left[ \sum_{t=1}^T r_t( \mathbf{a}_{1:t}^\pi ) - \E\left[ \left. \sum_{a \in \Ascr} z_t(\mathbf{a}_{1:t-1}^\pi \oplus a ) \cdot \mathbf{1}\{ a_t^\pi = a \} \right| \Fscr_{t-1} \right] \right]
		\\&= \E\left[ \sum_{t=1}^T \left( r_t( \mathbf{a}_{1:t}^\pi ) - \sum_{a \in \Ascr} \underbrace{ \E\left[ \left.  z_t(\mathbf{a}_{1:t-1}^\pi \oplus a ) \right| \Fscr_{t-1} \right] }_{=0} \cdot \mathbf{1}\{ a_t^\pi = a \} \right)  \right]
		\\&= \E\left[ \sum_{t=1}^T r_t( \mathbf{a}_{1:t}^\pi ) \right].
\end{align}
\end{proof}
\fi

\subsection{Proof of Theorem \ref{thm-weak-duality}} \label{prf-weak-daulity}

\newedit{Despite that the results of Theorem \ref{thm-weak-duality} were already well established in \citet{Brown10}, we provide the detailed proof as our context is slightly different from that of \citet{Brown10} regarding the measurability of $r_t$.
We define an appending operator $\oplus$ that concatenates an element into a vector so that $\mathbf{a}_{1:t} = \mathbf{a}_{1:t-1} \oplus a_t$.}

\noindent \textbf{Weak duality.}
Define \newedit{the filtration for the perfect information relaxation} $\Gscr_t \defeq \Fscr_t \cup \sigma(\omega)$ and consider a relaxed policy space $\Pi_\mathbb{G} \defeq \left\{ \pi: \newedit{A_t^\pi} \text{ is $\Gscr_{t-1}$-measurable, } \forall t \right\} $.
Then, we have
\begin{align}
	V^*(T, \mathbf{y}) &\defeq \sup_{\pi \in \Pi_\mathbb{F}} \E\left[ \sum_{t=1}^T r_t( \newedit{\mathbf{A}_{1:t}^\pi} ) \right]
		\stackrel{ \text{Prop \ref{prop-mean-equivalence}} }{=} \sup_{\pi \in \Pi_\mathbb{F}} \E\left[ \sum_{t=1}^T r_t( \newedit{\mathbf{A}_{1:t}^\pi} ) - z_t( \newedit{\mathbf{A}_{1:t}^\pi} ) \right]
		\\&\leq \sup_{\pi \in \Pi_\mathbb{G}} \E\left[ \sum_{t=1}^T r_t( \newedit{\mathbf{A}_{1:t}^\pi} ) - z_t( \newedit{\mathbf{A}_{1:t}^\pi} ) \right]
		= \E\left[ \max_{\mathbf{a}_{1:T} \in \Ascr^T} \sum_{t=1}^T r_t( \mathbf{a}_{1:t} ) - z_t( \mathbf{a}_{1:t} ) \right]
		\\&= W^z(T, \mathbf{y}),
\end{align}
where the inequality holds since $\Pi_\mathbb{F} \subseteq \Pi_\mathbb{G}$.
\qed

%\subsection{Proof of Theorem \ref{thm-strong-duality} } \label{prf-strong-duality}
\noindent \textbf{Strong duality.}	\label{prf-strong-duality}
Fix $T$ and $\mathbf{y}$.
Let $V^\textup{in}_t(\mathbf{a}_{1:t-1}, \omega)$ and $Q^\textup{in}_t(\mathbf{a}_{1:t-1}, a, \omega)$ be, respectively, the value function and the state-action value (Q-value) function that are associated with the inner problem \eqref{e-inner-problem} given a particular outcome $\omega$ under the ideal penalty \eqref{e-ideal-penalty}.
With $V^\textup{in}_{T+1} \equiv 0$, we have the \newedit{following} Bellman equation for the inner problem:
\begin{align}
	Q^\textup{in}_t(\mathbf{a}_{1:t-1}, a, \omega) 
		&\defeq r_t(\mathbf{a}_{1:t-1} \oplus a, \omega) - z_t^\text{ideal}(\mathbf{a}_{1:t-1} \oplus a, \omega) + V^\textup{in}_{t+1}(\mathbf{a}_{1:t-1} \oplus a, \omega),
	\\
	V^\textup{in}_t(\mathbf{a}_{1:t-1}, \omega)
		&= \max_{a \in \Ascr} \left\{ Q^\textup{in}_t(\mathbf{a}_{1:t-1}, a, \omega) \right\}.
\end{align}
We argue by induction to show that
\begin{align}
	V^\textup{in}_t(\mathbf{a}_{1:t-1}, \omega) &= V^*(T-t+1, \mathbf{y}_{t-1}(\mathbf{a}_{1:t-1},\omega)),
	\\
	Q^\textup{in}_t(\mathbf{a}_{1:t-1}, a, \omega) &= Q^*(T-t+1, \mathbf{y}_{t-1}(\mathbf{a}_{1:t-1}, \omega), a),
\end{align}
for all $\mathbf{a}_{1:t-1} \in \Ascr^{t-1}$, $a \in \Ascr$ and $t \in \{1, \ldots, T+1\}$.

As a terminal case, when $t=T+1$, the claim holds trivially, since $V^\textup{in}_{T+1}( \mathbf{a}_{1:T}, \omega) = 0 = V^*(0, \mathbf{y}_{T}(\mathbf{a}_{1:T},\omega))$.
Now assume that the claim holds for $t+1$: \newedit{i.e.,} $V^\textup{in}_{t+1}(\mathbf{a}_{1:t}, \omega) = V^*(T-t, \mathbf{y}_t(\mathbf{a}_{1:t},\omega))$ for all $\mathbf{a}_{1:t} \in \Ascr^t$.
For any $\mathbf{a}_{1:t-1} \in \Ascr^{t-1}$ and $a \in \Ascr$, then,
\begin{align}
	Q^\textup{in}_t(\mathbf{a}_{1:t-1}, a, \omega)
		&= r_t(\mathbf{a}_{1:t-1} \oplus a, \omega) - z_t^\text{ideal}(\mathbf{a}_{1:t-1} \oplus a, \omega) + V^\textup{in}_{t+1}(\mathbf{a}_{1:t-1} \oplus a, \omega) 
		\\&= \E\left[ \left. r_t( \mathbf{a}_{1:t-1} \oplus a, \omega ) + V^*\left( T-t, \mathbf{y}_t(\mathbf{a}_{1:t-1} \oplus a, \omega) \right) \right| \newedit{H_{t-1}}(\mathbf{a}_{1:t-1}, \omega) \right] 
			\\ & \quad \quad \underbrace{ - V^*\left( T-t, \mathbf{y}_t(\mathbf{a}_{1:t-1} \oplus a, \omega) \right) +  V^\textup{in}_{t+1}(\mathbf{a}_{1:t-1} \oplus a, \omega) }_{=0}
		\\&= \E\left[ \left. r_t( \mathbf{a}_{1:t-1} \oplus a, \omega ) + V^*\left( T-t, \mathbf{y}_t(\mathbf{a}_{1:t-1} \oplus a, \omega) \right) \right| \newedit{H_{t-1}}(\mathbf{a}_{1:t-1}, \omega) \right]
		\\&= \E_{\newedit{\mathbf{y}_{t-1}(\mathbf{a}_{1:t-1},\omega)}} \left[ R_a + V^*\left( T-t, \Uscr( \mathbf{y}_{t-1}(\mathbf{a}_{1:t-1},\omega), a, R_a) \right) \right] 
		\\&= Q^*( T-t, \mathbf{y}_{t-1}(\mathbf{a}_{1:t-1}, \omega), a ),
\end{align}
where the last equality follows from the original Bellman equation \eqref{e-bellman}.
Consequently, \newedit{we obtain}
\begin{align}
	V^\textup{in}_t(\mathbf{a}_{1:t-1}, \omega)
		&= \max_{a \in \Ascr} \left\{ Q^\textup{in}_t(\mathbf{a}_{1:t-1}, a, \omega) \right\}
		\\&= \max_{a \in \Ascr} \left\{ Q^*( T-t, \mathbf{y}_{t-1}(\mathbf{a}_{1:t-1}, \omega), a ) \right\}
		\\&= V^*( T-t, \mathbf{y}_{t-1}(\mathbf{a}_{1:t-1}, \omega) ).
\end{align}
Therefore the claim holds for all $t=1,\cdots,T$.
In particular for $t=1$, we have
\begin{equation} \label{e-strong-duality-1}
	V_1^\textup{in}(\emptyset, \omega ) = V^*(T, \mathbf{y})
	, \quad Q_1^\textup{in}(\emptyset, a, \omega ) = Q^*(T, \mathbf{y}, a)
	, \quad \forall \omega.
\end{equation}
Note that the maximal value of the inner problem does not depend on the outcome $\omega$, \newedit{i.e., it} is deterministic with respect to the randomness of $\omega$.
As its expected value, $W^\text{ideal}(T, \mathbf{y}) = V^*(T,\mathbf{y})$.
\qed

\subsection{Proof of Remark \ref{rem-ideal-penalty-optimality}}

We proceed on the proof of strong duality.
The policy $\pi^\text{ideal}$ solves the same inner problem with respect to a randomly sampled outcome $\tilde{\omega}$.
When the remaining time is $T$ and the current belief is $\mathbf{y}$, it takes an action with the largest Q-value: together with \eqref{e-strong-duality-1}, it yields
\begin{equation}
	a^{\pi^\text{ideal}} = \argmax_a Q_1^\text{in}( \emptyset, a, \tilde{\omega}) = \argmax_a Q^*(T, \mathbf{y}, a ).
\end{equation}
Therefore, at each moment, \newedit{irrespective of the sampled outcome $\tilde{\omega}$}, the policy $\pi^\text{ideal}$ always takes the same action that the Bayesian optimal policy would take.
Although there might be some ambiguity regarding tie breaking in $\argmax$, it does not affect the expected performance.
Therefore, $V(\pi^\text{ideal}, T, \mathbf{y}) = V^*(T, \mathbf{y})$.
\qed

\subsection{Proof of Remark \ref{rem-dual-feasibility}} \label{prf-dual-feasible}
\newedit{
First observe that for any non-anticipating policy $\pi \in \Pi_\mathbb{F}$, since $A_t^\pi$ is $\Fscr_{t-1}$-measurable, we have
\begin{equation}
	\E_\mathbf{y}\left[ \sum_{t=1}^T r_t(\mathbf{A}_{1:t}^\pi, \omega ) \right]
		= \E_\mathbf{y}\left[ \sum_{t=1}^T \E\left( \left. r_t(\mathbf{A}_{1:t}^\pi, \omega ) \right| \Fscr_{t-1}, \bm{\theta} \right) \right]
		= \E_\mathbf{y}\left[ \sum_{t=1}^T \mu_{A_t^\pi}(\theta_{A_t^\pi}) \right].
\end{equation}
Since $\E[r_t(\mathbf{a}_{1:t}, \omega) | \bm{\theta}] = \mu_{a_t}(\theta_{a_t})$ for any $\mathbf{a}_{1:t} \in \Ascr^t$, we further deduce that
\begin{equation}
	\E_\mathbf{y}\left[ \sum_{t=1}^T z_t^\textsc{TS}( \mathbf{A}_{1:t}^\pi, \omega ) \right]
		= \E_\mathbf{y}\left[ \sum_{t=1}^T r_t(\mathbf{A}_{1:t}^\pi, \omega ) \right] - \E_\mathbf{y}\left[ \sum_{t=1}^T \mu_{A_t^\pi}(\theta_{A_t^\pi}) \right]
		= 0,
\end{equation}
and thus $z_t^\textsc{TS}$ is dual feasible.

Also observe that $\E[r_t(\mathbf{a}_{1:t}) | \hat{\bm{\mu}}_{T-1} ] = \E[ \mu_{a_t} | \hat{\bm{\mu}}_{T-1} ] = \E[ \mu_{a_t} | \hat{\bm{\mu}}_{T-1}, H_{t-1} ]$ and $\E[ r_t(\mathbf{a}_{1:t}) | H_{t-1} ] = \E[ \mu_{a_t} | H_{t-1} ]$ for any $\mathbf{a}_{1:t} \in \Ascr^t$.
We can easily verify that each of penalty functions \eqref{e-ideal-penalty}--\eqref{e-penalty-irs-vemax} has a form of
\begin{equation}
	z_t(\mathbf{a}_{1:t}, \omega ) = z_t^\textsc{TS}(\mathbf{a}_{1:t}, \omega) + w_t(\mathbf{a}_{1:t}, \omega) - \E[ w_t(\mathbf{a}_{1:t}, \omega) | G_{t-1}(\mathbf{a}_{1:t-1}, \omega) ],
\end{equation}
for some deterministic function $w_t$ and some relaxed information set $G_{t-1} \supseteq H_{t-1}$.
By invoking Proposition 2.3 (iii) of \citet{Brown10}, we have that $z_t^\textsc{Irs.FH} - z_t^\textsc{TS}$, $z_t^\textsc{Irs.V-Zero} - z_t^\textsc{TS}$, $z_t^\textsc{Irs.V-EMax} - z_t^\textsc{TS}$, and $z_t^\text{ideal} - z_t^\textsc{TS}$ are dual feasible, and therefore so are $z_t^\textsc{Irs.FH}$, $z_t^\textsc{Irs.V-Zero}$, $z_t^\textsc{Irs.V-EMax}$, and $z_t^\text{ideal}$.
}
\qed

\section{Proofs for \S \ref{s-analysis}}

\subsection{Notes on Regularity}
\begin{proposition} \label{prop-integrability}
	If $\newedit{\E_\mathbf{y}}|R_{a,n}| < \infty$ for all $a$,
	\begin{equation}
		\newedit{\E_\mathbf{y}}| \mu_a(\theta_a) |  < \infty,
		\quad \text{and} \quad
		W^\textsc{TS}(T, \mathbf{y}) < \infty
		, \quad \forall T \in \N.
	\end{equation}
\end{proposition}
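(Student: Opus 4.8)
The plan is to handle the two claims separately, using only Jensen's inequality and the tower property. For the first claim, I would start from the definition $\mu_a(\theta_a) = \E[R_{a,n}\mid\theta_a]$ and apply the conditional Jensen inequality to obtain, pointwise in $\theta_a$, that $|\mu_a(\theta_a)| = \left|\E[R_{a,n}\mid\theta_a]\right| \leq \E[\,|R_{a,n}|\mid\theta_a\,]$. Taking expectations over the prior $\theta_a \sim {\cal P}_a(y_a)$ and invoking the tower property then yields $\E|\mu_a(\theta_a)| \leq \E\big[\E[\,|R_{a,n}|\mid\theta_a\,]\big] = \E|R_{a,n}| < \infty$, which is exactly the standing absolute-integrability hypothesis.

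For the second claim, I would recall the closed-form expression $W^\textsc{TS}(T,\mathbf{y}) = \E\big[\,T \times \max_{a \in {\cal A}} \mu_a(\theta_a)\,\big]$ from \eqref{e-full-info-benchmark}, so that it suffices to show $\max_a \mu_a(\theta_a)$ is integrable. The key elementary bound is that for finitely many reals one has $\big|\max_{a} \mu_a(\theta_a)\big| \leq \sum_{a=1}^K |\mu_a(\theta_a)|$, which I would verify by splitting into the case where the maximum is nonnegative (then it is bounded by $\max_a |\mu_a(\theta_a)| \leq \sum_a |\mu_a(\theta_a)|$) and the case where every term is negative (then the magnitude of the maximum equals $\min_a |\mu_a(\theta_a)| \leq \sum_a |\mu_a(\theta_a)|$). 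Taking expectations and applying the first claim termwise gives $\E\big|\max_a \mu_a(\theta_a)\big| \leq \sum_{a=1}^K \E|\mu_a(\theta_a)| < \infty$, and hence $W^\textsc{TS}(T,\mathbf{y})$ is well defined and finite for every finite $T$.

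I do not expect a genuine obstacle here, as the statement is routine integrability bookkeeping. The only points deserving mild care are (i) the justification of the pointwise conditional Jensen step, which is legitimate precisely because the assumption $\E_{\theta_a \sim {\cal P}_a(y_a),\, r \sim {\cal R}_a(\theta_a)}[\,|r|\,] < \infty$ ensures the conditional expectation defining $\mu_a$ is itself well defined and finite almost surely, and (ii) the sign handling in bounding $\big|\max_a \mu_a(\theta_a)\big|$, since the maximal mean reward need not be nonnegative and the naive estimate $\max_a \mu_a(\theta_a) \leq \max_a |\mu_a(\theta_a)|$ controls only the upper tail.
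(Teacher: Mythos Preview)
Your proposal is correct and follows essentially the same route as the paper: conditional Jensen plus the tower property for the first claim, and the bound $\max_a \mu_a(\theta_a) \leq \sum_a |\mu_a(\theta_a)|$ for the second. Your treatment is in fact slightly more careful than the paper's, since you control $\bigl|\max_a \mu_a(\theta_a)\bigr|$ rather than just its upper tail, but the core argument is identical.
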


\begin{proof}
By Jensen's inequality,
\begin{equation}
	\newedit{\E_\mathbf{y}}| \mu_a(\theta_a) | 
		= \newedit{\E_\mathbf{y}}\left[ \left| \E\left( R_{a,n} | \theta_a \right) \right| \right]
		\leq \newedit{\E_\mathbf{y}}\left[ \E\left( \left. | R_{a,n} | \right| \theta_a \right) \right]
		= \newedit{\E_\mathbf{y}}| R_{a,n} |
		< \infty.
\end{equation}
Consequently,
\begin{equation}
	\newedit{\E_\mathbf{y}}\left[ \max_a \mu_a(\theta_a) \right] 
		\leq \newedit{\E_\mathbf{y}}\left[ \sum_{a=1}^K |\mu_a(\theta_a)| \right]
		= \sum_{a=1}^K \newedit{\E_\mathbf{y}}| \mu_a(\theta_a) |
		< \infty.
\end{equation}
\newedit{The claim holds since $W^\textsc{TS}(T, \mathbf{y}) = T \times \E_\mathbf{y}[ \max_a \mu_a(\theta_a) ]$.}
\end{proof}

\begin{proposition} \label{prop-convergence}
	If $\newedit{\E_\mathbf{y}}|R_{a,n}| < \infty$,
	\begin{equation}
		\lim_{n \rightarrow \infty} \hat{\mu}_{a,n}(\omega; y_a) = \lim_{n \rightarrow \infty} \frac{1}{n} \sum_{i=1}^n R_{a,i} = \mu_a(\theta_a)
			\quad \text{almost surely},
	\end{equation}
	where $\hat{\mu}_{a,n}(\omega; y_a) \defeq \newedit{\E_{y_a}}\left[ \mu_a(\theta_a) | R_{a,1}, \cdots, R_{a,n} \right]$.
\end{proposition}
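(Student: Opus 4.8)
The plan is to establish the two equalities separately---using the strong law of large numbers (SLLN) for the rightmost one and the martingale convergence theorem for the leftmost one---and then to glue them together by identifying the limits.

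First I would prove $\frac{1}{n}\sum_{i=1}^n R_{a,i} \to \mu_a(\theta_a)$ almost surely. Conditional on $\theta_a$, the rewards $R_{a,1}, R_{a,2},\ldots$ are i.i.d.\ draws from ${\cal R}_a(\theta_a)$ with mean $\mu_a(\theta_a) = \E[R_{a,n}\mid\theta_a]$. Since $\E|R_{a,n}| < \infty$, Fubini's theorem gives $\E[\,|R_{a,n}|\mid\theta_a\,] < \infty$ for ${\cal P}_a(y_a)$-almost every $\theta_a$, so the SLLN applies conditionally and yields $\frac{1}{n}\sum_{i=1}^n R_{a,i} \to \mu_a(\theta_a)$ almost surely under $\PR(\cdot\mid\theta_a)$ for almost every $\theta_a$; integrating this conditional statement over the prior delivers the unconditional almost-sure convergence. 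In particular, $\mu_a(\theta_a)$ is measurable with respect to $\Gscr_\infty \defeq \sigma(R_{a,1}, R_{a,2},\ldots)$.

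Second, for the posterior mean, write $\Gscr_n \defeq \sigma(R_{a,1},\ldots,R_{a,n})$ and observe that $\mu_{a,n}(\omega) = \E[\mu_a(\theta_a)\mid\Gscr_n]$ is a Doob martingale with respect to the increasing filtration $(\Gscr_n)_{n\geq 0}$, while $\mu_a(\theta_a)$ is integrable by Proposition \ref{prop-integrability}. Lévy's upward convergence theorem then gives $\mu_{a,n}(\omega) \to \E[\mu_a(\theta_a)\mid\Gscr_\infty]$ almost surely (and in $L^1$). Because $\mu_a(\theta_a)$ is $\Gscr_\infty$-measurable by the first step, the conditional expectation collapses: $\E[\mu_a(\theta_a)\mid\Gscr_\infty] = \mu_a(\theta_a)$, which is exactly the desired limit.

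Both steps are individually standard; the care lies at the interface. The SLLN must be invoked \emph{conditionally} on $\theta_a$, since the rewards are independent only after conditioning (unconditionally they form an exchangeable, mixed sequence), and the passage from the conditional to the unconditional almost-sure statement relies on the integrability transfer $\E|R_{a,n}| < \infty \Rightarrow \E[|R_{a,n}|\mid\theta_a] < \infty$ a.s. The decisive structural point---that the martingale limit equals $\mu_a(\theta_a)$---hinges on the first step having exhibited $\mu_a(\theta_a)$ as a function of the reward sequence alone, so that conditioning on $\Gscr_\infty$ leaves it fixed; this is also precisely what legitimizes the section's redefinition of $\omega$ in \eqref{e-outcome-ext} to omit $\bm\theta$.
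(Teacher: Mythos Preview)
Your proof is correct and follows essentially the same route as the paper: apply the SLLN (conditionally on $\theta_a$) to show $\frac{1}{n}\sum_{i=1}^n R_{a,i}\to\mu_a(\theta_a)$ almost surely, deduce that $\mu_a(\theta_a)$ is measurable with respect to $\sigma(R_{a,1},R_{a,2},\ldots)$, and then invoke L\'evy's upward theorem on the Doob martingale $\mu_{a,n}=\E[\mu_a(\theta_a)\mid R_{a,1},\ldots,R_{a,n}]$. The paper's argument is identical but terser; your version adds the useful detail that the SLLN must be applied conditionally and that the integrability hypothesis transfers to the conditional moments via Fubini.
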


\begin{proof}
Fix $a$ and let $\Hscr_n \defeq \sigma\left( R_{a,1}, \cdots, R_{a,n} \right)$.
First note that, by the strong law of large numbers, $\lim_{n \rightarrow \infty} \frac{1}{n} \sum_{i=1}^n R_{a,i} = \mu_a(\theta_a)$ almost surely.
Therefore, $\mu_a(\theta_a)$ is measurable with respect to $\Hscr_\infty \defeq \bigcup_n \Hscr_n$.
Also note that $\hat{\mu}_{a,n} = \E\left( \mu_a(\theta_a) | \Hscr_n \right)$ is a Doob martingale adapted to $\Hscr_n$.
By Levy's upward theorem, since $\mu_a(\theta_a) \in \Lscr^1$ by Proposition \ref{prop-integrability}, $\hat{\mu}_{a,n}$ converges to $\E\left( \mu_a(\theta_a) | \Hscr_\infty \right) = \mu_a(\theta_a)$ almost surely as $n \rightarrow \infty$.
\end{proof}

\subsection{Proof of Proposition \ref{prop-asymptotic-behavior}} \label{prf-asymptotic-behavior}

\noindent \textbf{Asymptotic behavior of \textnormal{$\pi^\textsc{Irs.FH}$}.}
Let $\tilde{\omega}$ be the sampled outcome used by $\pi^\textsc{Irs.FH}$.
By Proposition \ref{prop-convergence}, we have $\lim_{n \rightarrow \infty} \hat{\mu}_{a,n}(\tilde{\omega}) = \mu_a(\tilde{\theta}_a)$ for almost all $\tilde{\omega}$.
This, together with the assumption that $\mu_i(\theta_i) \ne \mu_j(\theta_j)$ for $i \ne j$, since $\argmax_a \mu_a(\tilde{\theta}_a)$ is uniquely defined for almost all $\tilde{\omega}$, yields
\begin{equation}
	\argmax_a \mu_a(\tilde{\theta}_a)
	= \argmax_a \lim_{n \rightarrow \infty} \hat{\mu}_{a,n}(\tilde{\omega})
	= \lim_{n \rightarrow \infty} \argmax_a \hat{\mu}_{a,n}(\tilde{\omega})
	\quad \text{a.s.}
\end{equation}
Since almost-sure convergence guarantees convergence in distribution, for any $a \in \Ascr$,
\begin{align}
	\lim_{T \rightarrow \infty} \PR\left[ \newedit{A^\textsc{Irs.FH}(T, \mathbf{y})} = a \right]
		&= \lim_{T \rightarrow \infty} \PR\left[ \argmax_{a'} \hat{\mu}_{a',T-1}(\tilde{\omega}) = a \right]
		\\&= \PR\left[ \argmax_{a'} \mu_{a'}(\tilde{\theta}_{a'}) = a \right]
		\\&= \PR\left[ \newedit{A^\textsc{TS}(\mathbf{y})} = a \right].
\end{align}
Note that we are not assuming that $\pi^\textsc{Irs.FH}$ and $\pi^\textsc{TS}$ share the randomness.
The sampled parameters used in $\pi^\textsc{TS}$ are not necessarily the ones used in $\pi^\textsc{Irs.FH}$, but their distributions are identical since they are drawn from the same prior.
\qed

\noindent \textbf{Asymptotic behavior of \textnormal{$\pi^\textsc{Irs.V-Zero}$}.}
\newedit{To simplify notation, let $A_T^\circ \defeq A^\textsc{Irs.V-Zero}(T, \mathbf{y})$.
As above, it suffices to show that $\lim_{T \rightarrow \infty} A_T^\circ = \argmax_{a \in \Ascr} \mu_a(\tilde{\theta}_a) := A^\textsc{TS}$ for almost all sampled outcome $\tilde{\omega}$.
We hide $\tilde{\omega}$ and $\tilde{\theta}_a$ from the notation for the further simplication.
}

Define 
\begin{equation}
	\Delta \defeq \min_{a \ne A^\textsc{TS}} \left| \mu_{A^\textsc{TS}} - \mu_a \right|
	\quad \text{and} \quad
	M \defeq \sup_{a \in \Ascr, n \geq 0} \left| \hat{\mu}_{a,n} \right|.
\end{equation}
We have $0 < \Delta < 2M < \infty$ almost surely since $\mu_i(\tilde{\theta}_i) \ne \mu_j(\tilde{\theta}_j)$ for $i \ne j$ and $\lim_{n \rightarrow \infty} \hat{\mu}_{a,n} = \mu_a < \infty$ almost surely for all $a$.
In addition, there exists $N \in \N$ such that
\begin{equation}
	\left| \hat{\mu}_{a,n} - \mu_a \right| < \frac{\Delta}{4}
	, \quad \forall n \geq N, \quad \forall a \in \Ascr.
\end{equation}
For such $N$, we have
\begin{equation} \label{e-asymptotic-behavior-gap}
	\inf_{n \geq N} \hat{\mu}_{a^\textsc{TS},n}  \geq \sup_{n \geq N} \hat{\mu}_{a,n} +  \frac{\Delta}{2}, \quad \forall a \ne A^\textsc{TS}.
\end{equation}
Note that $A^\textsc{TS}$, $\Delta$, $M$, and $N$ \newedit{do not have the dependency on} $T$.

To argue by contradiction, suppose that $A_T^\circ \ne A^\textsc{TS}$ for some large $T$ such that $T \geq 2 N + \frac{ 8 M N }{ \Delta } + 2$.
Define the optimal allocation to the inner problem of \textsc{Irs.V-Zero} for such $T$:
\begin{equation}
	\mathbf{n}_{1:K}^\circ \defeq \argmax_{\mathbf{n}_{1:K} \in N_T}\left\{ \sum_{a=1}^K \sum_{s=1}^{n_a} \hat{\mu}_{a,s-1} \right\},
\end{equation}
where the ties are broken arbitrarily in $\argmax\{ \}$.
\newedit{We let $n^\circ(a)$ be the $a^\text{th}$ component of $\mathbf{n}_{1:K}^\circ$.
According to the specified arm selection rule, we have} $A_T^\circ = \argmax_a n^\circ(a)$ \newedit{and hence} $n^\circ(A_T^\circ) \geq \lfloor \frac{T}{2} \rfloor$ ($ > N$).
\newedit{We prove the claim for the following two cases:}

\noindent \textbf{Case 1:}
If $n^\circ(a^\textsc{TS}) \geq N$, consider \newedit{an allocation $\mathbf{n}_{1:K}^\dagger$ that is a deviation from the given optimal allocation $\mathbf{n}_{1:K}^\circ$ such that arm $a^\textsc{TS}$ gets one pull whereas arm $A_T^\circ$ gets one less pull}: i.e., $n^\dagger(A^\textsc{TS}) = n^\circ(A^\textsc{TS})+1$, $n^\dagger(A_T^\circ) = n^\circ(A_T^\circ)-1$, and $n^\dagger(a) = n^\circ(a)$ for any $a \notin \{ A^\textsc{TS}, A_T^\circ \}$.
\newedit{The change in the total payoff from this deviation is}
\begin{equation}
	\sum_{a=1}^K \sum_{i=1}^{ n^\dagger(a) } \hat{\mu}_{a,i-1} - \sum_{a=1}^K \sum_{i=1}^{ n^\circ(a) } \hat{\mu}_{a,i-1}
		= \hat{\mu}_{A^\textsc{TS},n^\circ(A^\textsc{TS})} - \hat{\mu}_{A_T^\circ,n^\circ(A_T^\circ)-1}
		\geq \frac{\Delta}{2}
		> 0,
\end{equation}
where the inequality follows from \eqref{e-asymptotic-behavior-gap} \newedit{and that $n^\circ(A^\textsc{TS}) \geq N$ and $n^\circ(A_T^\circ) \geq N$}.
The allocation $\mathbf{n}_{1:K}^\dagger$ \newedit{is strictly better than} $\mathbf{n}_{1:K}^\circ$, which contradicts the assumption that $\mathbf{n}_{1:K}^\circ$ is an optimal allocation.

\noindent \textbf{Case 2:}
If $n^\circ(A^\textsc{TS}) < N$, consider \newedit{an allocation $\mathbf{n}_{1:K}^\dagger$ that is a deviation from the given optimal allocation $\mathbf{n}_{1:K}^\circ$ such that arm $A_T^\circ$ gets no more than $N$ pulls whereas arm $A^\textsc{TS}$ gets the remains}: i.e.,
\begin{equation}
	n^\dagger(a) \defeq \left\{ \begin{array}{ll}
			n^\circ(A^\textsc{TS}) + (n^\circ(A_T^\circ)- N) & \text{ if } a = A^\textsc{TS}, \\
			N & \text{ if } a = A_T^\circ, \\
			n^\circ(a) & \text{ if } a \notin \{ A^\textsc{TS}, A_T^\circ \}.
		\end{array} \right.
\end{equation}
By making this \newedit{the deviation, the total payoff should increase by}
\begin{align}
	& \sum_{a=1}^K \sum_{i=1}^{ n^\dagger(a) } \hat{\mu}_{a,i-1} - \sum_{a=1}^K \sum_{i=1}^{ n^\circ(a) } \hat{\mu}_{a,i-1}
		%\\&=& \sum_{s=\tilde{n}^\circ(\tilde{a}^\textsc{TS}) +1 }^{ \tilde{n}^\dagger(\tilde{a}^\textsc{TS}) } \mu_{\tilde{a}^\textsc{TS},s-1} 
		%	- \sum_{s=\tilde{n}^\dagger(\tilde{a}_T^\circ) +1 }^{ \tilde{n}^\circ(\tilde{a}_T^\circ) } \tilde{\mu}_{\tilde{a}_T^\circ,s-1}
		\\&= \sum_{i=n^\circ(A^\textsc{TS}) +1 }^{ n^\circ(A^\textsc{TS}) + (n^\circ(A_T^\circ)- N) } \hat{\mu}_{A^\textsc{TS},i-1} 
			- \sum_{i=N+1 }^{ n^\circ(A_T^\circ) } \hat{\mu}_{A_T^\circ,i-1}
		\\&\geq - ( N - n^\circ(A^\textsc{TS}) ) \cdot 2M
			+\sum_{i=N +1 }^{ n^\circ(A_T^\circ) } \hat{\mu}_{A^\textsc{TS},i-1} 
			- \sum_{i=N+1 }^{ n^\circ(A_T^\circ) } \hat{\mu}_{A_T^\circ,i-1}
		\\&\geq - ( N - n^\circ(A^\textsc{TS}) ) \cdot 2M
			+ ( n^\circ(A_T^\circ) - N) \cdot \frac{\Delta}{2}
		\\&\geq  ( n^\circ(A_T^\circ) - N) \cdot \frac{\Delta}{2} - 2 N M.
\end{align}
Since $T \geq 2N + \frac{ 8 M N }{ \Delta } + 2$ and $n^\circ(A_T^\circ) \geq \lfloor \frac{T}{2} \rfloor$, the last term is strictly positive, which \newedit{is} a contradiction.

We've shown that for almost all $\tilde{\omega}$, when $T$ is large enough, the optimal allocation $\mathbf{n}_{1:K}^\circ$ must allocate more than a half of the pulls on arm $A^\textsc{TS} = \argmax_a \mu_a(\tilde{\theta}_a)$.
\newedit{This concludes the proof.}

\subsection{Proof of Theorem \ref{thm-monotonicity}} \label{prf-monotonicity}

\subsubsection{Proof of ``\textnormal{$W^\textsc{TS}(T, \mathbf{y}) \geq W^\textsc{Irs.FH}(T, \mathbf{y})$}''}
\begin{proof}
It immediately follows from Jensen's inequality: since $\max(\cdots)$ is a convex function,
\begin{equation}
	W^\textsc{TS}(T, \mathbf{y}) 
		= T \times \newedit{\E_\mathbf{y}}\left[ \max_a \mu_a(\theta_a) \right] 
		\geq T \times  \newedit{\E_\mathbf{y}}\left[ \max_a \E\left( \left. \mu_a(\theta_a) \right| \hat{\bm{\mu}}_{T-1} \right) \right]
		= W^\textsc{Irs.FH}(T, \mathbf{y}).
\end{equation}
\end{proof}

\subsubsection{Proof of ``\textnormal{$W^\textsc{Irs.FH}(T, \mathbf{y}) \geq W^\textsc{Irs.V-Zero}(T, \mathbf{y})$}''} \label{prf-monotonicity2}
\begin{lemma}[Variant of Jensen's inequality] \label{lem-jensen-variation}
	Suppose that $\varphi : \R \rightarrow \R$ is an \textbf{increasing} (deterministic) function. Then, for any real-valued random variable $X$ such that $\E|X| < \infty$,
	\begin{equation}
		\E\left[ \max\left\{ X + \varphi(X), 0 \right\} \right] \geq \E\left[ \max\left\{ \E(X) + \varphi(X), 0 \right\} \right].
	\end{equation}
\end{lemma}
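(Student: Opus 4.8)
The plan is to reduce the claim to a single covariance inequality, using convexity of $t \mapsto \max\{t,0\}$ to generate a linear lower bound and then using the monotonicity of $\varphi$ to sign the residual. Write $m \defeq \E(X)$, a constant. Since $t \mapsto \max\{t,0\}$ is convex and $s = \mathbf{1}\{b > 0\}$ is a valid subgradient at any point $b$ (at $b = 0$ any value in $[0,1]$ is a subgradient, so $0$ is admissible), the subgradient inequality gives, for all reals $a,b$,
\[
	\max\{a,0\} \;\geq\; \max\{b,0\} + \mathbf{1}\{b > 0\}\,(a - b).
\]

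First I would instantiate this pointwise with $a = X + \varphi(X)$ and $b = m + \varphi(X)$, so that the $\varphi(X)$ terms cancel in the difference $a - b = X - m$, yielding
\[
	\max\{X + \varphi(X),0\} - \max\{m + \varphi(X),0\} \;\geq\; \mathbf{1}\{m + \varphi(X) > 0\}\,(X - m).
\]
Because $t \mapsto \max\{t,0\}$ is $1$-Lipschitz, the left-hand difference is dominated in absolute value by $|X - m|$, which is integrable under the hypothesis $\E|X| < \infty$; hence its expectation is well defined and equals the difference of the two expectations in the statement. Taking expectations, it therefore suffices to prove the single inequality
\[
	\E\!\left[ \mathbf{1}\{m + \varphi(X) > 0\}\,(X - m) \right] \;\geq\; 0,
\]
which is exactly where the assumption on $\varphi$ must be used.

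For this last step I would invoke the positive-association (Chebyshev / FKG-type) inequality for monotone functions of a single random variable: if $f$ and $g$ are both non-decreasing, then $\Cov(f(X),g(X)) \geq 0$, which follows by introducing an independent copy $X'$ and observing that $\E\big[(f(X)-f(X'))(g(X)-g(X'))\big] \geq 0$. I set $f(X) = \mathbf{1}\{m + \varphi(X) > 0\}$, which is non-decreasing in $X$ precisely because $\varphi$ is non-decreasing and $b \mapsto \mathbf{1}\{b > 0\}$ is non-decreasing, and $g(X) = X - m$, which is increasing with $\E[g(X)] = 0$. Positive association then gives $\E[f(X)(X - m)] \geq \E[f(X)]\,\E[X - m] = 0$, completing the argument. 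The main obstacle, and the only place monotonicity is essential, is this final step: it is the non-decreasing property of $\varphi$ that makes the indicator $f$ a monotone function of $X$, so that the covariance has the right sign; without it the residual term could be negative and the inequality would fail.
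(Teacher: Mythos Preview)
Your proof is correct and shares the paper's opening move: both linearize $t\mapsto\max\{t,0\}$ via a subgradient at $b=m+\varphi(X)$, reducing the claim to the nonnegativity of $\E\big[\mathbf{1}\{m+\varphi(X)>0\}(X-m)\big]$. The difference lies in how that residual is signed. The paper replaces the random indicator $\mathbf{1}\{\mu+\varphi(X)\geq 0\}$ by the \emph{constant} indicator $\mathbf{1}\{\mu+\varphi(\mu)\geq 0\}$ through a two-case argument (for $x\geq\mu$ and $x<\mu$, monotonicity of $\varphi$ gives $(x-\mu)\mathbf{1}\{\mu+\varphi(x)\geq 0\}\geq(x-\mu)\mathbf{1}\{\mu+\varphi(\mu)\geq 0\}$), after which the expectation vanishes trivially. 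You instead invoke the Chebyshev/FKG covariance inequality for monotone functions of a single variable. Your route is slightly more conceptual and packages the monotonicity step into a named lemma; the paper's route is more elementary and self-contained, needing no auxiliary result. You are also more careful about integrability (the $1$-Lipschitz bound justifying the exchange of expectation and difference), which the paper leaves implicit.
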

\begin{proof}
{\small
Define $\mu \defeq \E(X)$ and $f_x(t) \defeq \max\{ t + \varphi(x), 0 \}$. Since $f_x(\cdot)$ is a convex function for each $x \in \R$,
\begin{equation}
	f_x(t) \geq f_x(\mu) + (t-\mu) \cdot f_x'(\mu) = \max\{ \mu + \varphi(x), 0 \} + (t-\mu) \cdot \mathbf{1}\{ \mu + \varphi(x) \geq 0 \}
	, \quad \forall t, \quad \forall x.
\end{equation}
By setting $t = x$, we get
\begin{equation} \label{e-jensen-variation-eq1}
	\max\{ x + \varphi(x), 0 \}  = f_x(x) \geq \max\{ \mu + \varphi(x), 0 \} + (x-\mu) \cdot \mathbf{1}\{ \mu + \varphi(x) \geq 0 \}
	, \quad \forall x.
\end{equation}
Note that, since $\mathbf{1}\{\mu + \varphi(x) \geq 0\}$ is increasing in $x$, (i) for any $x \geq \mu$, $(x - \mu) \geq 0$ and $\mathbf{1}\{\mu + \varphi(x)\} \geq \mathbf{1}\{\mu + \varphi(\mu)\}$, and (ii) for any $x < \mu$, $(x - \mu) < 0$ and $\mathbf{1}\{\mu + \varphi(x)\} \leq \mathbf{1}\{\mu + \varphi(\mu)\}$.
Therefore,
\begin{equation}
	(x - \mu) \cdot \mathbf{1}\{ \mu + \varphi(x) \geq 0 \} \geq (x - \mu) \cdot \mathbf{1}\{ \mu + \varphi(\mu) \geq 0 \}, \quad \forall x \in \R.
\end{equation}
Combining this with \eqref{e-jensen-variation-eq1}, we get
\begin{equation} \label{e-jensen-variation-eq2}
	\max\{ x + \varphi(x), 0 \} \geq \max\{ \mu + \varphi(x), 0 \} + (x - \mu) \cdot \mathbf{1}\{ \mu + \varphi(\mu) \geq 0 \}, \quad
		\forall x \in \R.
\end{equation}
For random variable $X$, by taking expectation, we get
\begin{align}
	\E\left[ \max\{ X + \varphi(X), 0 \} \right]
		&\geq \E\left[ \max\{ \mu + \varphi(X), 0 \} + (X - \mu) \cdot \mathbf{1}\{ \mu + \varphi(\mu) \geq 0 \} \right]
		\\&\geq \E\left[ \max\{ \mu + \varphi(X), 0 \} \right]  + \E( X - \mu ) \cdot \mathbf{1}\{ \mu + \varphi(\mu) \geq 0 \} 
		\\&= \E\left[ \max\{ \mu + \varphi(X), 0 \} \right].
\end{align}
}
\end{proof}

\begin{corollary} \label{cor-jensen-variation-1}
	On a probability space $(\Omega, \Fscr, \PR)$, let $\varphi(x, \omega):\R \times \Omega \rightarrow \R$ be a function such that (i) the mapping $x \mapsto \varphi(x, \omega)$ is \textbf{increasing} for each $\omega \in \Omega$ and (ii) for some sub-$\sigma$-field $\Hscr \subseteq \Fscr$, the mapping $\omega \mapsto \varphi(x, \omega)$ is \textbf{$\Hscr$-measurable} for each $x \in \R$ (i.e., $\varphi(\cdot, \omega)$ is a deterministic function conditioned on $\Hscr$).
 Then
	\begin{equation}
		\E\left[ \max\left\{ X(\omega) + \varphi(X(\omega), \omega), 0 \right\} \right] \geq \E\left[ \max\left\{ \E(X|\Hscr)(\omega) + \varphi(X(\omega), \omega), 0 \right\} \right].
	\end{equation}
\end{corollary}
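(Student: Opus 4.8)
The plan is to mimic the proof of Lemma \ref{lem-jensen-variation} almost verbatim, but carried out $\omega$ by $\omega$ and with the unconditional mean replaced by the conditional mean $\mu(\omega)\defeq\E(X|\Hscr)(\omega)$. The key observation is that, by hypothesis (ii), for every fixed $\omega$ the map $x\mapsto\varphi(x,\omega)$ is an ordinary deterministic non-decreasing function, so the pointwise convexity estimate used in the lemma applies verbatim to that fixed function evaluated at the fixed scalar $\mu(\omega)$.

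First I would establish the analogue of \eqref{e-jensen-variation-eq2}: for each $\omega$, applying the lemma's pointwise argument to the non-decreasing function $\varphi(\cdot,\omega)$ at the value $\mu=\mu(\omega)$ and then substituting $x=X(\omega)$ yields
\begin{equation*}
	\max\{X+\varphi(X,\cdot),0\}\ \geq\ \max\{\mu+\varphi(X,\cdot),0\}+(X-\mu)\cdot\mathbf{1}\{\mu+\varphi(\mu,\cdot)\geq 0\},
\end{equation*}
where every quantity is evaluated at $\omega$ and $\mu=\mu(\omega)$. The two ingredients are exactly those of the lemma: convexity of $t\mapsto\max\{t+\varphi(X(\omega),\omega),0\}$ gives the subgradient bound, and monotonicity of $x\mapsto\varphi(x,\omega)$ makes $x\mapsto\mathbf{1}\{\mu(\omega)+\varphi(x,\omega)\geq 0\}$ non-decreasing, which is what lets me replace the indicator $\mathbf{1}\{\mu+\varphi(X,\cdot)\geq 0\}$ by $\mathbf{1}\{\mu+\varphi(\mu,\cdot)\geq 0\}$ without increasing the right-hand side.

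Next I would take expectations and show the cross term vanishes. Since $\mu$ is $\Hscr$-measurable and $\varphi$ is jointly measurable in $(x,\omega)$ (being $\Hscr$-measurable in $\omega$ and monotone, hence Borel, in $x$), the map $\omega\mapsto\varphi(\mu(\omega),\omega)$ is $\Hscr$-measurable, and therefore so is $\mathbf{1}\{\mu+\varphi(\mu,\cdot)\geq 0\}$. Hence, by the tower property together with the defining property $\E(X|\Hscr)=\mu$,
\begin{equation*}
	\E\left[(X-\mu)\cdot\mathbf{1}\{\mu+\varphi(\mu,\cdot)\geq 0\}\right]
	=\E\left[\mathbf{1}\{\mu+\varphi(\mu,\cdot)\geq 0\}\cdot\E(X-\mu\,|\,\Hscr)\right]=0.
\end{equation*}
Taking expectations across the pointwise inequality and discarding this zero term then gives the claimed bound.

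I expect the only delicate point to be measurability: the cross term is annihilated precisely because its indicator factor is $\Hscr$-measurable, which is what permits pulling $\E(X-\mu|\Hscr)=0$ through. This relies on hypothesis (ii) combined with joint measurability of $\varphi$ and the $\Hscr$-measurability of $\mu$; the monotonicity hypothesis (i) is used only in the pointwise step. Everything else is a routine replay of Lemma \ref{lem-jensen-variation} performed inside the conditional expectation.
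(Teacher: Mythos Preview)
Your proposal is correct and follows essentially the same approach as the paper: apply the pointwise inequality \eqref{e-jensen-variation-eq2} from Lemma~\ref{lem-jensen-variation} with $\mu$ replaced by the $\Hscr$-measurable $\mu(\omega)=\E(X|\Hscr)(\omega)$, then take expectations and kill the cross term via the tower property using the $\Hscr$-measurability of the indicator $\mathbf{1}\{\mu+\varphi(\mu,\cdot)\geq 0\}$. Your discussion of joint measurability of $\varphi$ is a welcome bit of extra care that the paper leaves implicit.
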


\begin{proof}
{\small
Define
\begin{equation}
	\mu(\omega) \defeq \E(X | \Hscr)(\omega), \quad
	I(\omega) \defeq \mathbf{1}\{ \mu(\omega) + \varphi(\mu(\omega), \omega) \geq 0 \}.
\end{equation}
By \eqref{e-jensen-variation-eq2}, we have
\begin{equation}
	\max\{ x + \varphi(x,\omega), 0 \} \geq \max\{ \mu(\omega) + \varphi(x,\omega), 0 \} + (x-\mu(\omega) ) \cdot I(\omega), \quad \forall x \in \R, \quad
		\text{for each } \omega \in \Omega.
\end{equation}
Since $\mu(\omega)$ and $I(\omega)$ are $\Hscr$-measurable,
\begin{align}
	\E\left[ \max\{ X(\omega) + \varphi(X(\omega), \omega), 0 \} \right]
		&\geq \E\left[  \max\{ \mu(\omega) + \varphi(X(\omega), \omega), 0 \} + (X(\omega)-\mu(\omega)) \cdot I(\omega)  \right]
		\\&= \E\left[ \E\left( \left. \max\{ \mu(\omega) + \varphi(X(\omega), \omega), 0 \} +   (X(\omega)-\mu(\omega)) \cdot I(\omega) \right| \Hscr \right) \right]
		\\&= \E\left[ \max\{ \mu(\omega) + \varphi(X(\omega), \omega), 0 \} \right] + \E\left[ \E\left( \left.  (X(\omega)-\mu(\omega)) \cdot I(\omega) \right| \Hscr \right) \right]
		\\&= \E\left[ \max\{ \E\left( X | \Hscr \right)(\omega) + \varphi(X(\omega), \omega), 0 \} \right] 
		\\	& \quad + \E\left[ \underbrace{ \left(\E(X|\Hscr) (\omega) -\mu(\omega) \right) }_{=0} \cdot I(\omega) \right]
		\\&= \E\left[ \max\{ \E\left( X | \Hscr \right)(\omega) + \varphi(X(\omega), \omega), 0 \} \right] .
\end{align}
}
\end{proof}

\begin{corollary} \label{cor-jensen-variation-2}
	On a probability space $(\Omega, \Fscr, \PR)$, let $(C_0,\cdots,C_T)$ be \textbf{$\Hscr$-measurable real-valued random variables} for some sub-$\sigma$-field $\Hscr \subseteq \Fscr$ (i.e., $C_i$'s are constants conditioned on $\Hscr$).
	Then
	\begin{equation}
		\E\left[ \max_{0 \leq i \leq T}\left\{ (i-n)^+ \times X + C_i \right\} \right] 
			\geq 
			\E\left[ \max_{0 \leq i \leq T}\left\{ \E\left( \left.X \right| \Hscr \right) \cdot \mathbf{1}\{ i \geq n+1 \} +  (i-n-1)^+ \times X + C_i \right\}   \right]
	\end{equation}
	for any $n = 0,1,\cdots,T$.
\end{corollary}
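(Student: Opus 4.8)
The plan is to reduce the claim to a single application of Corollary~\ref{cor-jensen-variation-1} by peeling off exactly one factor of $X$ from the terms of the inner maximum. First I would split the index set: for $i \leq n$ the coefficient $(i-n)^+$ vanishes, so those terms contribute $A(\omega) \defeq \max_{0 \leq i \leq n} C_i$, which is $\Hscr$-measurable. For $i \geq n+1$ the term is $(i-n) X + C_i$; reindexing by $j = i - n - 1 \in \{0,\dots,T-n-1\}$ and factoring out one copy of $X$, this part equals $X + G(X,\omega)$, where $G(x,\omega) \defeq \max_{0 \leq j \leq T-n-1}\{ j x + C_{n+1+j}(\omega) \}$. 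The key structural observation is that $G(\cdot,\omega)$ is a maximum of affine functions with \emph{non-negative} slopes $j \geq 0$, hence non-decreasing in $x$, and it is $\Hscr$-measurable in $\omega$ for each fixed $x$ since the $C_i$ are.

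With this decomposition the inner maximum on the left-hand side becomes $\max\{A, X + G(X)\}$, which I would rewrite as $A + \max\{0, X + \varphi(X,\omega)\}$ with $\varphi(x,\omega) \defeq G(x,\omega) - A(\omega)$. Since $A$ is constant in $x$, $\varphi$ inherits from $G$ both the monotonicity in $x$ and the $\Hscr$-measurability in $\omega$, so the hypotheses of Corollary~\ref{cor-jensen-variation-1} are met (assuming $\E|X| < \infty$, as in Lemma~\ref{lem-jensen-variation}). Applying that corollary replaces the outer $X$ by $\E(X|\Hscr)$ inside the positive part, yielding
\[
\E\big[ A + \max\{0, X + \varphi(X)\} \big] \geq \E\big[ A + \max\{0, \E(X|\Hscr) + \varphi(X)\} \big].
\]

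Finally I would unfold the right-hand side back into index form. Writing $\mu \defeq \E(X|\Hscr)$, we have $A + \max\{0, \mu + \varphi(X)\} = \max\{A, \mu + G(X)\}$, and distributing $\mu$ into the maximum defining $G$ and reverting the substitution $i = n+1+j$ recovers exactly $\max_{0 \leq i \leq T}\{ \mu\, \mathbf{1}\{i \geq n+1\} + (i-n-1)^+ X + C_i \}$, since $\mathbf{1}\{i \geq n+1\}$ and $(i-n-1)^+$ both vanish for $i \leq n$. Taking expectations then gives the stated inequality. I expect the only delicate points to be the index bookkeeping in the reindexing and unfolding, and confirming the non-negative-slope property that guarantees $\varphi$ is non-decreasing; the degenerate case $n = T$ (where the reindexed range is empty) is immediate, as both sides then collapse to $\E[\max_{0 \leq i \leq T} C_i]$.
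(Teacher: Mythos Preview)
Your proposal is correct and follows essentially the same route as the paper: split the index set at $n$, factor one copy of $X$ out of the $i \geq n+1$ block, recognize the remaining piece as a non-decreasing $\Hscr$-measurable function of $X$ (your $\varphi = G - A$ coincides with the paper's $\varphi$), apply Corollary~\ref{cor-jensen-variation-1}, and unfold. The paper also treats the boundary case $n=T$ separately, just as you do.
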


\begin{proof}
{\small
When $n=T$, both sides become $\E\left[ \max_{0 \leq i \leq T}\left\{ C_i \right\} \right]$, which makes the claim true.
Fix $n < T$ and define
\begin{equation}
	\varphi(x, \omega) \triangleq \max_{n+1 \leq i \leq T} \left\{ (i-n-1) \times x + C_i(\omega) \right\} - \max_{ 0 \leq i \leq n } \left\{ C_i(\omega) \right\}.
\end{equation}
Note that $\varphi(x, \omega)$ satisfies the conditions in Corollary \ref{cor-jensen-variation-1}. By Corollary \ref{cor-jensen-variation-1},
\begin{align}
	& \E\left[ \max_{0 \leq i \leq T}\left\{ (i-n)^+ \times X + C_i \right\} \right] 
		\\&= \E\left[ \max\left\{ \max_{n+1 \leq i \leq T}\left\{ (i-n) \times X + C_i \right\} , \quad  \max_{0 \leq i \leq n} C_i \right\} \right] 
		\\&= \E\left[ \max\left\{ X+ \max_{n+1 \leq i \leq T}\left\{ (i-n-1) \times X + C_i \right\} , \quad  \max_{0 \leq i \leq n} C_i \right\} \right] 
		\\&= \E\left[ \max\left\{ X(\omega) + \underbrace{ \max_{n+1 \leq i \leq T}\left\{ (i-n-1) \times X(\omega) + C_i(\omega) \right\} - \max_{0 \leq i \leq n} C_i(\omega) }_{= \varphi(X(\omega), \omega) } , \quad  0 \right\} + \max_{0 \leq i \leq n} C_i(\omega) \right] 
		\\&\geq \E\left[ \max\left\{ \E\left( X | \Hscr \right)(\omega) +  \max_{n+1 \leq i \leq T}\left\{ (i-n-1) \times X(\omega) + C_i(\omega) \right\} - \max_{0 \leq i \leq n} C_i(\omega)  , \quad  0 \right\} + \max_{0 \leq i \leq n} C_i(\omega) \right] 
		\\&= \E\left[ \max\left\{ \max_{n+1 \leq i \leq T}\left\{ \E\left( X | \Hscr \right)  + (i-n-1) \times X + C_i \right\}  , \quad  \max_{0 \leq i \leq n} C_i  \right\} \right] 
		\\&= \E\left[ \max_{0 \leq i \leq T}\left\{ \E\left( \left. X \right| \Hscr \right) \cdot \mathbf{1}\{ i \geq n+1 \} +  (i-n-1)^+ \times X + C_i \right\}   \right].
\end{align}
}
\end{proof}

\noindent \textbf{Proof of ``\textnormal{$W^\textsc{Irs.FH}(T, \mathbf{y}) \geq W^\textsc{Irs.V-Zero}(T, \mathbf{y})$}.''}
{\small 
Define
\begin{equation}
	N_T \defeq \left\{ \mathbf{n}_{1:K} \in \N_0^K : \sum_{a=1}^K n_a = T \right\}
	\quad \text{and} \quad
	S_a(n_a) \defeq \sum_{i=1}^{n_a} \hat{\mu}_{a,i-1}.
\end{equation}
What we want to show is
\begin{align}
	W^\textsc{Irs.FH} \equiv \E\left[ T \times \max_a\{ \hat{\mu}_{a,T-1}  \} \right] 
		&= \E\left[ \max_{\mathbf{n}_{1:K} \in N_T}\left\{ \sum_{a=1}^K n_a \times \hat{\mu}_{a,T-1} \right\} \right] 
		\\&\geq \E\left[ \max_{\mathbf{n}_{1:K} \in N_T}\left\{ \sum_{a=1}^K S_a(n_a) \right\} \right] \equiv W^\textsc{Irs.V-Zero}.
\end{align}
Further define
\begin{equation}
	U_{k,n} \defeq \E\left[ \max_{\mathbf{n}_{1:K} \in N_T}\left\{ \left( \sum_{a=1}^{k-1} S_a(n_a) \right) + \left( S_k(n_k \wedge n) + (n_k-n)^+ \times \hat{\mu}_{a,T-1} \right) + \left( \sum_{a=k+1}^K n_a \times \hat{\mu}_{a,T-1} \right) \right\} \right],
\end{equation}
where $a \wedge b \defeq \min(a,b)$.
Observe that $W^\textsc{Irs.FH} = U_{1,0}$, $W^\textsc{Irs.V-Zero} = U_{K,T}$, and $U_{k+1,0} = U_{k,T}$.
Therefore, it suffices to show that
\begin{equation}
	U_{k,n} \geq U_{k,n+1}, \quad \forall k=1,\cdots,K, \quad \forall n=0,\cdots,T-1.
\end{equation}
Fix $k$ and $n$. Define a sub-$\sigma$-field
\begin{equation}
	\Hscr \defeq \sigma\left( \{ R_{a,s} \}_{a=k, 1 \leq s \leq n} \cup \{ R_{a,s} \}_{a \ne k, 1 \leq s \leq T-1} \right).
\end{equation}
For each $i=0,\cdots,T$, define
\begin{equation}
	C_i \defeq \max\left\{ \left( \sum_{a=1}^{k-1} S_a(n_a) \right) + S_k(i \wedge n) + \left( \sum_{a=k+1}^K n_a \times \hat{\mu}_{a,T-1} \right) ~:~ \mathbf{n}_{1:K} \in N_T, n_k = i \right\}.
\end{equation}
Note that $C_i$'s are $\Hscr$-measurable and
\begin{equation}
	U_{k,n} = \E\left[ \max_{0 \leq i \leq T}\left\{ (i-n)^+ \times \hat{\mu}_{k,T-1} + C_i \right\} \right].
\end{equation}
With $X \defeq \hat{\mu}_{a,T-1}$,
\begin{align}
	U_{k,n} &= \E\left[ \max_{0 \leq i \leq T}\left\{ (i-n)^+ \times X + C_i \right\} \right]
		\\ &\stackrel{\text{Corollary \ref{cor-jensen-variation-2}}}{\geq} \E\left[ \max_{0 \leq i \leq T}\left\{ \E\left( \left.X \right| \Hscr \right) \cdot \mathbf{1}\{ i \geq n+1 \} +  (i-n-1)^+ \times X + C_i \right\}   \right]
		\\ &\stackrel{\text{(a)}}{=} \E\left[ \max_{0 \leq i \leq T}\left\{ \hat{\mu}_{k,n} \cdot \mathbf{1}\{ i \geq n+1 \} +  (i-n-1)^+ \times \hat{\mu}_{a,T-1} + C_i \right\}   \right]
		\\ &\stackrel{\text{(b)}}{=} U_{k,n+1}.
\end{align}
Equation (a) holds since $\E\left( X | \Hscr \right) = \mathbb{E}\left( \hat{\mu}_{k,T-1} | \Hscr \right) = \mathbb{E}\left( \hat{\mu}_{k,T-1} | R_{k,1}, \cdots, R_{k,n} \right) = \hat{\mu}_{a,n} $,
and equation (b) holds since $S_k(i \wedge n) + \hat{\mu}_{k,n} \cdot \mathbf{1}\{ i \geq n+1 \} = \sum_{s=1}^n \hat{\mu}_{k,s-1} \cdot \mathbf{1}\{ i \geq s \} + \hat{\mu}_{k,n} \cdot \mathbf{1}\{ i \geq n+1 \} = \sum_{s=1}^{n+1} \hat{\mu}_{k,s-1} \cdot \mathbf{1}\{ i \geq s \} = S_k(i \wedge (n+1) )$.
\qed
\\
}

%\todo{Ciamac, I made a further discussion on the footnote \ref{foot-monotonicity}.}

\noindent \textbf{A note on the proof.}
One may wonder if the above result can be derived in a simpler way by exploiting the properties of
nested filtration \citep[e.g., Proposition 2.3 of][]{Brown10}.
Unlike the proof of $W^\textsc{TS} \geq W^\textsc{Irs.FH}$, however, the proof of $W^\textsc{Irs.FH} \geq W^\textsc{Irs.V-Zero}$ does not simply follow from the fact that $\sigma(\hat{\bm{\mu}}_{T-1})$ is \newedit{larger} than $\sigma(H_{t-1})$.

\newedit{Consider a Bernoulli MAB with $K=2$, $T=2$, and a prior distribution $\text{Beta}(1,1)$, and let us introduce} its variation whose reward function is given by $r_t'(\cdot)$ as follows:
\begin{equation}
	r_1'(a_1) = r_1(a_1), \quad r_2'(\mathbf{a}_{1:2}) = - \kappa r_2(\mathbf{a}_{1:2}),
\end{equation}
where $r_t(\cdot)$ is the reward function of the original Bernoulli MAB.
When $\kappa > 0$, \newedit{one can show that}
\begin{align}
	W^\textsc{Irs.FH} &= \mathbb{E}\left[ \max_{\mathbf{a}_{1:T}} \left\{ \sum_{t=1}^T \mathbb{E}( r_t'(\mathbf{a}_{1:t}) | \hat{\bm{\mu}}_{T-1} ) \right\} \right] = \frac{7}{12} - \frac{5}{12}\kappa,
	\\
	W^\textsc{Irs.V-Zero} &= \mathbb{E}\left[ \max_{\mathbf{a}_{1:T}} \left\{ \sum_{t=1}^T \mathbb{E}( r_t'(\mathbf{a}_{1:t}) | H_{t-1} ) \right\} \right] = \frac{1}{2} - \frac{3}{8} \kappa.
\end{align}
If $\kappa$ is large enough, we obtain $W^\textsc{Irs.FH} < W^\textsc{Irs.V-Zero}$, which is opposite to the above result.

Recall that the additional gain from knowing the future information can be decomposed into two components; the gain from knowing the immediate reward and the gain from knowing the next belief state, where $\textsc{Irs.V-Zero}$ considers the former component only.
When those two \newedit{components are not aligned} as in this example (i.e., a higher $r_1'$ leads to a worse next belief state), \newedit{the DM can exploit the penalties if they penalize only for the first component (e.g., when $r_1'$ is smaller than expected, the DM will get compensated for this difference but she can still earn the larger reward in the next period)}.

\newedit{This is also related to the fact that $z_t^\textsc{Irs.V-Zero}$ does not correspond to zero penalty under the some (partial) information relaxation, but should be understood as an approximation of $z_t^\text{ideal}$ under the perfect information relaxation.
As opposed to \textsc{TS} and \textsc{Irs.FH}, the optimal solution to the \textsc{Irs.V-Zero}'s inner problem may depend on the entire outcome $\omega$.
With the terminology of \citet{Brown10}, there is a mismatch between the filtration that generates the penalties and the filtration that characterizes the relaxed policy space.}

\subsubsection{Proof of ``\textnormal{$W^\textsc{TS}(T,\mathbf{y}) \geq W^\textsc{Irs.V-EMax}(T, \mathbf{y})$}''}

To show that $W^\textsc{TS} \geq W^\textsc{Irs.V-EMax}$, we take a completely different approach that \newedit{utilizes} Theorem 4 in \cite{Desai12}.
\newedit{We here rephrase the definition and the theorem therein using our notation.}

\begin{definition}[Supersolution] \label{def-supersolution}
	An approximate value function $\widehat{V}: \N_0 \times \Yscr \rightarrow \R$ is a \textbf{supersolution} to the Bellman equation \eqref{e-bellman} if
	\begin{equation}
		\widehat{V}(T,\mathbf{y}) \geq \max_{a \in \Ascr}\left\{ \E_{y_a}\left[ R_{a,1} + \widehat{V}( T-1, \Uscr( \mathbf{y}, R_{a,1}, r ) ) \right] \right\}, \quad \forall \mathbf{y} \in \Yscr, \quad \forall T \geq 1,
	\end{equation}
	with $\widehat{V}(0, \mathbf{y}) =0$ for all $\mathbf{y} \in \Yscr$.
\end{definition}

\begin{remark} \label{rem-supersolution}
	If $\widehat{V}(\cdot, \cdot)$ is a supersolution, then for any given $\omega$, $T$, and $\mathbf{y}$,
	\begin{equation}
		\widehat{V}(T-t+1, \mathbf{y}_{t-1}(\mathbf{a}_{1:t-1}, \omega; \mathbf{y}) ) 
			\geq 	\newedit{\E_\mathbf{y}}\left[ \left. r_t(\mathbf{a}_{1:t-1} \oplus a, \omega; \mathbf{y}) + \widehat{V}(T-t, \mathbf{y}_t(\mathbf{a}_{1:t-1} \oplus a, \omega; \mathbf{y}))  \right| H_{t-1}(\mathbf{a}_{1:t-1},\omega) \right],
	\end{equation}
	for all $a \in \Ascr$, $\mathbf{a}_{1:t-1} \in \Ascr^{t-1}$ and $t \in \{1, \ldots, T\}$.
\end{remark}

\begin{lemma}[Theorem 4 of \citet{Desai12}, rephrased] \label{lem-bound-inequality-2}
Consider a penalty function $\hat{z}_t$ generated by $\widehat{V}(\cdot, \cdot)$:
\begin{align}
	\hat{z}_t( \mathbf{a}_{1:t}, \omega; T, \mathbf{y} ) &\defeq r_t( \mathbf{a}_{1:t}, \omega )  -  \newedit{\E_\mathbf{y}}\left[ r_t( \mathbf{a}_{1:t}, \omega ) \left| H_{t-1}( \mathbf{a}_{1:t-1}, \omega ) \right. \right]
		\\ &  \quad + \widehat{V}\left( T-t, \mathbf{y}_t( \mathbf{a}_{1:t}, \omega; \mathbf{y} ) \right) - \newedit{\E_\mathbf{y}}\left[ \left. \widehat{V}\left( T-t, \mathbf{y}_t( \mathbf{a}_{1:t}, \omega; \mathbf{y} ) \right)   \right| H_{t-1}( \mathbf{a}_{1:t-1}, \omega ) \right]. \nonumber
\end{align}
If $\widehat{V}(\cdot, \cdot)$ is a supersolution, \newedit{then the performance bound induced by penalty function $\hat{z}_t$ is tighter than $\widehat{V}$: i.e.,}
\begin{equation}
	W^{\hat{z}}(T, \mathbf{y}) \leq \widehat{V}(T, \mathbf{y}).
\end{equation}
\newedit{And this holds in a stronger sense: for each outcome $\omega$, the maximal value of the inner problem with respect to $\omega$ (denoted by $V^{\hat{z},\textup{in}}_1( \emptyset, \omega; T, \mathbf{y} )$ in the proof) is smaller than or equal to $\widehat{V}(T, \mathbf{y})$.}
\end{lemma}

\begin{proof}
Let $V^{\hat{z},\textup{in}}_t(\cdot)$ be the DP solution of inner problem \eqref{e-inner-problem} for a given penalty $\hat{z}_t$ with respect to a particular outcome $\omega$:
\begin{equation}
	V^{\hat{z},\textup{in}}_t(\mathbf{a}_{1:t-1}, \omega; T, \mathbf{y})
		= \max_{a \in \Ascr} \left\{ r_t(\mathbf{a}_{1:t-1} \oplus a, \omega) - \hat{z}_t(\mathbf{a}_{1:t-1} \oplus a, \omega; T, \mathbf{y}) + V^{\hat{z},\textup{in}}_{t+1}(\mathbf{a}_{1:t-1} \oplus a, \omega; T, \mathbf{y})  \right\},
\end{equation}
with $V^{\hat{z},\textup{in}}_{T+1}(\cdot, \omega; T, \mathbf{y}) = 0$.
Then, we have $W^{\hat{z}}(T, \mathbf{y}) = \E\left[ V^{\hat{z},\textup{in}}_1( \emptyset, \omega; T, \mathbf{y} ) \right]$.
To prove the claim, it suffices to show that, for any given $\omega$,
\begin{equation} \label{e-bound-inequality-claim}
	V^{\hat{z},\textup{in}}_t\left( \mathbf{a}_{1:t-1}, \omega; T, \mathbf{y}_{t-1}( \mathbf{a}_{1:t-1}, \omega; \mathbf{y} ) \right) 
		\leq 
		\widehat{V}\left( T-t+1, \mathbf{y}_{t-1}( \mathbf{a}_{1:t-1}, \omega; \mathbf{y}) \right),
\end{equation}
for all $\mathbf{a}_{1:t-1} \in \Ascr^{t-1}$ and for all $t=1,\cdots,T+1$.

We argue by induction.
As a terminal case, when $t=T+1$, the inequality \eqref{e-bound-inequality-claim} holds trivially since both sides are zero.
Fix $t$ and suppose that the inequality \eqref{e-bound-inequality-claim} holds for $t+1$.
Omitting $\omega$ for brevity, we get
\begin{align}
	& \widehat{V}\left( T-t+1, \mathbf{y}_{t-1}( \mathbf{a}_{1:t-1} ) \right) - V^{\hat{z},\textup{in}}_t\left( \mathbf{a}_{1:t-1}; T, \mathbf{y}_{t-1}( \mathbf{a}_{1:t-1} ) \right) 
	\\&= \widehat{V}\left( T-t+1, \mathbf{y}_{t-1}( \mathbf{a}_{1:t-1}) \right)
		- \max_{a \in \Ascr}\left\{ 
			r_t(\mathbf{a}_{1:t-1} \oplus a) - \hat{z}_t(\mathbf{a}_{1:t-1} \oplus a; T, \mathbf{y}) + V^{\hat{z},\textup{in}}_{t+1}(\mathbf{a}_{1:t-1} \oplus a; T, \mathbf{y})
		 \right\}
	\\&= \min_{a \in \Ascr}\left\{ \begin{array}{l}
		\underbrace{ \widehat{V}\left( T-t, \mathbf{y}_t( \mathbf{a}_{1:t} ) \right) - V^{\hat{z},\textup{in}}_{t+1}(\mathbf{a}_{1:t-1} \oplus a; T, \mathbf{y}) }_{\geq 0 ~~ (\because ~\text{induction hypothesis})}
		\\
		+ \underbrace{ \widehat{V}\left( T-t+1, \mathbf{y}_{t-1}( \mathbf{a}_{1:t-1} ) \right)
		 - \E\left[ \left. r_t( \mathbf{a}_{1:t-1} \oplus a ) + \widehat{V}\left( T-t, \mathbf{y}_t( \mathbf{a}_{1:t-1} \oplus a ) \right)   \right| H_{t-1} \right] }_{\geq 0 ~~ (\because ~\text{Remark \ref{rem-supersolution}})}
		\end{array} \right\} 
	\\&\geq 0.
\end{align}
\end{proof}

\noindent \textbf{Proof of ``\textnormal{$W^\textsc{TS}(T,\mathbf{y}) \geq W^\textsc{Irs.V-EMax}(T, \mathbf{y})$}.''}
Recall that $z_t^\textsc{Irs.V-EMax}$ is a penalty function generated by $W^\textsc{TS}$.
We observe that $W^\textsc{TS}(\cdot, \cdot)$ is a supersolution: for any $T$ and $\mathbf{y}$,
\begin{align}
	W^\textsc{TS}(T, \mathbf{y})
		&= \newedit{\E_\mathbf{y}}\left[ T \times \max_{a \in \Ascr} \mu_a(\theta_a) \right]
		\\&= \newedit{\E_\mathbf{y}}\left[  \max_{a \in \Ascr} \mu_a(\theta_a) \right] + W^\textsc{TS}(T-1,\mathbf{y})
		\\&\geq \max_{a \in \Ascr}\left\{ \newedit{\E_{y_a}}\left[ \mu_a(\theta_a) \right] + W^\textsc{TS}(T-1,\mathbf{y}) \right\}
		\\&= \max_{a \in \Ascr}\left\{ \newedit{\E_\mathbf{y}}\left[ R_{a,1} + W^\textsc{TS}(T-1,\mathbf{y})  \right] \right\}
		\\&= \max_{a \in \Ascr}\left\{ \newedit{\E_\mathbf{y}}\left[ R_{a,1} + W^\textsc{TS}(T-1,\Uscr( \mathbf{y}, a, R_{a,1}) )  \right] \right\}.
\end{align}
The last equality holds since $\E\left[ W^\textsc{TS}\left( T-1,\Uscr( \mathbf{y}, a_1, r_1(a_1,\omega) ) \right) \right] = W^\textsc{TS}(T-1,\mathbf{y})$, as argued in \eqref{e-expected-full-info}.
By Lemma \ref{lem-bound-inequality-2}, we have $W^\textsc{Irs.V-EMax}(T, \mathbf{y}) \leq W^\textsc{TS}(T, \mathbf{y})$ \newedit{which also holds in a stronger sense}.
\qed

\subsection{Proof of Theorem \ref{thm-suboptimality}} \label{prf-suboptimality}

\newedit{\subsubsection{Suboptimality Decomposition}} \label{ss-irs-properties-ext}

As in \S \ref{prf-strong-duality}, we define the Q-values of the inner problem given a particular outcome $\omega$, a penalty function $z_t(\cdot)$, a time horizon $T$, and a prior belief $\mathbf{y}$.
\begin{align}
	 Q^{z,\textup{in}}_t(\mathbf{a}_{1:t-1}, a, \omega; T, \mathbf{y}) 
		&= r_t(\mathbf{a}_{1:t-1} \oplus a, \omega) - z_t(\mathbf{a}_{1:t-1} \oplus a, \omega; T, \mathbf{y}) 
		\\ &  \quad + V^{z,\textup{in}}_{t+1}(\mathbf{a}_{1:t-1} \oplus a, \omega; T, \mathbf{y}), \nonumber
	\\ V^{z,\textup{in}}_t(\mathbf{a}_{1:t-1}, \omega; T, \mathbf{y})
		&= \max_{a \in \Ascr} \left\{ Q^{z,\textup{in}}_t(\mathbf{a}_{1:t-1}, a, \omega; T, \mathbf{y}) \right\},
\end{align}
with $V_{T+1}^{z,\textup{in}}( \cdot, \omega; T, \mathbf{y} ) \equiv 0$.
Additionally define the total payoff of an action sequence and the hindsight best action under penalties:
\begin{align}
	\Sscr^z( \mathbf{a}_{1:T}, \omega; T, \mathbf{y} ) 
		&\defeq \sum_{t=1}^T r_t(\mathbf{a}_{1:t}, \omega ) - z_t( \mathbf{a}_{1:t}, \omega; T, \mathbf{y} ),
	\\
	a_t^{z,*}( \mathbf{a}_{1:t-1}, \omega; T, \mathbf{y} ) 
		&\defeq \argmax_{a \in \Ascr} \left\{ Q^{z,\textup{in}}_t(\mathbf{a}_{1:t-1}, a, \omega; T, \mathbf{y}) \right\}.
	\label{e-best-action}
\end{align}
We have $V_1^{z,\textup{in}}(\emptyset, \omega; T, \mathbf{y}) = \max_{ \mathbf{a}_{1:T} \in \Ascr^T } \Sscr^z( \mathbf{a}_{1:T}, \omega; T, \mathbf{y} )$.

\begin{proposition}[Suboptimality decomposition]  \label{prop-suboptimality-decomposition}
Given a non-anticipating policy $\pi \in \Pi_\mathbb{F}$ and a dual-feasible penalty function $z_t$, \newedit{the suboptimality gap is the sum of the instantaneous suboptimalities of individual actions taken by $\pi$ along the sample path: i.e., }
\begin{align}
	W^z(T, \mathbf{y}) - V(\pi, T, \mathbf{y}) 
		&= \newedit{\E_\mathbf{y}}\left[ \max_{\mathbf{a}_{1:T}} \left\{ \Sscr^z(\mathbf{a}_{1:T}, \omega; T, \mathbf{y}) \right\} - \Sscr^z(\mathbf{A}_{1:T}^\pi,  \omega; T, \mathbf{y}) \right]
		\\&= \newedit{\E_\mathbf{y}}\left[  \sum_{t=1}^T \max_a \left\{ Q_t^{z,\textup{in}}(\mathbf{A}_{1:t-1}^\pi, a, \omega; T, \mathbf{y}) \right\} - Q_t^{z,\textup{in}}(\mathbf{A}_{1:t-1}^\pi, A_t^\pi, \omega; T, \mathbf{y}) \right],
		\label{e-suboptimality-decomposition}
\end{align}
where the expectation is taken with respect to the randomness of outcome $\omega$ and the randomness of policy $\pi$.

\end{proposition}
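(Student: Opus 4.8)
The plan is to establish the two equalities in turn: the first by combining the definition of $W^z$ with the mean-equivalence property, and the second by a pathwise telescoping (performance-difference) argument built on the inner-problem Bellman recursion.

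For the first equality, I would start from the definition \eqref{e-bound}, which gives $W^z(T,\mathbf{y}) = \E\left[\max_{\mathbf{a}_{1:T}} {\cal S}^z(\mathbf{a}_{1:T}; \omega, T, \mathbf{y})\right]$ directly, since $\max_{\mathbf{a}_{1:T}}{\cal S}^z$ is precisely the maximal inner-problem value. Because $\pi \in \Pi_\mathbb{F}$ is non-anticipating and $z_t$ is dual feasible, Proposition~\ref{prop-mean-equivalence} (mean equivalence) yields $V(\pi,T,\mathbf{y}) = \E\left[\sum_{t=1}^T r_t(\mathbf{a}_{1:t}^\pi;\omega) - z_t(\mathbf{a}_{1:t}^\pi;\omega,T,\mathbf{y})\right] = \E\left[{\cal S}^z(\mathbf{a}_{1:T}^\pi; \omega, T, \mathbf{y})\right]$. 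Subtracting these two expressions gives the first line verbatim; the only thing to check is integrability of the reward and penalty terms so that the subtraction of expectations is valid, which holds under the standing integrability assumptions.

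For the second equality, I would fix an outcome $\omega$ (and a realization of the internal randomness of $\pi$) and prove the identity pathwise. Using $V_t^{z,\textup{in}}(\mathbf{a}_{1:t-1}^\pi;\omega) = \max_{a'} Q_t^{z,\textup{in}}(\mathbf{a}_{1:t-1}^\pi, a'; \omega)$, the $t$-th summand of the right-hand side is $V_t^{z,\textup{in}}(\mathbf{a}_{1:t-1}^\pi) - Q_t^{z,\textup{in}}(\mathbf{a}_{1:t-1}^\pi, a_t^\pi)$. Expanding the chosen $Q$-value through its Bellman recursion, $Q_t^{z,\textup{in}}(\mathbf{a}_{1:t-1}^\pi, a_t^\pi) = r_t(\mathbf{a}_{1:t}^\pi) - z_t(\mathbf{a}_{1:t}^\pi) + V_{t+1}^{z,\textup{in}}(\mathbf{a}_{1:t}^\pi)$, the value terms telescope over $t=1,\dots,T$, leaving the boundary contribution $V_1^{z,\textup{in}}(\emptyset) - V_{T+1}^{z,\textup{in}}(\mathbf{a}_{1:T}^\pi) = \max_{\mathbf{a}_{1:T}}{\cal S}^z(\mathbf{a}_{1:T}) - 0$, while the reward and penalty terms accumulate to $-\sum_{t=1}^T \left(r_t(\mathbf{a}_{1:t}^\pi) - z_t(\mathbf{a}_{1:t}^\pi)\right) = -{\cal S}^z(\mathbf{a}_{1:T}^\pi)$. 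This reproduces exactly the bracketed integrand appearing in the first line, pathwise in $\omega$. Taking expectations over $\omega$ and the policy randomness then delivers the second equality.

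The argument is essentially routine, so I do not expect a genuine obstacle; the one point requiring care is the telescoping bookkeeping at the boundaries — using $\mathbf{a}_{1:0}^\pi = \emptyset$ so that $V_1^{z,\textup{in}}(\mathbf{a}_{1:0}^\pi) = V_1^{z,\textup{in}}(\emptyset) = \max_{\mathbf{a}_{1:T}}{\cal S}^z$, together with the terminal convention $V_{T+1}^{z,\textup{in}} \equiv 0$. A secondary (mild) point is confirming that all expectations are finite so that the pathwise identity may be integrated term by term; this is guaranteed by the integrability of the rewards and penalties assumed throughout.
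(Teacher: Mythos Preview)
Your proposal is correct and follows essentially the same approach as the paper: the first equality via the definition of $W^z$ together with mean equivalence (Proposition~\ref{prop-mean-equivalence}), and the second via a pathwise telescoping of the inner-problem Bellman recursion, then taking expectations. The paper states the pathwise identity \eqref{e-pathwise-suboptimality} more tersely, but your expanded bookkeeping (boundary terms $V_1^{z,\textup{in}}(\emptyset)$ and $V_{T+1}^{z,\textup{in}}\equiv 0$) is exactly what underlies it.
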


\begin{proof}
The first equality immediately follows from the definition of $W^z$ and mean equivalence (Proposition \ref{prop-mean-equivalence}).
Now fix $\omega$, $T$, and $\mathbf{y}$.
Consider the (pathwise) suboptimality of the action sequence $\mathbf{A}_{1:T}^\pi$ compared to the clairvoyant optimal solution.
It can be decomposed into the instantaneous suboptimalty incurred by the individual action at each time:
\begin{equation} \label{e-pathwise-suboptimality}
	\max_{\mathbf{a}_{1:T}} \left\{ \Sscr^z(\mathbf{a}_{1:T}) \right\} - \Sscr^z(\mathbf{A}_{1:T}^\pi)
		= \sum_{t=1}^T \max_a \left\{ Q_t^{z,\textup{in}}(\mathbf{A}_{1:t-1}^\pi, a) \right\} - Q_t^{z,\textup{in}}(\mathbf{A}_{1:t-1}^\pi, A_t^\pi).
\end{equation}
By taking expectation, we obtain the second equality.
\end{proof}

\newedit{
The next lemma shows that the instantaneous suboptimalty of the first action can be expressed in terms of mean reward metrics for each of the IRS penalty functions.

\begin{lemma} \label{lem-suboptimality-rephrase}
Fix time horizon $T$, prior belief $\mathbf{y}$, and the true outcome $\omega$, and hide the dependency on them in notation for $Q_1^{z,\textup{in}}(\cdot)$, $a_1^{z,*}(\cdot)$, $\mu_a(\cdot)$ and $\hat{\mu}_{a,n}(\cdot)$.
For each of the penalty functions $z^\textsc{TS}$, $z^\textsc{Irs.FH}$, and $z^\textsc{Irs.V-Zero}$, the instantaneous suboptimalty of action $a \in \Ascr$ satisfies the following: \\
(1) When $z \equiv z^\textsc{TS}$,
\begin{equation}
	Q_1^{z,\textup{in}}(a_1^{z,*}) - Q_1^{z,\textup{in}}(a) 
		= \mu_{a_1^{z,*}} - \mu_a.
\end{equation}
(2) When $z \equiv z^\textsc{Irs.FH}$,
\begin{equation}
	Q_1^{z,\textup{in}}(a_1^{z,*}) - Q_1^{z,\textup{in}}(a) 
		= \hat{\mu}_{a_1^{z,*}, T-1} - \hat{\mu}_{a, T-1}.
\end{equation}
(3) When $z \equiv z^\textsc{V-Zero}$,
\begin{equation}
	Q_1^{z,\textup{in}}(a_1^{z,*}) - Q_1^{z,\textup{in}}(a) 
		\leq \max_{0 \leq n \leq T-1} \left\{  \hat{\mu}_{a_1^{z,*},n}  \right\} - \hat{\mu}_{a,0}.
\end{equation}
\end{lemma}
}

\newedit{
\begin{proof}
\noindent (1) When $z \equiv z^\textsc{TS}$, we have
\begin{equation}
	Q_1^{z,\textup{in}}(a) = \mu_a + (T-1) \times \max_{a'} \mu_{a'}.
\end{equation}
Since the last term does not depend on action $a$, the claim follows.

\noindent (2) When $z \equiv z^\textsc{Irs.FH}$, we obtain the claim by replacing $\mu_a$ with $\hat{\mu}_{a,T-1}$ in the above proof.

\noindent (3) When $z \equiv z^\textsc{Irs.V-Zero}$, recall that the associated inner problem is to find an optimal allocation: i.e.,
\begin{equation}
	\max_{\mathbf{n}_{1:K} \in N_T}\left\{ \sum_{a=1}^K \sum_{i=0}^{n_a-1} \hat{\mu}_{a,i} \right\}.
\end{equation}
Let $\mathbf{n}_{1:K}^*$ be the optimal allocation.
Observe that the suboptimality is incurred only when $n_a^* = 0$, it is no worse than $\hat{\mu}_{a^*, n^*_{a^*}} - \hat{\mu}_{a,0}$ (the loss if the payoff when pulling $a$ one more time but pulling $a_1^{z,*}$ one less time).
Since $n^*_{a^*} \leq T-1$, the claim follows.
\end{proof}
}

\newedit{\subsubsection{Recursive Structure of IRS Penalty Functions}}

\newedit{To describe the recursive structure of Bayesian MAB problems explicitly, we} define a shift operator $\Mscr_t: \Ascr^t \times \Omega \rightarrow \Omega$,
\begin{equation}
	\Mscr_t( \mathbf{a}_{1:t}, \omega ) \defeq \left( R_{a,n_a}; \forall n_a > n_t(\mathbf{a}_{1:t},a), \forall a \in \Ascr  \right).
\end{equation}
The shifted outcome $\Mscr_{t-1}( \mathbf{a}_{1:t-1}, \omega )$ encodes the remaining reward realizations after taking $\mathbf{a}_{1:t-1}$.

\begin{remark}[Recursive structure of remaining uncertainties] \label{rem-recursive-uncertainty}
	Conditioned on $\Hscr_{t-1}( \mathbf{a}_{1:t-1}, \omega )$, the remaining uncertainties are sufficiently described by $\mathbf{y}_{t-1}( \mathbf{a}_{1:t-1}, \omega; \mathbf{y} )$, i.e.,
\begin{equation}
	\left. \Mscr_{t-1}( \mathbf{a}_{1:t-1}, \omega ) \right| H_{t-1}( \mathbf{a}_{1:t-1}, \omega ) 
		\quad \sim \quad \Iscr( \mathbf{y}_{t-1}(\mathbf{a}_{1:t-1}, \omega; \mathbf{y}) ).
\end{equation}
\end{remark}

\begin{remark}[Recursive structure of IRS penalties] \label{rem-recursive-penalty}
	Each of penalty functions \eqref{e-ideal-penalty}--\eqref{e-penalty-irs-vemax} has the following form:
	\begin{equation}
	z_t( \mathbf{a}_{1:t}, \omega; T, \mathbf{y} ) = \varphi^z( \Mscr_{t-1}( \mathbf{a}_{1:t-1}, \omega ), T-t+1, \mathbf{y}_{t-1}(\mathbf{a}_{1:t-1},\omega; \mathbf{y}) ),
\end{equation}
for some function $\varphi^z : \Omega \times \N \times \Yscr \rightarrow \R$, i.e., the penalty at each time is completely determined by the remaining rewards $\Mscr_{t-1}( \mathbf{a}_{1:t-1}, \omega )$, the remaining time horizon $T-t+1$, and the prior belief $\mathbf{y}_{t-1}(\mathbf{a}_{1:t-1}, \omega)$ at that moment.
\end{remark}

Remark \ref{rem-recursive-uncertainty} immediately follows from Bayes' rule, and Remark \ref{rem-recursive-penalty} can be easily verified.
We observe the recursive structure of the sequential inner problems that the DM solves throughout the decision-making process, which can be characterized by the following property.

\begin{proposition}[Generalized posterior sampling] \label{prop-generalized-posterior-sampling}
For each of penalty functions \eqref{e-ideal-penalty}--\eqref{e-penalty-irs-vemax}, the IRS policy $\pi$ is randomized in such a way that it takes an action $a$ with the probability that the action $a$ is indeed the best action $a_t^{z,*}$ at that moment, i.e.,
\begin{equation}
	\PR\left[ \left. A_t^\pi = a \right| \Fscr_{t-1} \right] = \PR\left[ \left. \newedit{ a_t^{z,*}( \mathbf{A}_{1:t-1}^\pi, \omega ) } = a \right| \Fscr_{t-1} \right]
		, \quad \forall a, \quad \forall t.
\end{equation}
The source of uncertainty in the LHS is the randomness of the policy (embedded in $\tilde{\omega}$) and that in the RHS is the randomness of nature (embedded in $\omega$).\deledit{
We let $a_t^{z,*}$ abbreviate $a_t^{z,*}( \mathbf{a}_{1:t-1}^{\pi^z}, \omega; T, \mathbf{y} )$ as defined in \eqref{e-best-action} and $\Fscr_{t-1}$ abbreviate $\Fscr_{t-1}(\mathbf{a}_{1:t-1}^{\pi^z}, \omega; T, \mathbf{y} )$.}
Here we assume that the tie-breaking rule in $\argmax$ of \eqref{e-best-action} is identical to the one used when $\pi^z$ solves the inner problem.
\end{proposition}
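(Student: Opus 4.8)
The plan is to establish a single \emph{recursive identity} for the inner problem and then push forward two identically-distributed sources of randomness through the same measurable selection map. Write $T' \defeq T-t+1$ and, for a history $\mathbf{a}_{1:t-1}$, abbreviate $\mathbf{y}_{t-1} \defeq \mathbf{y}_{t-1}(\mathbf{a}_{1:t-1}; \omega)$ and $\omega' \defeq {\cal M}_{t-1}(\mathbf{a}_{1:t-1}, \omega)$ for the remaining reward realizations. The core claim is that the tail of the horizon-$T$ inner problem, begun at time $t$ after $\mathbf{a}_{1:t-1}$ has been played, is \emph{exactly} a fresh horizon-$T'$ inner problem with prior belief $\mathbf{y}_{t-1}$ and outcome $\omega'$; in terms of the Q-values defined in \S\ref{prf-suboptimality},
\begin{equation}
	Q_t^{z,\textup{in}}(\mathbf{a}_{1:t-1}, a; \omega, T, \mathbf{y}) = Q_1^{z,\textup{in}}(\emptyset, a; \omega', T', \mathbf{y}_{t-1}), \quad \forall a \in {\cal A}.
\end{equation}
Granting this, taking $\argmax_a$ on both sides (with the common tie-breaking rule) yields the pointwise identity
\begin{equation} \label{e-gps-recursive}
	a_t^{z,*}(\mathbf{a}_{1:t-1}; \omega, T, \mathbf{y}) = a_1^{z,*}(\emptyset; \omega', T', \mathbf{y}_{t-1}),
\end{equation}
i.e. the best time-$t$ action of the full problem is the best \emph{first} action of the reduced problem.

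First I would prove the recursive identity by backward induction on the time index, mirroring the Bellman matching in \S\ref{prf-strong-duality}. The two ingredients are: (i) the reward is translation-invariant, $r_{t-1+j}(\mathbf{a}_{1:t-1} \oplus \mathbf{b}_{1:j}; \omega) = r_j(\mathbf{b}_{1:j}; \omega')$, since taking arm $a$ at time $t$ consumes precisely the next unused realization of arm $a$, which is the first realization of arm $a$ recorded in $\omega'$; and (ii) by Remark \ref{rem-recursive-penalty}, each penalty in \eqref{e-ideal-penalty}--\eqref{e-penalty-irs-vemax} has the recursive form $z_{t-1+j}(\mathbf{a}_{1:t-1} \oplus \mathbf{b}_{1:j}; \omega, T, \mathbf{y}) = z_j(\mathbf{b}_{1:j}; \omega', T', \mathbf{y}_{t-1})$, depending on $(T, \mathbf{y})$ only through the remaining horizon and the current belief. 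The substantive point to verify here --- and the step I expect to be the main obstacle --- is ingredient (ii) for the look-ahead penalties $z^{\textsc{Irs.V-EMax}}$ and $z^{\text{ideal}}$: one must check that their conditional-expectation terms, e.g. $\E[W^{\textsc{TS}}(T-s, \mathbf{y}_s) \mid {\cal F}_{s-1}]$, reduce to quantities $W^{\textsc{TS}}(T-s, \mathbf{y}_{s-1})$ (as already shown in \S\ref{ss-irs-vzero}) that are themselves functions only of the remaining horizon and the current belief, so that no residual dependence on the original $(T, \mathbf{y})$ survives. With (i) and (ii) in hand, the two inner-problem Bellman recursions coincide term-by-term, giving the Q-value identity and hence \eqref{e-gps-recursive}.

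Next I would read off the law of each side of the Proposition. On the left, by the specification of $\pi^z$ in \textsc{IRS-Outer} (Algorithm \ref{alg-irs}), at time $t$ the policy is invoked with $(T', \mathbf{y}_{t-1})$, draws a fresh $\tilde\omega \sim {\cal I}(T', \mathbf{y}_{t-1})$ independent of the past, and returns the best first action, so
\begin{equation}
	\PR\left[ a_t^{\pi^z} = a \mid {\cal F}_{t-1} \right] = \PR_{\tilde\omega \sim {\cal I}(T', \mathbf{y}_{t-1})}\left[ a_1^{z,*}(\emptyset; \tilde\omega, T', \mathbf{y}_{t-1}) = a \right].
\end{equation}
On the right, applying \eqref{e-gps-recursive} with $\mathbf{a}_{1:t-1} = \mathbf{a}_{1:t-1}^{\pi^z}$ and then Remark \ref{rem-recursive-uncertainty} --- which states that, conditioned on ${\cal F}_{t-1}$, the remaining true rewards $\omega' = {\cal M}_{t-1}(\mathbf{a}_{1:t-1}^{\pi^z}, \omega)$ are distributed as ${\cal I}(\mathbf{y}_{t-1})$ --- gives
\begin{equation}
	\PR\left[ a_t^{z,*} = a \mid {\cal F}_{t-1} \right] = \PR_{\omega' \sim {\cal I}(\mathbf{y}_{t-1})}\left[ a_1^{z,*}(\emptyset; \omega', T', \mathbf{y}_{t-1}) = a \right].
\end{equation}
Since the reduced selection map $a_1^{z,*}(\emptyset; \cdot, T', \mathbf{y}_{t-1})$ reads its outcome argument only through the first $T'$ realizations of each arm, feeding it the truncated law ${\cal I}(T', \mathbf{y}_{t-1})$ or the extended law ${\cal I}(\mathbf{y}_{t-1})$ induces the same action distribution. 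The two displayed right-hand sides are therefore the same measurable map applied to inputs with identical law, hence equal, which is precisely the claimed identity. The only care needed is that the (possibly positive-probability, for discrete rewards) tie events are resolved by the same rule on both sides, which is assumed in the statement, so the two pushforward laws agree even on ties. \qed
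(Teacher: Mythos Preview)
Your proposal is correct and follows essentially the same approach as the paper: both arguments use Remark~\ref{rem-recursive-penalty} (plus the obvious translation-invariance of rewards) to show that the tail of the horizon-$T$ inner problem after $\mathbf{a}_{1:t-1}$ is identical to a fresh horizon-$(T-t+1)$ inner problem with prior $\mathbf{y}_{t-1}$ and outcome $\mathcal{M}_{t-1}(\mathbf{a}_{1:t-1},\omega)$, and then invoke Remark~\ref{rem-recursive-uncertainty} to conclude that this shifted outcome and the policy's freshly sampled $\tilde\omega$ have the same conditional law, so the common selection map $a_1^{z,*}(\emptyset;\,\cdot\,,T',\mathbf{y}_{t-1})$ induces the same action distribution. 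Your explicit Q-value identity and your remark on $\mathcal{I}(T',\mathbf{y}_{t-1})$ versus $\mathcal{I}(\mathbf{y}_{t-1})$ are minor elaborations of the paper's more compressed treatment, not a different route.
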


\begin{proof}
\newedit{
	Observe that the IRS's action $A_t^\pi$ can be represented as
	\begin{equation}
		A_t^\pi = a_1^{z,*}\left( \emptyset, \tilde{\omega}; T-t+1, \mathbf{y}_{t-1}( \mathbf{A}_{1:t-1}^\pi, \omega; \mathbf{y} ) \right),
	\end{equation}
	where $\tilde{\omega} \sim \Iscr( \mathbf{y}_{t-1}( \mathbf{A}_{1:t-1}^\pi, \omega; \mathbf{y} ) )$, i.e., the action that the clairvoyant DM will take in an MAB instance specified by horizon $T-t+1$, prior belief $\mathbf{y}_{t-1}( \mathbf{A}_{1:t-1}^\pi, \omega; \mathbf{y} )$, and the outcome $\tilde{\omega}$.
	Therefore, it suffices to verify that the inner problem that $\pi$ solves at time $t$ is identically distributed with the sub-inner problem with respect to ground-truth $\omega$ (i.e., the subproblem given the past action sequence $\mathbf{A}_{1:t-1}^\pi$).
	
Fix time $t$, past actions $\mathbf{a}_{1:t-1} = \mathbf{A}_{1:t-1}^\pi$, and the true outcome $\omega$.
The sub-inner problem determining $a_t^{z,*}( \mathbf{a}_{1:t-1}, \omega )$ is
\begin{equation} \label{e-generalized-posterior-sampling-subinner}
	\max_{\mathbf{a}'_{t:T}} \left\{ \sum_{s=t}^T r_s( \mathbf{a}_{1:t-1} \oplus \mathbf{a}'_{t:s}, \omega ) - z_s( \mathbf{a}_{1:t-1} \oplus \mathbf{a}'_{t:s}, \omega; T, \mathbf{y} ) \right\}.
\end{equation}
}
By Remark \ref{rem-recursive-penalty}, for any $s \in \{t, \ldots, T\}$, the penalty at (inner) time $s$ is given by
\begin{align}
	& z_s( \mathbf{a}_{1:t-1} \oplus \mathbf{a}'_{t:s}, \omega; T, \mathbf{y} )
		\\&= \varphi^z( \Mscr_{s-1}( \mathbf{a}_{1:t-1} \oplus \mathbf{a}'_{t:s-1}, \omega ), T-s+1, \mathbf{y}_{s-1}( \mathbf{a}_{1:t-1} \oplus \mathbf{a}'_{t:s-1}, \omega; \mathbf{y} ) )
		\\&= \varphi^z\left( \begin{array}{l}
			\Mscr_{s-t}( \mathbf{a}'_{t:s-1}, \Mscr_{t-1}( \mathbf{a}_{1:t-1}, \omega ) ), \\
			(T-t+1) - (s-t), \\
			\mathbf{y}_{s-t}( \mathbf{a}'_{t:s-1}, \Mscr_{t-1}( \mathbf{a}_{1:t-1}, \omega ); \mathbf{y}_{t-1}( \mathbf{a}_{1:t-1}, \omega; \mathbf{y} )
			\end{array} \right)
		\\&= z_{s-t+1}( \mathbf{a}'_{t:s}, \Mscr_{t-1}( \mathbf{a}_{1:t-1}, \omega ); T-t+1, \mathbf{y}_{t-1}( \mathbf{a}_{1:t-1}, \omega; \mathbf{y} ) ).
\end{align}
For rewards, similarly, we have $r_s( \mathbf{a}_{1:t-1} \oplus \mathbf{a}'_{t:s}, \omega ) = r_{s-t+1}( \mathbf{a}'_{t:s}, \Mscr_{t-1}( \mathbf{a}_{1:t-1}, \omega ) )$.
Therefore, the sub-inner problem \eqref{e-generalized-posterior-sampling-subinner} is reformulated as
\newedit{
\begin{equation}
	\max_{\mathbf{a}'_{t:T}} \left\{ \sum_{s=t}^T r_{s-t+1}( \mathbf{a}'_{t:s}, \Mscr_{t-1}( \mathbf{a}_{1:t-1}, \omega ) ) - z_{s-t+1}( \mathbf{a}'_{t:s}, \Mscr_{t-1}( \mathbf{a}_{1:t-1}, \omega ); T-t+1, \mathbf{y}_{t-1}( \mathbf{a}_{1:t-1}, \omega; \mathbf{y} ) ) \right\}.
\end{equation}
Given the fact that the shifted outcome $\Mscr_{t-1}(\mathbf{a}_{1:t-1}, \omega)$ and the sampled outcome $\tilde{\omega}$ are identically distributed with $\Iscr(\mathbf{y}_{t-1}(\mathbf{a}_{1:t-1}, \omega; \mathbf{y}))$ conditionally on $H_{t-1}(\mathbf{a}_{1:t-1}, \omega)$ (Remark \ref{rem-recursive-uncertainty}), this sub-inner problem follows the same distribution with
\begin{equation}
	\max_{\mathbf{a}'_{1:T-t+1}} \left\{ \sum_{s=1}^{T-t+1} r_s( \mathbf{a}'_{1:s}, \tilde{\omega} ) - z_s( \mathbf{a}'_{1:s}, \tilde{\omega}, T-t+1, \mathbf{y}_{t-1}( \mathbf{a}_{1:t-1}, \omega; \mathbf{y} ) ) \right\},
\end{equation}
which characterizes the IRS's action $A_t^\pi$.
Therefore, $a_t^{z,*}(\mathbf{A}_{1:t-1}^\pi, \omega)$ is identically distributed with $A_t^\pi$ conditioned on $\Fscr_{t-1}$.
}
\end{proof}

\newedit{
\begin{remark} \label{rem-suboptimality-rephrase}
Utilizing the recursive structure of IRS penalty functions, Lemma \ref{lem-suboptimality-rephrase} can be extended to describe the instantaneous suboptimality of the $t^\text{th}$ action.
Fix true outcome $\omega$ and past actions $\mathbf{a}_{1:t-1}$, and hide the dependency on them in notation for $Q_t^{z,\textup{in}}(\cdot)$, $a_t^{z,*}(\cdot)$, $n_t(\cdot)$, $\mu_a(\cdot)$ and $\hat{\mu}_{a,n}(\cdot)$.
\\
(1) When $z \equiv z^\textsc{TS}$,
\begin{equation}
	Q_t^{z,\textup{in}}(a_t^{z,*}) - Q_t^{z,\textup{in}}(a) 
		= \mu_{a_t^{z,*}} - \mu_a.
\end{equation}
(2) When $z \equiv z^\textsc{Irs.FH}$,
\begin{equation}
	Q_t^{z,\textup{in}}(a_t^{z,*}) - Q_t^{z,\textup{in}}(a) 
		= \hat{\mu}_{a_t^{z,*}, n_{t-1}(a_t^{z,*})+ T-t} - \hat{\mu}_{a, n_{t-1}(a)+T-t}.
\end{equation}
(3) When $z \equiv z^\textsc{V-Zero}$,
\begin{equation}
	Q_t^{z,\textup{in}}(a_t^{z,*}) - Q_t^{z,\textup{in}}(a) 
		\leq \max_{0 \leq n \leq T-t} \left\{  \hat{\mu}_{a_1^{z,*},n_{t-1}(a_1^{z,*}) + n}  \right\} - \hat{\mu}_{a, n_{t-1}(a)}.
\end{equation}
\end{remark}
}

\newedit{\subsubsection{Preliminary Lemmas on MAB with Natural Exponential Family Distributions}}

\newedit{
We first describe the notion of sub-Gaussian random variable as an effective tool for bounding its tail behavior.

\begin{definition}[Sub-Gaussian random variable]
	A random variable $X$ is $\sigma$-sub-Gaussian if
	\begin{equation}
		\E\left[ \exp\left( \lambda (X - \E X) \right) \right] \leq \exp\left( \frac{\sigma \lambda^2}{2} \right), \quad \forall \lambda \in \R,
	\end{equation}
	for some $\sigma > 0$.
\end{definition}
}

\newedit{
\begin{lemma} \label{lem-subgaussian-tail-bound}
	Given a random variable $X$, suppose that there exists $\sigma > 0$ such that
	\begin{equation}
		\PR\left[ X \geq \E X + z \sigma \right] \leq e^{-z^2/2}, \quad \forall z \geq 0.
	\end{equation}
	Then, the following holds:
	\begin{equation} \label{e-subgaussian-tail-bound}
		\E\left[ \left( X - (\E X + z \sigma) \right)^+ \right] \leq \frac{\sigma}{z} e^{-z^2/2}, \quad \forall z > 0.
	\end{equation}
\end{lemma}
\begin{corollary}
	If a random variable $X$ is $\sigma$-sub-Gaussian, it satisfies the condition of Lemma \ref{lem-subgaussian-tail-bound} and hence the inequality \eqref{e-subgaussian-tail-bound} holds.
\end{corollary}
}

\newedit{
\begin{proof}
	With $\mu \defeq \E X$, we have
	\begin{equation}
		\E\left[ \left( X - (\mu + z \sigma) \right)^+ \right] 
			= \int_{x = \mu + z \sigma}^\infty \PR\left[ X \geq x \right] dx
			= \int_{t=z}^\infty \PR\left[ X \geq \mu + t\sigma \right] \sigma dt
			\leq \sigma \int_{t=z}^\infty e^{-t^2/2} dt.
	\end{equation}
	Utilizing the tail bound established for the standard normal distribution, we can show that
	\begin{equation}
		\int_{t=z}^\infty \frac{1}{\sqrt{2\pi}} e^{-t^2/2} dt
			\leq \frac{1}{z} \frac{e^{-z^2/2}}{\sqrt{2\pi}}.
	\end{equation}
	By combining these two inequalities, we obtain the desired result.
	
	The corollary simply follows from Markov inequality: for any $z \geq 0$ and $\lambda \geq 0$, we have
	\begin{equation}
		\PR[ X \geq \mu + z \sigma ] 
			= \PR\left[ e^{\lambda(X-\mu)} \geq e^{\lambda z \sigma} \right]
			\leq \frac{ \E[ e^{\lambda(X-\mu)} ] }{ e^{\lambda x \sigma} }
			\leq \exp\left( \frac{\sigma^2 \lambda^2}{2} - \lambda z \sigma \right).
	\end{equation}
	By taking $\lambda = \frac{z}{\sigma}$, it follows that $\PR[ X \geq \mu + z \sigma ] \leq e^{-z^2/2}$.
\end{proof}
}

\newedit{
We now return to the context of MAB problems and show that the mean reward metrics are sub-Gaussian.
}

\newedit{
\begin{lemma}[Sub-Gaussianity of mean reward metrics] \label{lem-mean-reward-subgaussian}
	Consider the setting of Theorem \ref{thm-suboptimality}, i.e., the reward distribution of arm $a$ is described by an $L$-smooth log-partition function $A_a(\theta_a)$ and hyper-parameters $(\xi_a, \nu)$. Then, the conditional mean reward $\mu_a$ is $\sqrt{L/\nu}$-sub-Gaussian: i.e.,
	\begin{equation}
		\E_{(\xi_a, \nu)}\left[ \exp\left( \lambda ( \mu_a - \bar{\mu}_a ) \right) \right] \leq \exp\left( \frac{L \lambda^2}{2 \nu} \right), \quad \forall \lambda \in \R,
	\end{equation}
	where $\bar{\mu}_a = \E_{(\xi_a, \nu)}[ \mu_a ] = \frac{\xi_a}{\nu}$ is the prior predictive mean reward (i.e., the unconditional mean reward).
	Furthermore, the posterior predictive mean reward $\hat{\mu}_{a,n}$ is $\sqrt{ \frac{Ln}{\nu(\nu+n)} }$-sub-Gaussian: i.e.,
	\begin{equation} \label{e-predictivie-mean-subgaussian}
		\E_{(\xi_a, \nu)}\left[ \exp\left( \lambda ( \hat{\mu}_{a,n} - \bar{\mu}_a ) \right) \right] \leq \exp\left( \frac{\lambda^2}{2} \times \frac{Ln}{\nu(\nu+n)} \right), \quad \forall \lambda \in \R.
	\end{equation}
\end{lemma}
}

\newedit{
\begin{proof}
	We first prove that $\mu_a$ is $\sqrt{L/\nu}$-sub-Gaussian.
	Due to $L$-smoothness condition, $A_a(\theta_a)$ is finite valued for all $\theta_a \in \R$.
	For any $\lambda \in \R$, we have
	\begin{align}
		\E_{(\xi_a, \nu)}\left[ \exp\left( \lambda \mu_a \right) \right]
			&\stackrel{(i)}{=} \E_{(\xi_a, \nu)}\left[ \exp\left( \lambda A_a'(\theta_a) \right) \right]
			\\&= \int_{-\infty}^\infty \exp\left( \lambda A_a'(\theta_a) \right) \times f_a( \xi_a, \nu ) \exp\left( \xi_a \theta_a - \nu A_a(\theta_a) \right) d\theta_a
			\\&= \int_{-\infty}^\infty f_a( \xi_a, \nu ) \exp\left\{ \xi_a \theta_a - \nu A_a(\theta_a) + \lambda A_a'(\theta_a) \right\} d\theta_a 
			\\&= \int_{-\infty}^\infty f_a( \xi_a, \nu ) \exp\left\{ \xi_a \theta_a - \nu \left( A_a(\theta_a) - \lambda/\nu \cdot A_a'(\theta_a) \right) \right\} d\theta_a 
			\\&\stackrel{(ii)}{\leq} \int_{-\infty}^\infty f_a( \xi_a, \nu ) \exp\left\{ \xi_a \theta_a - \nu \left( A_a(\theta_a - \lambda/\nu) - \frac{L\lambda^2}{2 \nu^2} \right) \right\} d\theta_a 
			\\&= \exp\left( \frac{L \lambda^2}{2\nu} \right) \times \int_{-\infty}^\infty f_a( \xi_a, \nu ) \exp\left\{ \xi_a \theta_a - \nu A_a(\theta_a - \lambda/\nu) \right\} d\theta_a
			\\&= \exp\left( \frac{\xi_a \lambda}{\nu} + \frac{L \lambda^2}{2\nu} \right) \times \int_{-\infty}^\infty f_a( \xi_a, \nu ) \exp\left\{ \xi_a (\theta_a - \lambda/\nu) - \nu A_a(\theta_a - \lambda/\nu) \right\} d\theta_a
			\\&= \exp\left( \frac{\xi_a \lambda}{\nu} + \frac{L \lambda^2}{2\nu} \right) \times \int_{-\infty}^\infty f_a( \xi_a, \nu ) \exp\left\{ \xi_a \theta_a - \nu A_a(\theta_a) \right\} d\theta_a
			\\&= \exp\left( \frac{\xi_a \lambda}{\nu} + \frac{L \lambda^2}{2\nu} \right),
	\end{align}
	where we have utilized that (i) $\mu_a(\theta_a) = A_a'(\theta_a)$ and (ii) $A_a(\theta_a + \delta) \leq A_a(\theta_a) + \delta A_a'(\theta_a) + \frac{L}{2} \delta^2$.
	Since $\bar{\mu}_a = \xi_a/\nu$, we obtained the desired result.

	Next we focus on the posterior predictive mean reward $\hat{\mu}_{a,n}$.
	Recall that we have
	\begin{equation}
		\hat{\mu}_{a,n} = \frac{\xi_a + \sum_{i=1}^n R_{a,i} }{ \nu + n }.
	\end{equation}
	For any $\lambda \in \R$, we have
	\begin{align}
		\E_{(\xi_a,\nu)}\left[ \exp\left( \lambda \sum_{i=1}^n R_{a,i} \right) \right]
			&= \E_{(\xi_a,\nu)}\left[ \E\left\{ \left. \exp\left( \lambda \sum_{i=1}^n R_{a,i} \right) \right| \theta_a \right\} \right]
			\\&\stackrel{(i)}{=} \E_{(\xi_a,\nu)}\left[ \E\left\{ \left. \exp\left( \lambda R_{a,1} \right) \right| \theta_a \right\}^n \right] 
			\\&\stackrel{(ii)}{=} \E_{(\xi_a,\nu)}\left[ \exp\left\{ A_a\left(\theta_a + \lambda \right) - A_a(\theta_a)  \right\}^n \right] 
			\\&\stackrel{(iii)}{\leq} \E_{(\xi_a,\nu)}\left[ \exp\left\{ \lambda \cdot A_a'(\theta_a) + \frac{L \lambda^2}{2}  \right\}^n \right] 
			\\&\stackrel{(iv)}{=} \E_{(\xi_a,\nu)}\left[ \exp\left\{ n \lambda \cdot \mu_a + \frac{L n \lambda^2}{2}  \right\} \right] 
			\\&= \exp\left( n \lambda \bar{\mu}_a  + \frac{L n \lambda^2}{2} \right) \times \E_{(\xi_a,\nu)}\left[ \exp\left( n \lambda (\mu_a - \bar{\mu}_a ) \right) \right]
			\\&\stackrel{(v)}{\leq} \exp\left( n \lambda \bar{\mu}_a  + \frac{L n \lambda^2}{2} \right) \times \exp\left( \frac{L n^2 \lambda^2}{2 \nu} \right)
			\\&= \exp\left( n \lambda \bar{\mu}_a \right) \times \exp\left( \frac{\lambda^2}{2} \times \frac{ Ln( \nu + n) }{\nu} \right),
	\end{align}
	where we have utilized that (i) $R_{a,i}$'s are conditionally independent given $\theta_a$, (ii) the moment-generating function of $R_{a,1}$ is given by $\E[ \lambda R_a | \theta_a ] = \exp\left( A_a(\theta_a+\lambda) - A_a(\theta_a) \right)$, (iii) $A_a(\cdot)$ is $L$-smooth, (iv) $A_a'(\theta_a) = \mu_a(\theta_a)$, and (v) $\mu_a$ is $\sqrt{L/\nu}$-sub-Gaussian.
	Given that $\E\left[ \sum_{i=1}^n R_{a,i} \right] = n \bar{\mu}_a$, we just have shown that the sum $\sum_{i=1}^n R_{a,i}$ is $\sqrt{\frac{Ln (\nu+n)}{\nu}}$-sub-Gaussian.
	Therefore, its scaled version $\frac{\sum_{i=1}^n R_{a,i}}{\nu+n}$ is $\sqrt{ \frac{Ln}{\nu(\nu+n)} }$-sub-Gaussian, and so is $\hat{\mu}_{a,n}$.
\end{proof}
}

\newedit{
\begin{lemma} \label{lem-mean-reward-subgaussian-maximal}
	Consider the setting of Theorem \ref{thm-suboptimality}.
	With $\sigma_n \defeq \sqrt{ \frac{Ln}{\nu(\nu+n)} }$, the following holds:
	\begin{equation}
		\E\left[ \left( \max_{0 \leq i \leq n} \hat{\mu}_{a,i} - (\bar{\mu}_a + z \sigma_n) \right)^+ \right]
			\leq \frac{\sigma_n}{z} e^{-z^2/2}, \quad \forall z > 0.
	\end{equation}
\end{lemma}
}

\newedit{
\begin{proof}
	Recall that the posterior predictive mean reward process $\{ \hat{\mu}_{a,n} \}_{n \geq 0}$ is the martingale with respect to the filtration generated by reward realizations $R_{a,1}, R_{a,2}, \ldots$ and whose mean is $\bar{\mu}_a$.
	Therefore, $\{ \exp( \lambda \hat{\mu}_{a,n} ) \}_{n \geq 0}$ is a positive submartingale for any given $\lambda \geq 0$.
	By Doob's maximal inequality, we deduce that
	\begin{align}
		\PR\left[ \max_{0 \leq i \leq n} \hat{\mu}_{a,i} \geq \bar{\mu}_a + z \sigma_n \right]
			= \PR\left[ \max_{0 \leq i \leq n} \exp\left( \lambda (\hat{\mu}_{a,i} - \bar{\mu}_a) \right) \geq \exp\left( \lambda z \sigma_n \right) \right]
			\leq \frac{ \E\left[ \exp\left( \lambda (\hat{\mu}_{a,n} - \bar{\mu}_a) \right) \right]  }{ \exp\left( \lambda z \sigma_n \right)  }.
	\end{align}
	By Lemma \ref{lem-mean-reward-subgaussian}, since $\hat{\mu}_{a,n}$ is $\sigma_n$-sub-Gaussian, we further have
	\begin{equation}
		\frac{ \E\left[ \exp\left( \lambda (\hat{\mu}_{a,n} - \bar{\mu}_a) \right) \right]  }{ \exp\left( \lambda z \sigma_n \right)  }
			\leq \frac{ \exp\left( \frac{\lambda^2 \sigma_n^2}{2} \right) }{ \exp\left( \lambda z \sigma_n \right)  }
			= \exp\left( \frac{\lambda^2 \sigma_n^2}{2} - \lambda z \sigma_n \right).
	\end{equation}
	Therefore, by taking $\lambda \defeq \frac{z}{\sigma_n}$, we have $\PR\left[ \max_{0 \leq i \leq n} \hat{\mu}_{a,i} \geq \bar{\mu}_a + z \sigma_n \right] \leq e^{-z^2/2}$, and by invoking Lemma \ref{lem-subgaussian-tail-bound}, we obtain the claim.
\end{proof}
}

\newedit{\subsubsection{Proof of Theorem \ref{thm-suboptimality}}}

\newedit{
\begin{lemma} \label{lem-suboptimality-proof-sketch}
	Consider one of the IRS penalty functions $z^\textsc{TS}$, $z^\textsc{Irs.FH}$, and $z^\textsc{Irs.V-Zero}$.
	As discussed in Remark \ref{rem-suboptimality-rephrase}, we have
	\begin{equation}
		Q_t^{z,\textup{in}}(\mathbf{a}_{1:t-1}, a_t^{z,*}, \omega) - Q_t^{z,\textup{in}}(\mathbf{a}_{1:t-1}, a, \omega) \leq \mu_t^U( \mathbf{a}_{1:t-1}, a_t^{z,*}, \omega) - \mu_t^L( \mathbf{a}_{1:t-1}, a, \omega),
	\end{equation}
	for some $\mu_t^U( \mathbf{a}_{1:t-1}, a_1^{z,*}, \omega)$ and $\mu_t^L( \mathbf{a}_{1:t-1}, a, \omega)$, where $a_t^{z,*}$ abbreviates $a_t^{z,*}( \mathbf{a}_{1:t-1}, \omega )$.
	Suppose that there exists a sequence of confidence intervals $\left\{ (L_t(a), U_t(a)) \right\}_{a \in \Ascr, t \in \N}$ such that $(L_t(\cdot), U_t(\cdot))$ is $\sigma(H_{t-1})$-measurable, and
	\begin{align}
		\E_\mathbf{y}\left[ \left. \left( \mu_t^U( \mathbf{a}_{1:t-1}, a, \omega) - U_t(a) \right)^+ \right| H_{t-1}(\mathbf{a}_{1:t-1}, \omega) \right] &\leq \frac{C_U}{T},
			\quad \forall a, \forall t
		 \\
		\E_\mathbf{y}\left[ \left. \left( L_t(a) - \mu_t^L( \mathbf{a}_{1:t-1}, a, \omega) \right)^+ \right| H_{t-1}(\mathbf{a}_{1:t-1}, \omega) \right] &\leq \frac{C_L}{T},
			\quad \forall a, \forall t
	\end{align}
	for some constants $C_U > 0$ and $C_L > 0$.
	Then, for IRS policy $\pi$ induced by the chosen penalty function, we have
	\begin{equation}
		W^z( T, \mathbf{y} ) - V( \pi, T, \mathbf{y} ) \leq C_U + C_L + \sum_{t=1}^T \E\left[ U_t(A_t^\pi) - L_t(A_t^\pi) \right].
	\end{equation}
\end{lemma}
}

\newedit{
\begin{proof}
	Let $A_t^* \defeq a_t^{z,*}( \mathbf{A}_{1:t-1}^\pi, \omega )$, and let $\E_t[ \cdot ]$ denote $\E[ \cdot | \Fscr_{t-1} ]$.
	By Proposition \ref{prop-generalized-posterior-sampling} we have
	\begin{equation}
		\E_t[ U_t(A_t^\pi) ] 
			= \sum_{a \in \Ascr} U_t(a) \cdot \PR_t[ A_t^\pi = a ] 
			= \sum_{a \in \Ascr} L_t(a) \cdot \PR_t[ A_t^* = a ] 
			= \E_t[ U_t(A_t^*) ].
	\end{equation}
	Therefore, we have
	\begin{align}
		&\E_t\left[ \mu_t^U( A_t^* ) - \mu_t^L(A_t^\pi)  \right]
			\\&= \E_t\left[ \mu_t^U( A_t^* ) - \mu_t^L(A_t^\pi) \right] + \E_t\left[ U_t(A_t^\pi) - U_t(A_t^*) \right] + \E_t\left[ L_t(A_t^\pi) - L_t( A_t^\pi ) \right]
			\\&= \E_t\left[ \mu_t^U( A_t^* ) - U_t(A_t^*)  \right] + \E_t\left[  L_t(A_t^\pi) - \mu_t^L(A_t^\pi) \right] + \E_t\left[ U_t(A_t^\pi) - L_t( A_t^\pi ) \right] 
			\\&\leq \E_t\left[ \left( \mu_t^U( A_t^* ) - U_t(A_t^*) \right)^+  \right] + \E_t\left[  \left( L_t(A_t^\pi) - \mu_t^L(A_t^\pi) \right)^+ \right] + \E_t\left[ U_t(A_t^\pi) - L_t( A_t^\pi ) \right].
	\end{align}
	We further observe that
	\begin{equation}
		\E_t\left[ \left( \mu_t^U( A_t^* ) - U_t(A_t^*) \right)^+  \right] 
			= \sum_{a \in \Ascr} \E_t\left[ \left( \mu_t^U( a ) - U_t(a) \right)^+ \right] \PR_t[ A_t^* = a ]
			\leq \frac{C_U}{T} \sum_{a \in \Ascr} \PR_t[ A_t^* = a ]
			= \frac{C_U}{T}.
	\end{equation}
	Similarly, we have $\E_t\left[  \left( L_t(A_t^\pi) - \mu_t^L(A_t^\pi) \right)^+ \right] \leq \frac{C_L}{T}$.
	Combining all these results, we have
	\begin{align}
		W(T, \mathbf{y}) - V(\pi, T, \mathbf{y})
			&\stackrel{\text{Prop \ref{prop-suboptimality-decomposition}}}{=} \E\left[ \sum_{t=1}^T Q_t^{z,\textup{in}}( A_t^*) - Q_t^{z,\textup{in}}( A_t^\pi)  \right]
			\\&\leq \E\left[ \sum_{t=1}^T \mu_t^U( A_t^* ) - \mu_t^L(A_t^\pi) \right]
			\\&= \E\left[ \sum_{t=1}^T \E_t\left[ \mu_t^U( A_t^* ) - \mu_t^L(A_t^\pi) \right] \right]
			\\&\leq \E\left[ \sum_{t=1}^T \left( \frac{C_U}{T} + \frac{C_L}{T} + \E_t\left[ U_t(A_t^\pi) - L_t(A_t^\pi) \right] \right) \right]
			\\&\leq C_U + C_L + \sum_{t=1}^T \E\left[ U_t(A_t^\pi) - L_t(A_t^\pi) \right].
	\end{align}
\end{proof}
}

\newedit{
We are now ready to prove Theorem \ref{thm-suboptimality}.
To facilitate simpler notation, we define
\begin{equation}
	N_{t-1}^\pi(a) \defeq n_{t-1}( \mathbf{A}_{1:t-1}^\pi, a )
	, \quad
	\hat{\mu}_t^\pi( a, n ) \defeq \hat{\mu}_{a, N_{t-1}^\pi(a)+n},
\end{equation}
which represent, respectively, the number of pulls on arm $a$ prior to time $t$ under policy $\pi$, and the posterior predictive mean reward process given the past actions $\mathbf{A}_{1:t-1}^\pi$.
Observe that for each $a \in \Ascr$, the process $\{ \hat{\mu}_t^\pi( a , n ) \}_{n \geq 0}$ is a martingale, as discussed Remark \ref{rem:mean-reward-martingale}.

Further define
\begin{equation}
	\Delta_t^\pi(a, n) \defeq \sqrt{ \frac{L}{\nu+N_{t-1}^\pi(a)} \times \frac{n}{ \nu + N_{t-1}^\pi(a)+n} },
\end{equation}
which is measurable with respect to $\Fscr_{t-1}$.
In the context of Theorem \ref{thm-suboptimality}, the prior/posterior of arm $a$ at time $t$ is described by the hyperparameters $\left( \xi_a + \sum_{i=1}^{N_{t-1}^\pi(a)} R_{a,i}, \nu + N_{t-1}^\pi(a) \right)$ that converges to $\mu_a$, and therefore Lemma \ref{lem-mean-reward-subgaussian} implies that $\hat{\mu}_t^\pi(a, n)$ is $\Delta_t(a,n)$-sub-Gaussian \emph{conditioned} on $\Fscr_{t-1}$.
}

\newedit{
\noindent \textbf{(1) Suboptimality analysis for \textnormal{\textsc{TS}} \eqref{e-suboptimality-ts}.} 
As discussed in Remark \ref{rem-suboptimality-rephrase}, for \textsc{TS}, we have
\begin{equation}
	Q_t^{z,\textup{in}}(a_t^{z,*}) - Q_t^{z,\textup{in}}(a) 
		= \mu_{a_t^{z,*}} - \mu_a
		= \hat{\mu}_t^\pi( a_t^{z,*}, \infty ) - \hat{\mu}_t^\pi( a, \infty ).
\end{equation}
We construct the confidence intervals as follows:
\begin{equation}
	U_t(a) \defeq \hat{\mu}_t^\pi(a, 0) + \sqrt{ 2 \log T } \times \Delta_t^\pi(a, \infty)
	, \quad
	L_t(a) \defeq \hat{\mu}_t^\pi(a, 0) + \sqrt{ 2 \log T } \times \Delta_t^\pi(a, \infty),
\end{equation}
where $\Delta_t^\pi(a, \infty) = \lim_{n \rightarrow \infty} \Delta_t^\pi(a, n) = \sqrt{ \frac{L}{\nu + N_{t-1}^\pi(a)} }$ so that $\mu_a$ is $\Delta_t^\pi(a, \infty)$-sub-Gaussian conditioned on $\Fscr_{t-1}$.
By Lemma \ref{lem-subgaussian-tail-bound}, we have
\begin{equation}
	\E\left[ \left. \left( \mu_a - U_t(a) \right)^+ \right| \Fscr_{t-1} \right] 
		\leq \frac{\Delta_t^\pi(a, \infty)}{\sqrt{2 \log T}} e^{-\frac{2 \log T}{2}}
		\leq \frac{ \sqrt{ L/\nu } }{ T },
\end{equation}
where we use the fact that $2 \log T \geq 1$ for any $T \geq 2$.
Symmetrically, we have $\E\left[ \left. \left( L_t(a) - \mu_a \right)^+ \right| \Fscr_{t-1} \right] \leq \frac{ \sqrt{ L/\nu } }{ T }$.
By Lemma \ref{lem-suboptimality-proof-sketch}, we have
	\begin{align}
		W^\textsc{TS}( T, \mathbf{y} ) - V( \pi^\textsc{TS}, T, \mathbf{y} ) 
			&\leq 2 \sqrt{ L/\nu } + \sum_{t=1}^T \E\left[ U_t(A_t^\pi) - L_t(A_t^\pi) \right]
			\\&= 2 \sqrt{ L/\nu } + 2 \sqrt{2 \log T} \sum_{t=1}^T \Delta_t^\pi(A_t^\pi, \infty).
	\end{align}
	
Further observe that
\begin{align}
	&\sum_{t=1}^T \Delta_t^\pi( A_t^\pi, \infty) 
		= \sum_{t=1}^T \sqrt{ \frac{L}{ \nu + N_{t-1}^\pi(A_t^\pi)} }
		= \sum_{a \in \Ascr} \sum_{n=0}^{N_T^\pi(a)-1} \frac{\sqrt{L}}{\sqrt{\nu+n}}
		= \sum_{a \in \Ascr} \left( \frac{\sqrt{L}}{\sqrt{\nu}} + \sum_{n=1}^{N_T^\pi(a)-1} \frac{\sqrt{L}}{\sqrt{\nu+n}} \right)
		\\&\leq \sum_{a \in \Ascr} \left( \frac{\sqrt{L}}{\sqrt{\nu}} + \sum_{n=1}^{N_T^\pi(a)-1} \frac{\sqrt{L}}{\sqrt{n}} \right)
		\leq \sum_{a \in \Ascr} \left( \frac{\sqrt{L}}{\sqrt{\nu}} + \int_{x=0}^{N_T^\pi(a)} \frac{\sqrt{L}}{\sqrt{x}}dx \right)
		= \frac{K \sqrt{L}}{\sqrt{\nu}} + 2 \sqrt{L} \sum_{a \in \Ascr} \sqrt{N_T^\pi(a)}.
\end{align}
By utilizing Cauchy--Schwartz inequality, we deduce that
\begin{equation}
	\sum_{a \in \Ascr} \sqrt{N_T^\pi(a)} \leq \sqrt{ K \sum_{a \in \Ascr} N_T(a) } = \sqrt{KT}.
\end{equation}
Combining all these results, we conclude that
\begin{equation}
	W^\textsc{TS}( T, \mathbf{y} ) - V( \pi^\textsc{TS}, T, \mathbf{y} )
		\leq 2 \sqrt{L} \left[  \frac{1}{ \sqrt{\nu} } + \sqrt{2 \log T} \left( \frac{K}{\sqrt{\nu}} + 2 \sqrt{KT} \right) \right].
\end{equation}
}

\newedit{
\noindent \textbf{(2) Suboptimality analysis for \textnormal{\textsc{Irs.FH}} \eqref{e-suboptimality-irs-fh}.} 
As discussed in Remark \ref{rem-suboptimality-rephrase}, for \textsc{Irs.FH}, we have
\begin{equation}
	Q_t^{z,\textup{in}}(a_t^{z,*}) - Q_t^{z,\textup{in}}(a) 
		= \hat{\mu}_t^\pi( a_t^{z,*}, T-t ) - \hat{\mu}_t^\pi( a, T-t ).
\end{equation}
We construct the confidence intervals as follows:
\begin{equation}
	U_t(a) \defeq \hat{\mu}_t^\pi(a, 0) + \sqrt{ 2 \log T } \times \Delta_t^\pi(a, T-t)
	, \quad
	L_t(a) \defeq \hat{\mu}_t^\pi(a, 0) + \sqrt{ 2 \log T } \times \Delta_t^\pi(a, T-t).
\end{equation}
Given that $\hat{\mu}_t^\pi(a, T-t)$ is $\Delta_t^\pi(a, T-t)$-sub-Gaussian conditioned on $\Fscr_{t-1}$, by Lemma \ref{lem-subgaussian-tail-bound}, we have
\begin{equation}
	\E\left[ \left. \left( \hat{\mu}_t^\pi( a, T-t ) - U_t(a) \right)^+ \right| \Fscr_{t-1} \right] 
		\leq \frac{\Delta_t^\pi(a, T-t)}{\sqrt{2 \log T}} e^{-\frac{2 \log T}{2}}
		\leq \frac{\Delta_t^\pi(a, \infty)}{\sqrt{2 \log T}} e^{-\frac{2 \log T}{2}}
		\leq \frac{ \sqrt{ L/\nu } }{ T }.
\end{equation}
Symmetrically, we have $\E\left[ \left. \left( L_t(a) - \hat{\mu}_t^\pi( a, T-t ) \right)^+ \right| \Fscr_{t-1} \right] \leq \frac{ \sqrt{ L/\nu } }{ T }$.

On the other hand, since $N_{t-1}(a) \leq t$ in any case, we have
\begin{align}
	\frac{1}{\nu + N_{t-1}^\pi(a)} \times \frac{T-t}{\nu + N_{t-1}^\pi(a) + T - t }
		&= \frac{1}{\nu + N_{t-1}^\pi(a)} \times \left( 1 - \frac{ \nu + N_{t-1}^\pi(a) }{\nu + N_{t-1}^\pi(a) + T - t } \right)
		\\&= \frac{1}{\nu + N_{t-1}^\pi(a)} - \frac{ 1 }{\nu + N_{t-1}^\pi(a) + T - t }
		\\&\leq \frac{1}{\nu + N_{t-1}^\pi(a)} - \frac{ 1 }{\nu + T }.
\end{align}
Consequently,
\begin{align}
	\sum_{t=1}^T \sqrt{ \frac{1}{\nu + N_{t-1}^\pi(a)} - \frac{ 1 }{\nu + T } }
		&= \sum_{a \in \Ascr} \sum_{n=0}^{N_T^\pi(a)-1} \sqrt{ \frac{1}{\nu + n} - \frac{ 1 }{\nu + T } }
		\\&= \sum_{a \in \Ascr} \left( \sqrt{ \frac{1}{\nu} - \frac{1}{\nu+T} } + \sum_{n=1}^{N_T^\pi(a)-1} \sqrt{ \frac{1}{\nu + n} - \frac{ 1 }{\nu + T } } \right)
		\\&\leq \frac{K}{\sqrt{\nu}} + \sum_{a \in \Ascr} \sum_{n=1}^{N_T^\pi(a)-1} \sqrt{ \frac{1}{n} - \frac{ 1 }{T } }
		\\&\stackrel{(i)}{\leq} \frac{K}{\sqrt{\nu}} + \sum_{a \in \Ascr} \sum_{n=1}^{N_T^\pi(a)-1} \left(  \frac{1}{\sqrt{n}} - \frac{\sqrt{n}}{2T} \right)
		\\&\leq \frac{K}{\sqrt{\nu}} + \sum_{a \in \Ascr} \int_0^{N_T^\pi(a)}\left(  \frac{1}{\sqrt{x}} - \frac{\sqrt{x}}{2T} \right)dx
		\\&= \frac{K}{\sqrt{\nu}} + \sum_{a \in \Ascr} \left(  2 \sqrt{N_T^\pi(a)} - \frac{ \left( N_T^\pi(a) \right)^{3/2} }{2T} \right)
		\\&\stackrel{(ii)}{\leq} \frac{K}{\sqrt{\nu}} + 2\sqrt{KT} - \frac{1}{3} \sqrt{T/K},
\end{align}
where we have utilized that (i) the concavity of $\sqrt{\cdot}$, and (ii) $\min\{ \sum_{a=1}^K n_a^{3/2}; \sum_{a=1}^K n_a = T \} = \sum_{a=1}^K ( T / K)^{3/2} = \sqrt{T^3/K}$.

Combining all these results, we conclude that
\begin{align}
	W^\textsc{Irs.FH}( T, \mathbf{y} ) - V( \pi^\textsc{Irs.FH}, T, \mathbf{y} )
		&\leq 2 \sqrt{ \frac{L}{\nu} } + 2 \sqrt{2 \log T} \sum_{t=1}^T \Delta_t^\pi(A_t^\pi, T-t) 
		\\&\leq 2 \sqrt{L} \left[  \frac{1}{ \sqrt{\nu} } + \sqrt{2 \log T} \left( \frac{K}{\sqrt{\nu}} + 2 \sqrt{KT} - \frac{1}{3} \sqrt{T/K} \right) \right].
\end{align}
}

\newedit{
\noindent \textbf{(3) Suboptimality analysis for \textnormal{\textsc{Irs.V-Zero}} \eqref{e-suboptimality-irs-vzero}.} 
As discussed in Remark \ref{rem-suboptimality-rephrase}, for \textsc{Irs.FH}, we have
\begin{equation}
	Q_t^{z,\textup{in}}(a_t^{z,*}) - Q_t^{z,\textup{in}}(a) 
		= \max_{0 \leq n \leq T-t} \left\{ \hat{\mu}_t^\pi( a_t^{z,*}, n ) \right\} - \hat{\mu}_t^\pi( a, 0 ).
\end{equation}
We construct the confidence intervals as follows:
\begin{equation}
	U_t(a) \defeq \hat{\mu}_t^\pi(a, 0) + \sqrt{ 2 \log T } \times \Delta_t^\pi(a, T-t)
	, \quad
	L_t(a) \defeq \hat{\mu}_t^\pi(a, 0).
\end{equation}
By Lemma \ref{lem-mean-reward-subgaussian-maximal}, we have
\begin{equation}
	\E\left[ \left. \left( \max_{0 \leq n \leq T-t} \hat{\mu}_t^\pi( a, n ) - U_t(a) \right)^+ \right| \Fscr_{t-1} \right]
		\leq \frac{\Delta_t^\pi(a, T-t)}{\sqrt{2 \log T}} e^{-\frac{2 \log T}{2}}
		\leq \frac{ \sqrt{ L/\nu } }{ T },
\end{equation}
where
\begin{equation}
	\E\left[ \left. \hat{\mu}_t^\pi( a, 0 ) - L_t(a) \right| \Fscr_{t-1} \right] = 0.
\end{equation}
The rest of the proof is almost identical to the case of \textsc{Irs.FH}:
\begin{align}
	W^\textsc{Irs.V-Zero}( T, \mathbf{y} ) - V( \pi^\textsc{Irs.V-Zero}, T, \mathbf{y} )
		&\leq \sqrt{ \frac{L}{ \nu } } + \sum_{t=1}^T \E\left[ U_t(A_t^\pi) - L_t(A_t^\pi) \right].
		\\&= \sqrt{ \frac{L}{ \nu } } + \sqrt{2 \log T} \sum_{t=1}^T \Delta_t^\pi(A_t^\pi, T-t) 
		\\&\leq \sqrt{L} \left[  \frac{1}{ \sqrt{ \nu } } + \sqrt{2 \log T} \left( \frac{K}{\sqrt{\nu}} + 2 \sqrt{KT} - \frac{1}{3} \sqrt{T/K} \right) \right].
\end{align}
}

\end{document}